\documentclass[runningheads]{llncs}

\usepackage{eccv}

\usepackage{eccvabbrv}

\usepackage{graphicx}
\usepackage{booktabs}

\usepackage[accsupp]{axessibility}  

\usepackage{hyperref}

\usepackage{orcidlink}

\usepackage{amsmath,amsfonts,bm}

\def\eqref#1{equation~\ref{#1}}

\def\1{\bm{1}}
\newcommand{\train}{\mathcal{D}}

\DeclareMathAlphabet{\mathsfit}{\encodingdefault}{\sfdefault}{m}{sl}
\SetMathAlphabet{\mathsfit}{bold}{\encodingdefault}{\sfdefault}{bx}{n}

\definecolor{LightCyan}{rgb}{0.78,0.94,1}
\definecolor{Gray}{gray}{0.85}
\definecolor{lightgreen}{rgb}{0.286,0.639,0.345}
\definecolor{verylightgray}{gray}{0.95}
\definecolor{lightred}{rgb}{1.0,0.7,0.7}
\definecolor{darkred}{rgb}{0.698, 0.133, 0.133}

\usepackage{wrapfig}
\usepackage{tcolorbox}
\usepackage{pifont}
\usepackage{color, colortbl}
\usepackage{subcaption}

\usepackage{multirow}
\usepackage{makecell}
\usepackage{lipsum}
\usepackage[utf8]{inputenc}
\usepackage{xcolor}
\usepackage{ulem}

\usepackage{graphicx}  
\usepackage{caption}   
\usepackage{float}     
\usepackage{array}     

\usepackage{mathrsfs}
\usepackage{mathbbol}
\usepackage{bbm}
\usepackage{algorithm}
\usepackage{algorithmic}

\definecolor{mybd}{HTML}{b22222}
\definecolor{mybg}{HTML}{cccccc}
\newtcbox{\bottomhl}{
    on line,            
    boxsep=0pt,         
    left=0pt,right=0pt, 
    top=1pt,bottom=1pt, 
    colframe=mybd,        
    colback=white,      
    boxrule=0.8pt,
    arc=0pt             
}
\usepackage{dsfont}
\usepackage{placeins}

\newtheorem{mythr}{\bf{Theorem}}
\newtheorem{mylemma}{\bf{Lemma}}
\newtheorem{myremark}{\bf{Remark}}
\newtheorem{mydef}{\bf{Definition}}

\begin{document}

\title{On the Plasticity Collapse\\ in Continual Machine Unlearning} 

\titlerunning{Plasticity Collapse in Continual Machine Unlearning}

\author{Yingdan Shi\inst{1} \and
Xiang Xu\inst{2} \and
Kaize Ding\inst{3} \and
Alfred O. Hero\inst{4} \and
Ren Wang\inst{1}\thanks{Corresponding author: rwang74@illinoistech.edu}}

\authorrunning{Y. Shi et al.}

\institute{Illinois Institute of Technology, Chicago IL 60616, USA\and
Amazon, New York NY 10001, USA\and
Northwestern University, Evanston IL 60208, USA\and
University of Michigan, Ann Arbor MI 48109, USA}

\maketitle

\begin{abstract}
Machine unlearning enables deep neural networks to selectively remove the influence of specific data in response to privacy and regulatory requirements. While prior work largely studies single-shot unlearning, real-world systems must accommodate continual unlearning, where multiple unlearning requests occur sequentially over time. In this work, we identify a fundamental limitation of this setting: plasticity collapse, a progressive breakdown in a model's ability to effectively forget. Through theoretical analysis of continual unlearning dynamics, we show that continual unlearning operations accumulate geometric constraints in parameter space, leading to saturated subspaces that restrict future updates. This structural effect induces two distinct failure modes: (1) Forward failure -- diminishing forgetting quality for subsequent tasks, and (2) Backward failure -- spontaneous re-memorization of previously forgotten information. Extensive experiments across multiple architectures, datasets, and methods in image classification confirm that plasticity collapse is not an artifact of specific implementations, but a pervasive phenomenon inherent to continual unlearning. Our findings reveal a critical barrier to the long-term reliability of machine unlearning systems and motivate the development of plasticity-preserving unlearning algorithms. Our code is available at \url{https://github.com/TIML-Group/Continual-Machine-Unlearning-Plasticity-Collapse}.

  \keywords{Machine Unlearning \and Plasticity \and Privacy}

\end{abstract}

\section{Introduction}
\label{sec:introduction}

The rapid proliferation of machine learning across sensitive domains has intensified scrutiny over data governance and user privacy. Recent legislative frameworks, including the European Union's General Data Protection Regulation (GDPR) and the California Consumer Privacy Act (CCPA), establish the ``right to be forgotten'', a legal mandate requiring organizations to delete user data upon request. Beyond regulatory compliance, there is an escalating need to purge harmful or biased information from models to mitigate potential misuses, such as the generation of toxic content or the propagation of social biases. For machine learning models, compliance extends beyond removing records from training sets. It requires eliminating information that has been encoded into model parameters during training, a process known as machine unlearning.

Significant progress has been made in developing efficient unlearning methods for single unlearning requests. However, practical deployments face a more complex reality: models must accommodate multiple sequential unlearning\footnote{In this work, the terms \textit{continual unlearning} and \textit{sequential unlearning} are used interchangeably.} requests over their operational lifetime. A content moderation system, for instance, may need to progressively forget different categories of prohibited content as policies evolve. A recommendation system must continuously honor unlearning requests without complete retraining. These scenarios demand that unlearning methods remain effective not just once, but repeatedly and reliably over time.

\begin{figure}
    \vspace{-6mm}
    \centering
    \includegraphics[width=0.9\linewidth]{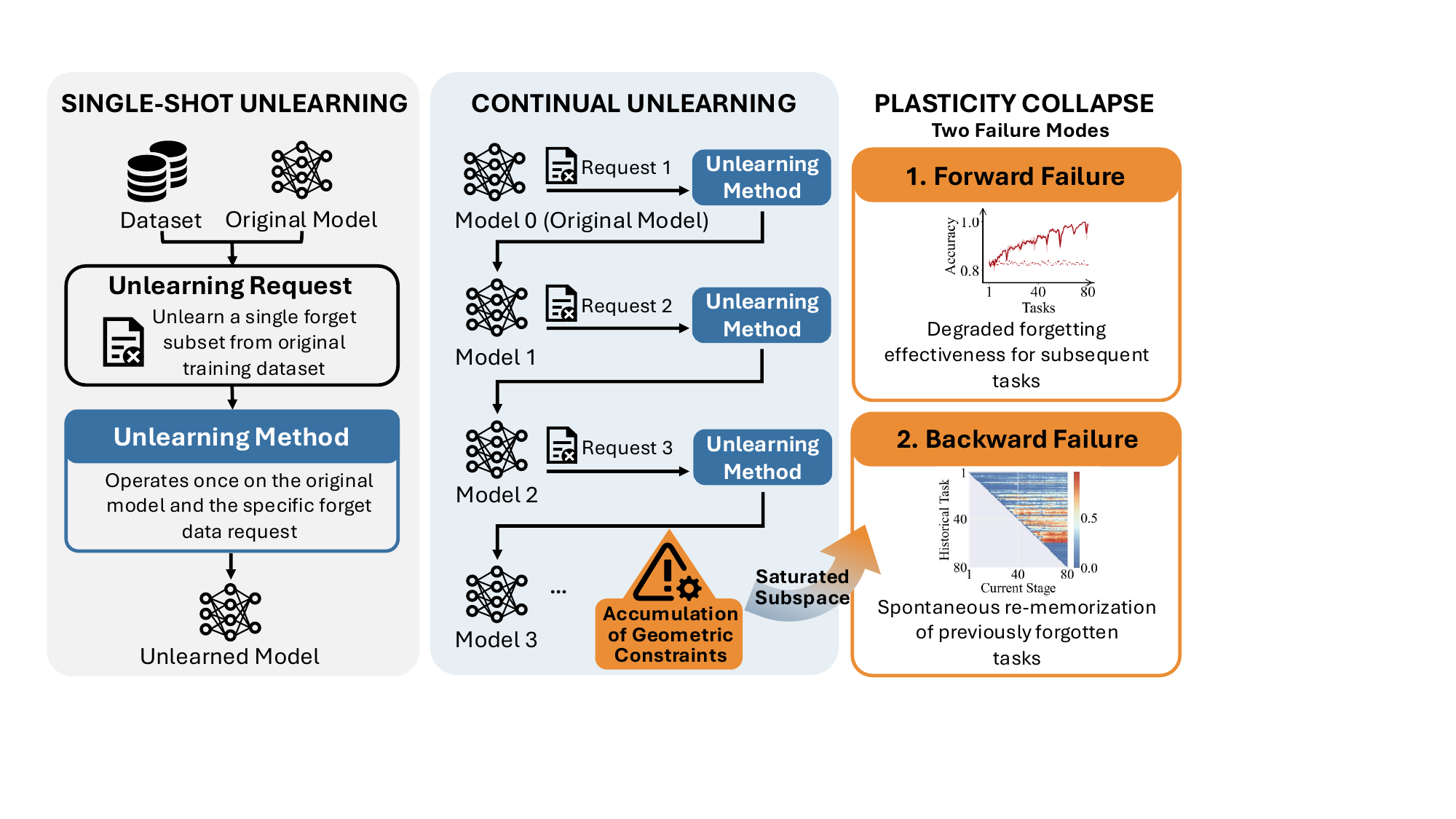}
    \vspace{-2mm}
    \caption{Comparison between single-shot unlearning and continual unlearning. Unlike the single-shot setting, continual unlearning suffers from the accumulation of geometric constraints, leading to a saturated subspace that induces plasticity collapse. This collapse is characterized by two distinct phenomena: forward failure, which represents a progressive decline in forgetting quality, and backward failure, referring to the spontaneous re-memorization of previously forgotten tasks.}
    \label{fig:overview}
    \vspace{-6mm}
\end{figure}
The overview of this work is shown in Fig.~\ref{fig:overview}. In this work, we identify a fundamental limitation of existing unlearning methods in continual settings: models progressively lose their ability to effectively forget, a phenomenon we term \textbf{\textit{plasticity collapse}}. Through extensive experiments, we identify two distinct failure modes induced by plasticity collapse: (1) \textbf{Forward failure}, in which forgetting quality deteriorates progressively for subsequent tasks, and (2) \textbf{Backward failure}, in which previously forgotten tasks spontaneously re-emerge. Through theoretical analysis of continual unlearning dynamics, we show that successive unlearning operations accumulate geometric constraints in parameter space, leading to saturated subspaces that restrict future parameter updates. This structural degradation manifests in both failure modes. These findings raise serious concerns about the long-term reliability of machine unlearning in the continual unlearning setting. If forgetting quality degrades over sequential requests or earlier forgetting is silently reversed, the privacy guarantees that unlearning methods promise become difficult to sustain. Our contributions are as follows.
\begin{itemize}
    \item We are the first to systematically characterize plasticity collapse in the continual unlearning. Through extensive experiments spanning diverse architectures, datasets, and unlearning methods in image classification, we demonstrate that this is not an artifact of specific implementations, but a pervasive phenomenon inherent to the continual unlearning setting.

    \item We provide a theoretical analysis of the underlying mechanism. Our analysis shows that the accumulation of parameter constraints from successive unlearning operations creates increasingly restrictive subspaces, leading to degraded optimization and persistent residual information.

    \item We design targeted experiments to isolate and validate the mechanisms underlying plasticity collapse, bridging our theoretical predictions with observed empirical behavior.
\end{itemize}

While our analysis focuses on image classification tasks, the identified mechanisms of subspace saturation and geometric constraints stem from the fundamental optimization dynamics of neural networks. Consequently, this work lays a critical theoretical and diagnostic foundation for understanding plasticity collapse in broader contexts, such as generative modeling and large language models (LLMs), where sequential unlearning is equally pivotal yet more complex.

\section{Related Work}
\label{sec:related_work}

\subsection{Machine Unlearning}

Machine unlearning seeks to remove the influence of specific training data from a learned model without requiring complete retraining. Early approaches framed unlearning as the inverse of learning, applying gradient ascent on the data to be forgotten~\cite{cao2015towards,ginart2019making}. However, naive gradient ascent often causes catastrophic degradation of model utility on retained data.

Exact unlearning methods offer certified guarantees by partitioning training data and maintaining sub-models that can be efficiently retrained upon deletion~\cite{guo2019certified,neel2021descent,bourtoule2021machine}. SISA training~\cite{bourtoule2021machine}, for example, divides the dataset into shards and trains separate models on each, allowing selective retraining when data is removed. While these approaches provide strong guarantees, they incur substantial storage and computational overhead, limiting their scalability, particularly in continual unlearning settings.

Approximate unlearning methods~\cite{salun2024,Amnesiac2021,neggrad,ga2022,finetune2021,shi2025mcu,shi2025redefining
} trade formal guarantees for efficiency, offering practical alternatives through diverse techniques including fine-tuning, parameter perturbation, and selective gradient manipulation. These methods are generally more applicable to real-world deployment but have been studied almost exclusively in single-shot unlearning settings.

\subsection{Continual Unlearning}

A line of work addresses scenarios where a model alternates between continual learning and selective forgetting~\cite{chatterjee2024unified,tang2025acu,adhikari2025unlearning,huang2025unified}. In this mixed setting, the model continues to receive new training data alongside unlearning requests, which helps preserve model utility throughout the process.

More closely related to our work, \cite{feng2025fg} studied incremental unlearning, where the goal is to achieve thorough forgetting across sequential unlearning tasks on a fixed model, without interleaving new learning. That is, the model must completely remove targeted knowledge while retaining the rest, purely through successive forgetting operations. This purely sequential unlearning setting is the focus of our work. We go further by identifying a fundamental limitation of this setting, namely plasticity collapse, and providing both theoretical and empirical characterization of this failure mode.

\subsection{Neural Network Plasticity}

Loss of plasticity has been identified as a fundamental challenge in deep learning, referring to the progressive decline in a network's capacity to learn new information over time~\cite{kumar2023maintaining,dohare2024loss,lyle2022understanding}. \cite{lyle2022understanding} showed that neural networks in deep reinforcement learning lose learning capacity even when the underlying task remains stationary. Proposed mechanisms include dormant neurons~\cite{sokar2023dormant}, implicit rank collapse~\cite{kumar2020implicit}, and gradient interference~\cite{goyal2022inductive}. \cite{kumar2023maintaining} further demonstrated that continual learning agents suffer similar rigidity, attributing the effect to feature reuse constraints and conflicting gradient signals. Remedies include weight resetting~\cite{dohare2024loss}, optimization adaptations~\cite{mirzadeh2022wide}, and the shrink-and-perturb strategy~\cite{ash2020warm,evron2022catastrophic}, which combines parameter pruning with controlled perturbation.

Despite this progress, plasticity in the context of machine unlearning remains unexplored. The mechanisms governing learning plasticity do not transfer directly to unlearning, since the optimization objectives and parameter-space constraints differ fundamentally. Our work is the first to characterize plasticity collapse specifically within the unlearning context, establishing it as a distinct and pervasive phenomenon with its own theoretical grounding.

\section{Plasticity Collapse in Continual Unlearning}
\label{sec:method}

\subsection{Preliminaries and Notation}
\label{subsec:prelim}

\paragraph{Continual unlearning setup.}
Let $\mathcal{D} = \{(x_i, y_i)\}_{i=1}^N$ denote the full training dataset used to train an initial model with parameters $\bm{\theta}_0 \in \mathbb{R}^d$.
We consider a sequence of $T$ unlearning requests arriving one at a time.
At each stage $t \in \{1, \dots, T\}$, a subset is designated as the forget set $F_t \subset \mathcal{D}$, containing samples whose influence must be removed from the model. An optional retain set $R_t \subseteq \mathcal{D} \setminus F_t$ is provided to preserve model utility on non-forgotten data. After processing request $t$, the model parameters are updated from $\bm{\theta}_{t-1}$ to $\bm{\theta}_t$.

\paragraph{Unlearning objective.}
For any dataset $S = \{(x_i, y_i)\}_{i=1}^{n_S}$, let $\ell(\bm{\theta}; x, y)$ denote the per-sample loss (e.g., cross-entropy). We define the empirical loss over $S$ as
\begin{equation}
    \mathcal{L}_S(\bm{\theta}) \;:=\; \frac{1}{n_S} \sum_{i=1}^{n_S} \ell(\bm{\theta};\, x_i,\, y_i).
    \label{eq:empirical_loss}
\end{equation}
Most unlearning methods share a common two-component structure: a \textbf{forget loss} $\mathcal{L}_{F_t}(\bm{\theta})$ that drives the model away from predictions on $F_t$, and a \textbf{retain loss} $\mathcal{L}_{R_t}(\bm{\theta})$ that anchors model behavior on $R_t$. Methods differ primarily in how each loss function is designed. For example, NegGrad+~\cite{neggrad} additionally minimizes the retain loss,  RandomLabeling~\cite{Amnesiac2021} replaces ground-truth labels in $F_t$ with random ones, and SalUn~\cite{salun2024} further restricts updates to salient parameters. Despite these differences, in the local geometry of parameter space, the parameter update induced by any such method has a positive projection onto the gradient ascent direction of $\mathcal{L}_{F_t}$, reflecting a shared inductive bias toward reducing the model's fit on $F_t$. Thus, all such methods fit within the unified objective
\begin{equation}
    \mathcal{J}_t(\bm{\theta})
    \;:=\;
    \lambda\,\mathcal{L}_{R_t}(\bm{\theta})
    \;-\;
    \mathcal{L}_{F_t}(\bm{\theta}),
    \qquad \lambda \ge 0,
    \label{eq:unlearn_obj}
\end{equation}
where $\lambda$ controls the balance between forgetting and utility preservation. One step of gradient descent on $\mathcal{J}_t$ gives
\begin{equation}
    \bm{\theta}_t
    \;=\;
    \bm{\theta}_{t-1} - \eta_t \nabla_{\bm{\theta}} \mathcal{J}_t(\bm{\theta}_{t-1})
    \;=\;
    \bm{\theta}_{t-1}
    - \eta_t \Bigl(\lambda\,\nabla\mathcal{L}_{R_t}(\bm{\theta}_{t-1}) - \nabla\mathcal{L}_{F_t}(\bm{\theta}_{t-1})\Bigr),
    \label{eq:gd_unlearn}
\end{equation}
where $\eta_t > 0$ is the step size at stage $t$. {For tractability, we analyze the single-step case in Eq.~\ref{eq:gd_unlearn}, but extending to multiple steps per task does not affect our theoretical conclusions.} Our theoretical analysis is developed within this general framework and does not presuppose any particular design of $\mathcal{L}_{F_t}$ or $\mathcal{L}_{R_t}$. The plasticity collapse we characterize is therefore a structural consequence of sequentially applying any training-based unlearning method.

\subsection{Definition of Plasticity Collapse}
\label{subsec:definition}

Through extensive experiments across diverse datasets and architectures, we identify two distinct failure modes inherent in the continual unlearning setting. Prior to our theoretical analysis, we formally define these modes, which we attribute to the plasticity collapse that occurs during sequential unlearning.

\begin{mydef}[Plasticity collapse]
\label{def:plasticity_collapse}
Let $\{\bm{\theta}_t\}_{t=1}^T$ be the sequence of model parameters produced by an unlearning method applied to a sequence of forget sets $\{F_t\}_{t=1}^T$. Assuming the model maintains a stable level of predictive utility during the entirety of the unlearning stages, we say the model exhibits plasticity collapse if, as the number of unlearning tasks $t$ increases, one or both of the following failure modes manifest:
\begin{enumerate}
    \item[\ding{202}] \textbf{Forward Failure.}     The forgetting quality on task $t$ progressively deteriorates as the number of sequential tasks increases.
    \item[\ding{203}] \textbf{Backward Failure.} Information from an earlier forget data $F_s$ ($s < t$) re-emerges in the model at a later stage $t > s$ without any explicit re-exposure to the data in $F_s$, which is a spontaneous re-memorization phenomenon.
\end{enumerate}
\end{mydef}

Both failure modes are empirically demonstrated in Section~\ref{sec:experiment}. The following theoretical analysis reveals that both failure modes stem from a single underlying cause: the accumulation of geometric constraints in parameter space across sequential unlearning tasks.

\subsection{Theoretical Analysis}
\label{subsec:theory}

In this section, we theoretically analyze why unlearning plasticity collapse exists in the continual unlearning dynamics. The key insight is that sequential unlearning tasks often induce updates that share directions in the parameter space $\mathbb{R}^d$. When the corresponding update operators are composed across tasks, repeated action along these shared directions can lead to geometric amplification under the conditions of the proposed theorems below. To prevent unbounded growth of the parameter iterates, the optimization dynamics must neutralize this expansion. This can occur either by suppressing future updates along the shared subspace, which limits the ability to modify predictions for new tasks (forward failure), or by introducing cancelling updates along the same directions, which partially undo earlier forgetting steps (backward failure).

\paragraph{Linear model.}
To analyze the continual unlearning dynamics in a tractable setting, we first instantiate the objective Eq.~\ref{eq:unlearn_obj} with squared loss on a linear model.
For any dataset $D = (\mathbf{X}_D,\, \mathbf{y}_D)$ with feature matrix $\mathbf{X}_D \in \mathbb{R}^{n_D \times d}$ and labels $\mathbf{y}_D \in \mathbb{R}^{n_D}$, the squared loss is
\begin{equation}
\vspace{-2mm}
    \mathcal{L}_D(\bm{\theta}) \;:=\; \frac{1}{2n_D}\|\mathbf{X}_D\bm{\theta} - \mathbf{y}_D\|_2^2,
    \label{eq:squared_loss}
\end{equation}
with gradient $\nabla\mathcal{L}_D(\bm{\theta}) = \frac{1}{n_D}\mathbf{X}_D^\top(\mathbf{X}_D\bm{\theta} - \mathbf{y}_D)$.
Substituting into Eq.~\ref{eq:gd_unlearn} yields an affine recursion in $\bm{\theta}$:
\begin{equation}
\vspace{-2mm}
    \bm{\theta}_t
    \;=\;
    \underbrace{\Bigl(\mathbf{I} + \mathbf{M}_t^{F} - \lambda \mathbf{M}_t^{R}\Bigr)}_{=:\,\mathbf{A}_t}\,\bm{\theta}_{t-1}
    \;-\;
    \underbrace{\Bigl(\mathbf{b}_t^{F} - \lambda \mathbf{b}_t^{R}\Bigr)}_{=:\,\mathbf{c}_t},
    \label{eq:affine_recursion}
\end{equation}
where the matrices $\mathbf{M}_t^F,\, \mathbf{M}_t^R \in \mathbb{R}^{d \times d}$ and vectors $\mathbf{b}_t^F,\, \mathbf{b}_t^R \in \mathbb{R}^d$ are defined as
\begin{equation}
    \mathbf{M}_t^{F} := \frac{\eta_t\mathbf{X}_{F_t}^\top \mathbf{X}_{F_t}}{n_{F_t}},
    \ \ 
    \mathbf{M}_t^{R} := \frac{\eta_t\mathbf{X}_{R_t}^\top \mathbf{X}_{R_t}}{n_{R_t}},
    \ \ 
    \mathbf{b}_t^{F} := \frac{\eta_t\mathbf{X}_{F_t}^\top \mathbf{y}_{F_t}}{n_{F_t}},
    \ \ 
    \mathbf{b}_t^{R} := \frac{\eta_t\mathbf{X}_{R_t}^\top \mathbf{y}_{R_t}}{n_{R_t}}.
    \label{eq:MF_MR_def}
\end{equation}
Note that $\mathbf{M}_t^F \succeq \mathbf{0}$ and $\mathbf{M}_t^R \succeq \mathbf{0}$ are positive semidefinite by construction.
To capture the directional alignment across sequential tracking tasks, let $W \subset \mathbb{R}^d$ denote the shared subspace spanned by the continuous model-drift trajectories.
The operator $A_t = I + M_t^F -\lambda M_t^R$ governs the evolution of parameter updates. In subspaces where the forget curvature dominates the scaled retain curvature, $A_t$ becomes expansive. The remainder of our analysis focuses on these expanding subspaces, which give rise to the operator-product instability underlying plasticity collapse.
For clarity, we assume a single gradient step per task in Eq.~\ref{eq:affine_recursion}. Performing $K$ steps simply replaces $\mathbf{A}_t$ with $\mathbf{A}_t^K$, which preserves the expansive structure on the subspace $W$ and leaves our theoretical conclusions below unchanged.

\paragraph{Special case: $\lambda = 0$.}
When no retain set is used, Eq.~\ref{eq:affine_recursion} simplifies to pure gradient ascent on the forget loss.
Writing $\mathbf{X}_t \in \mathbb{R}^{k \times d}$ and $\mathbf{y}_t \in \mathbb{R}^k$ for the forget data at stage $t$, the update becomes
\begin{equation}
    \bm{\theta}_{t+1} = (\mathbf{I} + \mathbf{M}_t)\,\bm{\theta}_t - \mathbf{b}_t,
    \qquad
    \mathbf{M}_t := \frac{\eta_t}{k}\mathbf{X}_t^\top \mathbf{X}_t \succeq \mathbf{0},
    \qquad
    \mathbf{b}_t := \frac{\eta_t}{k}\mathbf{X}_t^\top \mathbf{y}_t.
    \label{eq:ga_update}
\end{equation}
We define the update increment $\mathbf{u}_t := \bm{\theta}_{t} - \bm{\theta}_{t-1}$ and the forget span
\begin{equation}
    U_t \;:=\; \mathrm{Im}(\mathbf{X}_t^\top) \;=\; \mathrm{Im}(\mathbf{M}_t) \;\subseteq\; \mathbb{R}^d,
    \label{eq:forget_span}
\end{equation}
which is the subspace spanned by the input features of $F_t$.
Since $\nabla\mathcal{L}_t(\bm{\theta}) = \frac{1}{k}\mathbf{X}_t^\top(\mathbf{X}_t\bm{\theta} - \mathbf{y}_t) \in U_t$ for all $\bm{\theta}$, every gradient step satisfies $\mathbf{u}_t \in U_t$.
For a subspace $V \subseteq \mathbb{R}^d$, we write $\mathbf{P}_V \in \mathbb{R}^{d \times d}$ for the orthogonal projector onto $V$.

\begin{mydef}[Time-ordered product]
\label{def:top}
For indices $s \le t$, define the time-ordered product of the linear update operators as
\[
    \mathbf{A}_{t:s} \;:=\; \mathbf{A}_t \mathbf{A}_{t-1} \cdots \mathbf{A}_s,
\]
with the convention $\mathbf{A}_{t:t+1} := \mathbf{I}$. In the special case $\lambda = 0$, $\mathbf{A}_j = \mathbf{I} + \mathbf{M}_j$ and $\mathbf{A}_{t:s} = \prod_{j=s}^{t}(\mathbf{I} + \mathbf{M}_j)$.
\end{mydef}

\begin{mylemma}[Closed-form solution]
\label{lem:closed_form}
Let $\{\bm{\theta}_t\}$ follow the affine recursion Eq.~\ref{eq:affine_recursion}. Then for any $1 \le s \le t$,
\begin{equation}
    \bm{\theta}_t
    \;=\;
    \mathbf{A}_{t:s}\,\bm{\theta}_{s-1}
    \;-\;
    \sum_{j=s}^{t} \mathbf{A}_{t:j+1}\,\mathbf{c}_j.
    \label{eq:closed_form}
\end{equation}
\end{mylemma}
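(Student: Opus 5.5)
We argue by induction on $t$ with $s$ held fixed, unrolling the affine recursion Eq.~\ref{eq:affine_recursion}, $\bm{\theta}_t = \mathbf{A}_t\bm{\theta}_{t-1} - \mathbf{c}_t$. The only properties of the time-ordered product from Definition~\ref{def:top} that we need are the empty-product convention $\mathbf{A}_{t:t+1} = \mathbf{I}$ and the left-multiplication identity $\mathbf{A}_{t+1}\mathbf{A}_{t:j} = \mathbf{A}_{t+1:j}$ for $j \le t+1$. The latter holds directly from the definition as an ordered product, and for $j = t+1$ it reads $\mathbf{A}_{t+1}\mathbf{I} = \mathbf{A}_{t+1:t+1}$.

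\emph{Base case} $t = s$. The right-hand side of Eq.~\ref{eq:closed_form} is $\mathbf{A}_{s:s}\bm{\theta}_{s-1} - \mathbf{A}_{s:s+1}\mathbf{c}_s$. Since $\mathbf{A}_{s:s} = \mathbf{A}_s$ and $\mathbf{A}_{s:s+1} = \mathbf{I}$, this equals $\mathbf{A}_s\bm{\theta}_{s-1} - \mathbf{c}_s$, which is the recursion at stage $s$.

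\emph{Inductive step.} Assume Eq.~\ref{eq:closed_form} holds for some $t \ge s$. Apply the recursion at stage $t+1$ and substitute the hypothesis:
\begin{equation*}
\bm{\theta}_{t+1} = \mathbf{A}_{t+1}\Bigl(\mathbf{A}_{t:s}\bm{\theta}_{s-1} - \sum_{j=s}^{t}\mathbf{A}_{t:j+1}\mathbf{c}_j\Bigr) - \mathbf{c}_{t+1}.
\end{equation*}
Distribute $\mathbf{A}_{t+1}$ and use the left-multiplication identity on each term: $\mathbf{A}_{t+1}\mathbf{A}_{t:s} = \mathbf{A}_{t+1:s}$ and $\mathbf{A}_{t+1}\mathbf{A}_{t:j+1} = \mathbf{A}_{t+1:j+1}$ for each $j \le t$. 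The leftover term can be written as $\mathbf{c}_{t+1} = \mathbf{A}_{t+1:t+2}\mathbf{c}_{t+1}$ using the empty-product convention, so it is exactly the $j = t+1$ summand. Absorbing it gives Eq.~\ref{eq:closed_form} with $t+1$ in place of $t$, which completes the induction.

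The argument needs no invertibility or commutativity of the $\mathbf{A}_j$; it relies only on associativity of matrix multiplication. The one point requiring care is the ordering: new factors are multiplied on the left, consistent with $\mathbf{A}_{t:s} = \mathbf{A}_t\cdots\mathbf{A}_s$. The result therefore holds for general $\lambda$, and in particular for the special case $\lambda = 0$ with $\mathbf{A}_j = \mathbf{I} + \mathbf{M}_j$ and $\mathbf{c}_j = \mathbf{b}_j$.
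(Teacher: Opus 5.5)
Your proof is correct and takes the same approach as the paper, which simply substitutes $\bm{\theta}_j = \mathbf{A}_j\bm{\theta}_{j-1} - \mathbf{c}_j$ recursively for $j = s,\dots,t$ and collects terms using $\mathbf{A}_{t:t+1} = \mathbf{I}$. Your induction on $t$, with new factors multiplied on the left and the $j=t+1$ summand absorbed via the empty-product convention, is that unrolling written out rigorously.
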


\begin{proof}
Substitute $\bm{\theta}_j = \mathbf{A}_j \bm{\theta}_{j-1} - \mathbf{c}_j$ recursively for $j = s, s{+}1, \dots, t$. Collecting terms gives Eq.~\ref{eq:closed_form}, using the convention $\mathbf{A}_{t:t+1} = \mathbf{I}$.
\end{proof}

Lemma~\ref{lem:closed_form} exposes the key structural tension: the term $\mathbf{A}_{t:s}\,\bm{\theta}_{s-1}$ grows with $t$ if $\mathbf{A}_{t:s}$ is expansive, and the forcing term $\sum_j \mathbf{A}_{t:j+1}\mathbf{c}_j$ must counterbalance this growth to keep $\bm{\theta}_t$ stable. The following theorems characterize when and how fast this growth occurs.

\begin{mythr}[Exponential growth on a shared subspace, $\lambda = 0$]
\label{thm:operator_growth}
Consider the $\lambda = 0$ dynamics Eq.~\ref{eq:ga_update}. Fix $s \le t$ and let $\mathcal{T} \subseteq \{s, \dots, t\}$ be any subset of task indices. Suppose there exists a nonzero subspace $W \subseteq \mathbb{R}^d$ satisfying:
\begin{enumerate}
    \item[\textbf{B1}] \emph{Shared subspace (invariance).} For every $j \in \mathcal{T}$, $\mathbf{M}_j W \subseteq W$.
    \item[\textbf{B2}] \emph{Uniform relevance.} There exists $\rho > 0$ such that for every $j \in \mathcal{T}$ and every $\mathbf{v} \in W$,
    \[
        \mathbf{v}^\top \mathbf{M}_j \mathbf{v} \;\ge\; \rho \|\mathbf{v}\|^2.
    \]
\end{enumerate}
Then for all $\mathbf{v} \in W$,
\begin{equation}
    \|\mathbf{A}_{t:s}\,\mathbf{v}\| \;\ge\; (1+\rho)^{|\mathcal{T}|}\,\|\mathbf{v}\|.
    \label{eq:op_growth}
\end{equation}
Consequently, for the trajectory $\theta_t$ to remain bounded, the dynamics must therefore suppress or counteract growth within $W$, which generically forces forward or backward failure.
\end{mythr}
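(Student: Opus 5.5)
The plan is to track the $W$-component of the iterate $\mathbf{x}_j := \mathbf{A}_{j:s}\mathbf{v}$, $\mathbf{v}\in W$, and to prove the single inequality
\[
\|\mathbf{P}_W\mathbf{x}_t\|\;\ge\;(1+\rho)^{|\mathcal{T}|}\|\mathbf{v}\|.
\]
Since $\|\mathbf{A}_{t:s}\mathbf{v}\|\ge\|\mathbf{P}_W\mathbf{A}_{t:s}\mathbf{v}\|$, Eq.~\ref{eq:op_growth} would then follow. One step of this plan is not secured by B1--B2 alone, and I say below where and why.

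\textbf{Step 1: steps in $\mathcal{T}$.} For $j\in\mathcal{T}$, $\mathbf{M}_j$ is symmetric and leaves $W$ invariant by B1. Hence it also leaves $W^\perp$ invariant and commutes with $\mathbf{P}_W$. So $\mathbf{I}+\mathbf{M}_j$ is block diagonal with respect to $W\oplus W^\perp$, and
\[
\mathbf{P}_W\mathbf{x}_j=(\mathbf{I}+\mathbf{M}_j)\mathbf{P}_W\mathbf{x}_{j-1}.
\]
The restriction $\mathbf{M}_j|_W$ is symmetric with Rayleigh quotient at least $\rho$ by B2, so all its eigenvalues are at least $\rho$. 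Therefore $\|(\mathbf{I}+\mathbf{M}_j)\mathbf{w}\|\ge(1+\rho)\|\mathbf{w}\|$ for every $\mathbf{w}\in W$. Each $\mathcal{T}$-step thus multiplies $\|\mathbf{P}_W\mathbf{x}\|$ by at least $(1+\rho)$.

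\textbf{Step 2: steps outside $\mathcal{T}$.} For $j\notin\mathcal{T}$ we only know $\mathbf{M}_j\succeq\mathbf{0}$. This gives
\[
(\mathbf{I}+\mathbf{M}_j)^\top(\mathbf{I}+\mathbf{M}_j)=\mathbf{I}+2\mathbf{M}_j+\mathbf{M}_j^2\succeq\mathbf{I},
\]
so $\|\mathbf{x}_j\|\ge\|\mathbf{x}_{j-1}\|$: the full norm never contracts. For the $W$-component, if $\mathbf{x}_{j-1}\in W$ then $\langle\mathbf{x}_{j-1},(\mathbf{I}+\mathbf{M}_j)\mathbf{x}_{j-1}\rangle\ge\|\mathbf{x}_{j-1}\|^2$, which yields $\|\mathbf{P}_W\mathbf{x}_j\|\ge\|\mathbf{P}_W\mathbf{x}_{j-1}\|$. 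Chaining Steps 1 and 2 would give the claim provided the $W$-component never shrinks.

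\textbf{Main obstacle.} The monotonicity $\|\mathbf{P}_W\mathbf{x}_j\|\ge\|\mathbf{P}_W\mathbf{x}_{j-1}\|$ for $j\notin\mathcal{T}$ is exactly where the argument is delicate. Once $\mathbf{x}_{j-1}$ has a $W^\perp$ part, the cross term $\mathbf{P}_W\mathbf{M}_j\mathbf{P}_{W^\perp}\mathbf{x}_{j-1}$ can cancel the $W$-component. My first attempt is to bound this cross term by $\|\mathbf{M}_j^{1/2}\mathbf{P}_W\mathbf{x}\|\,\|\mathbf{M}_j^{1/2}\mathbf{P}_{W^\perp}\mathbf{x}\|$ and absorb it into the gain $2\mathbf{x}^\top\mathbf{M}_j\mathbf{x}$. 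That absorption controls the total norm, not the $W$-part.

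In fact I expect this step to fail in general. Take $d=2$, $W=\mathrm{span}(\mathbf{e}_1)$, $\mathcal{T}=\{s,t\}$, and $\mathbf{M}_s=\mathbf{M}_t=\mathrm{diag}(\rho,0)$. Between them, insert many small rank-one PSD steps along directions at $45^\circ$ that rotate $\mathbf{e}_1$ to $\mathbf{e}_2$. Each such step buys an angle proportional to its norm gain, so the whole rotation costs only a bounded factor $C$. The result is $\|\mathbf{A}_{t:s}\mathbf{e}_1\|\approx C(1+\rho)$, which falls below $(1+\rho)^2$ for large $\rho$.

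So for an arbitrary subset $\mathcal{T}$ with unconstrained intermediate steps, Steps 1--2 establish only
\[
\|\mathbf{A}_{t:s}\mathbf{v}\|\ge(1+\rho)^{m}\|\mathbf{v}\|,
\]
where $m$ is the number of $\mathcal{T}$-steps occurring before the first non-invariant step. The full bound $(1+\rho)^{|\mathcal{T}|}$ is obtained whenever the steps outside $\mathcal{T}$ also commute with $\mathbf{P}_W$. I would state this restriction explicitly rather than claim Eq.~\ref{eq:op_growth} unconditionally.

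Where the bound holds, the closing remark about boundedness follows directly from Lemma~\ref{lem:closed_form}. The homogeneous term grows exponentially on $W$, so the forcing sum $\sum_j\mathbf{A}_{t:j+1}\mathbf{c}_j$ must cancel it, or the updates must be suppressed along $W$.
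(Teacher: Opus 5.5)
Your proposal is correct, and the step you single out as the main obstacle is a defect in the statement itself, not a gap in your argument. The paper's proof uses the same two stages you do: a one-step expansion bound on $W$, then iteration. In the iteration, however, it asserts $\mathbf{w}_{j+1}\in W$ ``by (B1)'' for every $j=s,\dots,t$, although (B1) is assumed only for $j\in\mathcal{T}$. At indices outside $\mathcal{T}$ it tacitly keeps the iterate in $W$, so its later applications of (B2) are unjustified.

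Your planar example shows this cannot be patched. Take $\mathbf{M}_s=\mathbf{M}_t=\mathrm{diag}(\rho,0)$ and insert small rank-one steps that steer $\mathbf{e}_1$ onto $\mathbf{e}_2$. The last step then acts as the identity on the iterate, and $\|\mathbf{A}_{t:s}\mathbf{e}_1\|\approx C(1+\rho)$, with $C\to e^{\pi/2}$ for $45^\circ$ steering. If you steer at angle $\beta$ instead, the small-step limit gives $C\to e^{(\pi/2)\cot\beta}$, which tends to $1$ as $\beta\to 90^\circ$. So the inequality as stated can fail for every $\rho>0$, not only for large $\rho$.

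What the paper's argument really proves is your restricted version. The bound $(1+\rho)^{|\mathcal{T}|}$ holds when every $\mathbf{M}_j$ with $s\le j\le t$ leaves $W$ invariant, in particular for $\mathcal{T}=\{s,\dots,t\}$. That is exactly the form stated in Theorems~\ref{thm:expanding_mode_general} and~\ref{thm:NTK}.

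Your Step 1 also repairs a slip in the paper's one-step bound. The paper drops $\|\mathbf{M}_j\mathbf{w}\|^2$ and then asserts $(1+2\rho)\|\mathbf{w}\|^2\ge(1+\rho)^2\|\mathbf{w}\|^2$, which is false for $\rho>0$. The missing $\rho^2$ comes from $\|\mathbf{M}_j\mathbf{w}\|\ge\rho\|\mathbf{w}\|$. This follows from your eigenvalue argument, or directly from (B2) by Cauchy--Schwarz.

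Your block-diagonal formulation has a further advantage: it controls $\mathbf{P}_W\mathbf{x}_j$ even when the iterate has a $W^\perp$ component. This gives a slightly sharper fallback than the one you state. Your Step 2 shows that a non-invariant step taken from inside $W$ does not shrink the $W$-component. Afterwards, all $\mathcal{T}$-steps and $W$-invariant steps act blockwise. Hence $m$ can be taken as the number of $\mathcal{T}$-steps preceding the \emph{second} non-invariant step. In particular, any counterexample needs at least two non-invariant intermediate steps, as yours has.
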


The proof can be found in Appendix~\ref{app:proof_1}.

\begin{myremark}[Geometric interpretation of (B1) and (B2)]
\label{rem:conditions}
(\textbf{B1}) requires $W$ to be a subspace shared across unlearning tasks. In practice, $W$ may be approximated by the intersection of forget spans $\bigcap_t U_t$, or estimated as the dominant principal subspace of the stacked update matrix.
(\textbf{B2}) requires that forget features project nontrivially onto $W$: since $\mathbf{v}^\top \mathbf{M}_j \mathbf{v} = \frac{\eta_j}{k}\|\mathbf{X}_j \mathbf{v}\|^2$, (\textbf{B2}) is equivalent to $\|\mathbf{X}_j \mathbf{v}\|^2 \ge \frac{k\rho}{\eta_j}\|\mathbf{v}\|^2$, which holds when different forget sets repeatedly activate the same feature directions. {In practice, $\rho$ is small due to moderate curvature imbalance between forget and retain directions, combined with learning rate scaling.}
\end{myremark}

Theorem~\ref{thm:operator_growth} establishes the growth result for $\lambda = 0$. We now extend it to the general case with retain regularization.

\begin{mythr}[Exponential growth under forget-dominance, general $\lambda$]
\label{thm:expanding_mode_general}
Fix $s \le t$ and suppose there exists a nonzero subspace $W \subseteq \mathbb{R}^d$ such that for all $j \in \{s, \dots, t\}$:
\begin{enumerate}
    \item[\textbf{G1}] \emph{Invariance.} $\mathbf{A}_j W \subseteq W$, where $\mathbf{A}_j = \mathbf{I} + \mathbf{M}_j^F - \lambda \mathbf{M}_j^R$.
    \item[\textbf{G2}] \emph{Forget-dominance on $W$.} There exists $\rho > 0$ such that for all $\mathbf{v} \in W$,
    \[
        \mathbf{v}^\top (\mathbf{M}_j^F - \lambda \mathbf{M}_j^R) \mathbf{v} \;\ge\; \rho\|\mathbf{v}\|^2.
    \]
\end{enumerate}
Then for all $\mathbf{w} \in W$,
\begin{equation}
    \|\mathbf{A}_{t:s}\,\mathbf{w}\| \;\ge\; (1+\rho)^{t-s+1}\,\|\mathbf{w}\|.
    \label{eq:prod_growth_general}
\end{equation}
\end{mythr}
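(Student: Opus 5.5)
The plan is to prove a one-step expansion bound for each factor $\mathbf{A}_j$ on $W$, and then chain these bounds through the time-ordered product. Invariance (\textbf{G1}) is what keeps every intermediate vector inside $W$, so that (\textbf{G2}) can be applied at every stage.

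\emph{Step 1 (single-step bound).} Write $\mathbf{S}_j := \mathbf{M}_j^F - \lambda \mathbf{M}_j^R$. This matrix is symmetric, since both $\mathbf{M}_j^F$ and $\mathbf{M}_j^R$ are Gram matrices by Eq.~\ref{eq:MF_MR_def}, and $\mathbf{A}_j = \mathbf{I} + \mathbf{S}_j$. For $\mathbf{v} \in W$, Cauchy--Schwarz gives
\[
\|\mathbf{A}_j \mathbf{v}\|\,\|\mathbf{v}\| \;\ge\; \mathbf{v}^\top \mathbf{A}_j \mathbf{v} \;=\; \|\mathbf{v}\|^2 + \mathbf{v}^\top \mathbf{S}_j \mathbf{v} \;\ge\; (1+\rho)\|\mathbf{v}\|^2 .
\]
The last inequality is (\textbf{G2}). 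Dividing by $\|\mathbf{v}\|$ (the case $\mathbf{v}=\mathbf{0}$ is trivial) yields $\|\mathbf{A}_j\mathbf{v}\| \ge (1+\rho)\|\mathbf{v}\|$ for every $\mathbf{v}\in W$ and every $j\in\{s,\dots,t\}$. This argument uses neither positive semidefiniteness of $\mathbf{S}_j$ nor symmetry; it needs only the quadratic-form lower bound. Symmetry can alternatively be combined with (\textbf{G1}) to say that $\mathbf{S}_j$ restricted to $W$ is a self-adjoint operator on $W$ with smallest eigenvalue at least $\rho$.

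\emph{Step 2 (induction over the product).} Set $\mathbf{w}_{s-1} := \mathbf{w} \in W$ and $\mathbf{w}_j := \mathbf{A}_j \mathbf{w}_{j-1}$, so that $\mathbf{w}_t = \mathbf{A}_{t:s}\mathbf{w}$ by Definition~\ref{def:top}. By (\textbf{G1}) and induction, each $\mathbf{w}_j \in W$. Step 1 then applies at every stage, giving $\|\mathbf{w}_j\| \ge (1+\rho)\|\mathbf{w}_{j-1}\|$. Telescoping over the $t-s+1$ factors gives Eq.~\ref{eq:prod_growth_general}.

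The only point needing care is the membership $\mathbf{w}_{j-1}\in W$ at each stage. Without invariance, the quadratic-form condition would be applied to vectors that have leaked out of $W$, and the bound would fail. This is precisely why (\textbf{G1}) is imposed. The same structure recovers Theorem~\ref{thm:operator_growth}: there, indices outside $\mathcal{T}$ contribute factors $\mathbf{I}+\mathbf{M}_j$ with $\mathbf{M}_j\succeq\mathbf{0}$, which are non-contracting. That special case, however, would additionally require those factors to preserve $W$, an issue absent here since all indices satisfy (\textbf{G1}) and (\textbf{G2}).
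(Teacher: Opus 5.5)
Your proof is correct and follows the same route as the paper: you prove a one-step bound $\|\mathbf{A}_j\mathbf{w}\| \ge (1+\rho)\|\mathbf{w}\|$ for $\mathbf{w}\in W$, then chain it over the $t-s+1$ factors, with (\textbf{G1}) keeping every intermediate vector in $W$. The only difference is how you justify the one-step bound: you apply Cauchy--Schwarz to $\mathbf{v}^\top\mathbf{A}_j\mathbf{v}$, which needs neither symmetry nor invariance, whereas the paper reads it off from $\mathbf{A}_j|_W \succeq (1+\rho)\mathbf{I}_W$ via the smallest singular value, which implicitly uses symmetry of $\mathbf{A}_j$ together with (\textbf{G1}).
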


\begin{proof}
Under (\textbf{G2}), $\mathbf{A}_j|_W = (\mathbf{I} + \mathbf{M}_j^F - \lambda \mathbf{M}_j^R)|_W \succeq (1+\rho)\mathbf{I}_W$, so $\|\mathbf{A}_j \mathbf{w}\| \ge (1+\rho)\|\mathbf{w}\|$ for all $\mathbf{w} \in W$. Iterating via (\textbf{G1}) gives Eq.~\ref{eq:prod_growth_general}.
\end{proof}

\begin{myremark}[The role of $\lambda$ in (G2)]
\label{rem:G2_lambda}
(\textbf{G2}) holds when forget curvature exceeds the scaled retain curvature on $W$. When $\lambda = 0$ this reduces to (\textbf{B2}). For $\lambda > 0$, if the retain data overlaps strongly with $W$, then $\mathbf{M}_j^R|_W$ may dominate and $\mathbf{A}_j|_W$ becomes contractive, precluding exponential growth. However, when forget and retain sets are drawn from the same data distribution, their feature subspaces substantially overlap, and (\textbf{G2}) tends to hold for moderate $\lambda$.
\end{myremark}

\begin{myremark}[Connecting to the two failure modes]
\label{rem:G3_failuremodes}
Theorems~\ref{thm:operator_growth} and~\ref{thm:expanding_mode_general} characterize an operator-product instability: if there exists a (possibly approximate) shared subspace $W$ such that the linear operators $\mathbf{A}_t$ satisfy invariance on $W$ and expand on $W$ (i.e., $(\mathbf{M}_t^F - \lambda \mathbf{M}_t^R)|_W \succeq \rho \mathbf{I}_W$), then any component along $W$ is amplified exponentially under repeated composition, $\|\mathbf{A}_{t:s}\mathbf{w}\| \ge (1+\rho)^{t-s+1}\|\mathbf{w}\|$ for all $\mathbf{w} \in W$. Meanwhile, Lemma~\ref{lem:closed_form} shows that the iterate admits the decomposition $\bm{\theta}_t = \mathbf{A}_{t:s}\bm{\theta}_{s-1} - \sum_{j=s}^{t} \mathbf{A}_{t:j+1}\mathbf{c}_j$. Consequently, unless controlled, $\|\mathbf{P}_W \bm{\theta}_t\|$ grows geometrically, where $\mathbf{P}_W$ denotes the orthogonal projector onto $W$. For the trajectory to remain bounded, the dynamics must neutralize the expansion along $W$.
One stabilization mechanism is to progressively reduce motion along $W$, thereby limiting further excitation of the expanding directions. However, if successive forget tasks share components in $W$, this restriction directly reduces the model's ability to modify predictions for new forget requests, resulting in progressive deterioration of forgetting quality (\textbf{Forward Failure}). Alternatively, the second mechanism is cancellation within $W$, whereby later optimization steps generate updates that oppose the accumulated parameter displacement along the shared subspace. Specifically, Specifically, the propagated forcing terms $\sum_{j=s}^{t} \mathbf{A}_{t:j+1}\mathbf{c}_j$ collectively encode the historical forgetting trajectories produced by successive unlearning tasks within $W$. To maintain a bounded parameter trajectory despite the exponential expansion of the homogeneous component $\mathbf{A}_{t:s}\bm{\theta}_{s-1}$, the optimization dynamics must introduce opposing components inside the same shared subspace. Consequently, the net parameter displacement associated with earlier forgetting operations is partially reduced. Since previous forget tasks repeatedly modify the model through this common subspace, reducing the accumulated displacement along $W$ weakens the functional effect of earlier forgetting updates, allowing the model outputs on previously forgotten data to move back toward their pre-unlearning behavior. This manifests as spontaneous re-memorization of earlier forget tasks (\textbf{Backward Failure}). Overall, both failure modes thus emerge as distinct stabilization responses to the same operator-product expansion phenomenon characterized in Theorems~\ref{thm:operator_growth} and~\ref{thm:expanding_mode_general}.
\end{myremark}

Now consider a neural network $f(\mathbf{x}; \bm{\theta}): \mathbb{R}^p \to \mathbb{R}$ with parameters $\bm{\theta} \in \mathbb{R}^d$. Let $\bm{\theta}_0$ denote the initialization, and assume training remains in the Neural Tangent Kernel (NTK) regime, so that the first-order linearization
\vspace{-2mm}
\[
    f(\mathbf{x}; \bm{\theta}) \approx f(\mathbf{x}; \bm{\theta}_0) + \nabla_{\bm{\theta}} f(\mathbf{x}; \bm{\theta}_0)^\top (\bm{\theta} - \bm{\theta}_0)
\]
is valid throughout optimization. For each task $j$, define the NTK feature matrices $\mathbf{\Phi}_{F_j} \in \mathbb{R}^{n_{F_j} \times d}$ and $\mathbf{\Phi}_{R_j} \in \mathbb{R}^{n_{R_j} \times d}$, whose rows are $\nabla_{\bm{\theta}} f(\mathbf{x}_i; \bm{\theta}_0)^\top$ for samples in the forget and retain sets, respectively. Under squared loss with step size $\eta_j$, the parameter deviation $\bm{\delta}_j := \bm{\theta}_j - \bm{\theta}_0$ satisfies the affine recursion
\[
    \bm{\delta}_j = \mathbf{A}_j \bm{\delta}_{j-1} - (\mathbf{b}^F_j - \lambda \mathbf{b}^R_j),
\]
where $\mathbf{A}_j = \mathbf{I} + \mathbf{M}^F_j - \lambda \mathbf{M}^R_j, \quad
    \mathbf{M}^F_j = \frac{\eta_j}{n_{F_j}} \mathbf{\Phi}_{F_j}^\top \mathbf{\Phi}_{F_j}, \quad
    \mathbf{M}^R_j = \frac{\eta_j}{n_{R_j}} \mathbf{\Phi}_{R_j}^\top \mathbf{\Phi}_{R_j}.$

\begin{mythr}[NTK Extension]
\label{thm:NTK}
Fix $s \le t$. Suppose there exists a nonzero subspace $W \subset \mathbb{R}^d$ such that for all $j \in \{s, \dots, t\}$:
\vspace{-2mm}
\begin{enumerate}
    \item[\textbf{N1}] \emph{Invariance.} $\mathbf{A}_j W \subseteq W$.
    \item[\textbf{N2}] \emph{Forget-dominance on $W$.} $(\mathbf{M}^F_j - \lambda \mathbf{M}^R_j)|_W \succeq \rho \mathbf{I}_W$ for some $\rho > 0$.
\end{enumerate}
Then for all $\mathbf{w} \in W$,
\begin{equation}
    \|\mathbf{A}_{t:s}\, \mathbf{w}\| \;\ge\; (1+\rho)^{t-s+1}\|\mathbf{w}\|.
    \label{eq:ntk_growth}
\end{equation}
Moreover, the parameter deviation satisfies
\begin{equation}
\vspace{-2mm}
    \bm{\delta}_t
    \;=\;
    \mathbf{A}_{t:s}\,\bm{\delta}_{s-1}
    \;-\;
    \sum_{j=s}^{t} \mathbf{A}_{t:j+1}\bigl(\mathbf{b}^F_j - \lambda \mathbf{b}^R_j\bigr),
    \label{eq:ntk_closed_form}
\end{equation}
so boundedness of $\bm{\delta}_t$ along $W$ requires cancellation of the exponentially growing homogeneous component $\mathbf{A}_{t:s}\,\bm{\delta}_{s-1}$.
\end{mythr}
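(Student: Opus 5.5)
The plan is to reduce Theorem~\ref{thm:NTK} to the two linear-model results already established, Theorem~\ref{thm:expanding_mode_general} and Lemma~\ref{lem:closed_form}, applied to the deviation variable $\bm{\delta}_j$ in place of $\bm{\theta}_j$. Under the NTK linearization with squared loss, the forget and retain gradients at $\bm{\theta}_{j-1}$ are $\frac{1}{n_{F_j}}\mathbf{\Phi}_{F_j}^\top(\mathbf{\Phi}_{F_j}\bm{\delta}_{j-1} - \tilde{\mathbf{y}}_{F_j})$ and its retain analogue, where $\tilde{\mathbf{y}}$ denotes the residual targets $\mathbf{y} - f(\mathbf{x};\bm{\theta}_0)$. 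Substituting these into Eq.~\ref{eq:gd_unlearn} yields exactly the stated affine recursion for $\bm{\delta}_j$. In that recursion the rows of $\mathbf{\Phi}$ play the role of $\mathbf{X}$, and $\mathbf{b}^F_j,\mathbf{b}^R_j$ are defined with $\tilde{\mathbf{y}}$. Since the theorem already takes this recursion as given, I will only note it briefly. From there, the recursion for $\bm{\delta}$ is formally identical to Eq.~\ref{eq:affine_recursion}, with $\mathbf{c}_j = \mathbf{b}^F_j - \lambda\mathbf{b}^R_j$.

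\textbf{Growth bound.} Fix $\mathbf{w}\in W$ and $j\in\{s,\dots,t\}$. By (N2), $\mathbf{A}_j = \mathbf{I} + \mathbf{M}^F_j - \lambda\mathbf{M}^R_j$ is symmetric and satisfies $\mathbf{w}^\top \mathbf{A}_j\mathbf{w} \ge (1+\rho)\|\mathbf{w}\|^2$. By Cauchy--Schwarz, $\|\mathbf{A}_j\mathbf{w}\|\,\|\mathbf{w}\| \ge \mathbf{w}^\top\mathbf{A}_j\mathbf{w}$, so $\|\mathbf{A}_j\mathbf{w}\|\ge(1+\rho)\|\mathbf{w}\|$.

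This single-step bound does not need $\mathbf{A}_j|_W$ to be a symmetric operator on $W$, which avoids a subtlety: symmetry of $\mathbf{A}_j$ plus invariance (N1) does make $W$ reducing, but we never need that fact. Invariance (N1) ensures that $\mathbf{A}_{j-1:s}\mathbf{w}\in W$ for every $j$, so the one-step bound can be applied to each successive factor. Induction on $t-s$ then gives Eq.~\ref{eq:ntk_growth}. This is literally the argument of Theorem~\ref{thm:expanding_mode_general}, which could simply be cited with $\mathbf{X}$ replaced by $\mathbf{\Phi}$.

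\textbf{Closed form.} Eq.~\ref{eq:ntk_closed_form} is Lemma~\ref{lem:closed_form} applied to the $\bm{\delta}$ recursion. Unrolling $\bm{\delta}_j = \mathbf{A}_j\bm{\delta}_{j-1} - \mathbf{c}_j$ from $j=s$ to $t$, using the convention $\mathbf{A}_{t:t+1}=\mathbf{I}$, gives the result; the induction step multiplies the identity for $t-1$ by $\mathbf{A}_t$ and subtracts $\mathbf{c}_t$.

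\textbf{Concluding boundedness claim.} I will make this precise by taking $\mathbf{P}_W$ of Eq.~\ref{eq:ntk_closed_form}. Since $W$ is invariant under the symmetric $\mathbf{A}_j$, $\mathbf{P}_W$ commutes with each $\mathbf{A}_j$, so the homogeneous term's projection is $\mathbf{A}_{t:s}\mathbf{P}_W\bm{\delta}_{s-1}$. By the growth bound, this has norm at least $(1+\rho)^{t-s+1}\|\mathbf{P}_W\bm{\delta}_{s-1}\|$. By the reverse triangle inequality, if $\|\mathbf{P}_W\bm{\delta}_t\|\le B$ then
\[
\Bigl\|\mathbf{P}_W\textstyle\sum_{j}\mathbf{A}_{t:j+1}\mathbf{c}_j\Bigr\| \ge (1+\rho)^{t-s+1}\|\mathbf{P}_W\bm{\delta}_{s-1}\| - B.
\]
So the forcing sum must grow at the same exponential rate and cancel the homogeneous part.

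\textbf{Main obstacle.} The main obstacle is not the algebra but the modeling step. One must justify that the NTK linearization holds along the whole trajectory, even though the theorem itself exhibits exponential growth that could push $\bm{\delta}_t$ out of the lazy regime. I would handle this by stating the result conditionally on the linearized dynamics, as the theorem does, and remark that the cancellation requirement is exactly what keeps $\bm{\delta}_t$ bounded, so the argument is self-consistent.
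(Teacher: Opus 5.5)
Your proposal is correct and follows the paper's proof. It derives a one-step bound $\|\mathbf{A}_j\mathbf{w}\| \ge (1+\rho)\|\mathbf{w}\|$ on $W$ from (N2), iterates it using the invariance (N1), and obtains the closed form by unrolling the recursion exactly as in Lemma~\ref{lem:closed_form}; you get the one-step bound via Cauchy--Schwarz on $\mathbf{w}^\top\mathbf{A}_j\mathbf{w}$, where the paper uses the smallest singular value of $\mathbf{A}_j|_W$. Your projection argument for the final boundedness clause goes beyond the paper, which only asserts that clause qualitatively, and it is valid: a symmetric $\mathbf{A}_j$ with $\mathbf{A}_j W \subseteq W$ commutes with $\mathbf{P}_W$, and the reverse triangle inequality then forces the projected forcing sum to grow exponentially.
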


See Appendix~\ref{app:proof_3} for proof.

\subsection{Diagnostic Quantities}
\label{subsec:metrics}

Theorems~\ref{thm:operator_growth} and~\ref{thm:expanding_mode_general} establish that when unlearning tasks share a common expanding subspace $W$, the iterates must remain bounded through one of two compensating mechanisms, corresponding precisely to failure modes and in Definition~\ref{def:plasticity_collapse}. To detect these mechanisms empirically, we introduce two quantities that directly probe the geometry of parameter updates relative to $W$.

In practice, $W$ is estimated from the update increments $\{\mathbf{u}_1, \dots, \mathbf{u}_T\}$ by computing the top-$r$ right singular vectors of the stacked matrix $\mathbf{U} = [\mathbf{u}_1 \mid \cdots \mid \mathbf{u}_T] \in \mathbb{R}^{d \times T}$. Let $\mathbf{Q} \in \mathbb{R}^{d \times r}$ denote the resulting orthonormal basis, so that $\mathbf{P}_W = \mathbf{Q}\mathbf{Q}^\top$ is the orthogonal projector onto the estimated $W$.

\begin{mydef}[Energy Ratio and Projection Coefficient]
\label{def:ER_CO}
For each stage $t$, we define Energy Ratio (ER) and Projection Coefficient (CO) as follows:
\begin{align}
    \mathrm{ER}_t \;&:=\; \frac{\|\mathbf{P}_W \mathbf{u}_t\|^2}{\|\mathbf{u}_t\|^2} \;=\; \frac{\|\mathbf{Q} \mathbf{Q}^\top \mathbf{u}_t\|^2}{\|\mathbf{u}_t\|^2} \;, \label{eq:ER_def}\\[4pt]
    \mathrm{CO}_t \;&:=\; \mathbf{Q}^\top \mathbf{u}_t \;. \label{eq:CO_def}
\end{align}
\end{mydef}

$\mathrm{ER}_t$ measures what proportion of the update $\mathbf{u}_t$ lies within $W$: a value close to 1 indicates strong alignment with the shared subspace. $\mathrm{CO}_t$ records the signed projection of $\mathbf{u}_t$ onto each principal direction in $W$, capturing the direction and magnitude of engagement with $W$.

These two quantities connect directly to the two failure modes:
\begin{itemize}
    \item \textbf{Energy Ratio for diminishing forgetting quality.} If $\mathbf{A}_{t:s}$ grows exponentially along $W$ and the iterates remain stable, the algorithm must reduce the component of $\mathbf{u}_t$ along $W$, thereby causing $\mathrm{ER}_t$ toward $0$. Since future forget sets have features in $U_t \subseteq W$ by (\textbf{B1}), an update nearly orthogonal to $W$ induces diminishing forgetting quality for subsequent tasks.

    \item \textbf{Coefficient for spontaneous re-memorization.} Alternatively, the algorithm may inject updates with opposing signs along $W$. Because the same shared directions also carry contributions from earlier unlearning steps (captured in the forcing terms $c_j$), such cancellation can partially undo prior forgetting-induced parameter changes, thereby restoring performance on previously forgotten data.
\end{itemize}


\section{Experiments}
\label{sec:experiment}

\subsection{Experimental Setup}

We evaluate plasticity collapse on image classification tasks. Experiments are conducted on \textbf{Tiny-ImageNet}~\cite{le2015tiny} and \textbf{CIFAR-100}, using \textbf{VGG-16-BN} and \textbf{PreResNet-110} as backbone architectures. The results on CIFAR-100 can be found in Appendix~\ref{app:ex}. We study five representative unlearning methods: \textbf{Finetuning (FT)}, \textbf{NegGrad+}~\cite{neggrad}, \textbf{RandomLabeling (RL)}~\cite{Amnesiac2021}, \textbf{SalUn}~\cite{salun2024}, and \textbf{MUNBa}~\cite{munba}. 
We exclude FG-OrIU~\cite{feng2025fg} from our baselines due to its inherent limitation to ViT architectures. Specifically, FG-OrIU relies on parameter-efficient fine-tuning via LoRA, which restricts its applicability to ViT models and prevents straightforward extension to architectures such as VGG and ResNet.

We consider two forgetting scenarios. In \textbf{random data forgetting}, each task $F_t$ consists of 1\% of training data selected uniformly at random, producing forget spans $U_t$ with relatively unstructured overlap. In \textbf{class-wise forgetting}, each $F_t$ consists of all samples from a designated class, producing more structured and concentrated forget spans.

\begin{figure*}[!t]
\begin{center}

    \subfloat[FT]{\includegraphics[width=0.2\linewidth]{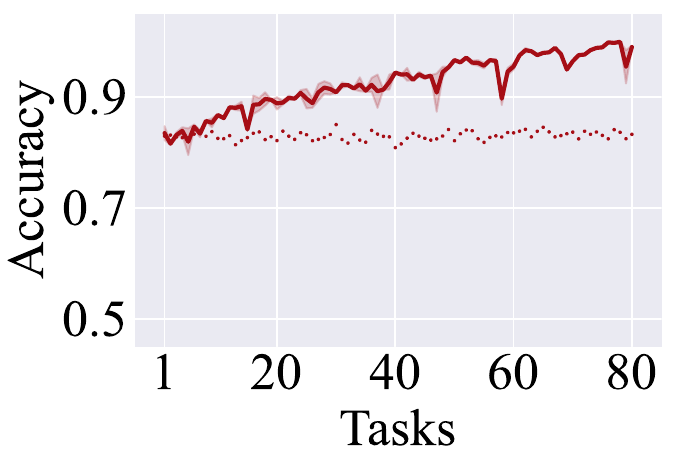}}
    \subfloat[NegGrad+]{\includegraphics[width=0.2\linewidth]{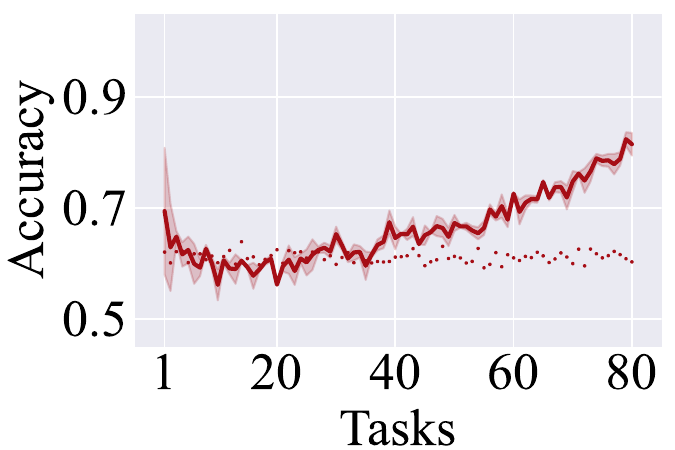}}
    \subfloat[RL]{\includegraphics[width=0.2\linewidth]{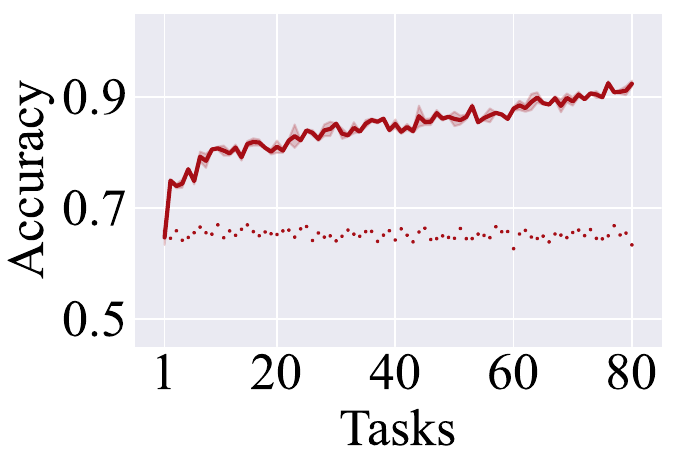}}
    \subfloat[SalUn]{\includegraphics[width=0.2\linewidth]{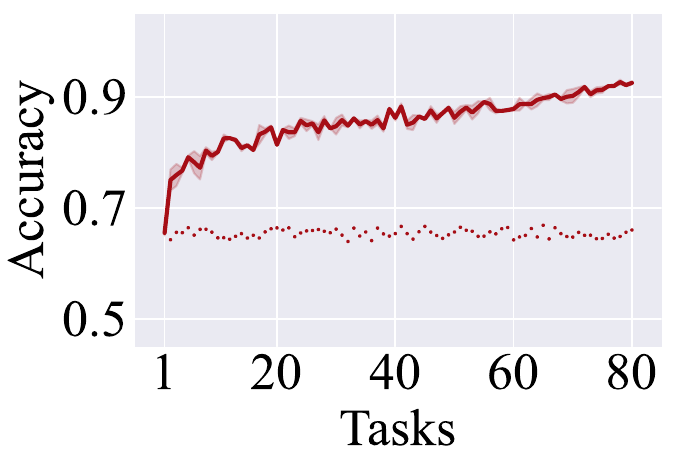}}
    \subfloat[MUNBa]{\includegraphics[width=0.2\linewidth]{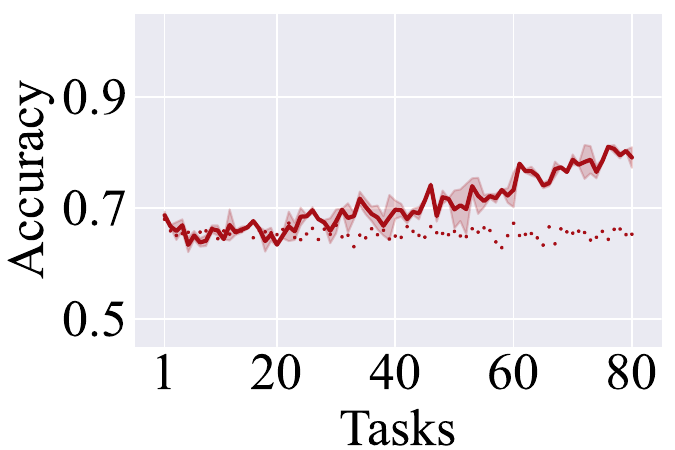}}

\vspace{-2mm}
\caption{Forgetting accuracy of various unlearning methods on Tiny-ImageNet with VGG-16-BN under the random data forgetting scenario. The solid line denotes the forgetting accuracy on each task, while the scatter points represent the forgetting accuracy for single-task unlearning. Lower accuracy indicates better forgetting quality. For all methods, the forgetting accuracy increases as the number of sequential unlearning tasks grows, suggesting a progressive degradation in forgetting quality.}
\label{fig:forget_acc_vgg_random}
\end{center}
\vspace{-4mm}
\end{figure*}

\begin{figure*}[!t]
\begin{center}

    \subfloat[FT]{\includegraphics[width=0.2\linewidth]{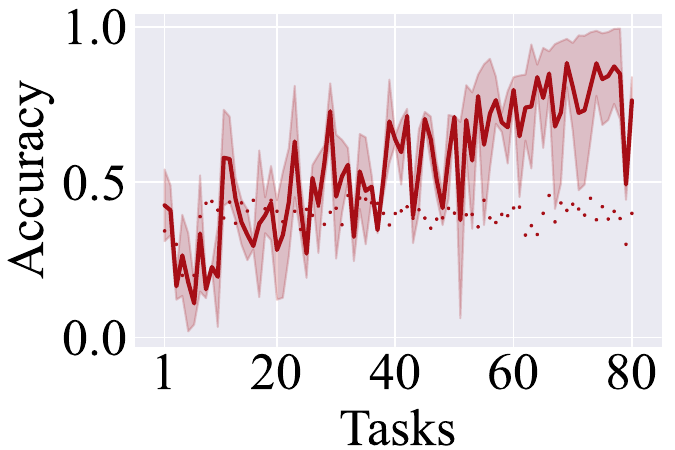}}
    \subfloat[NegGrad+]{\includegraphics[width=0.2\linewidth]{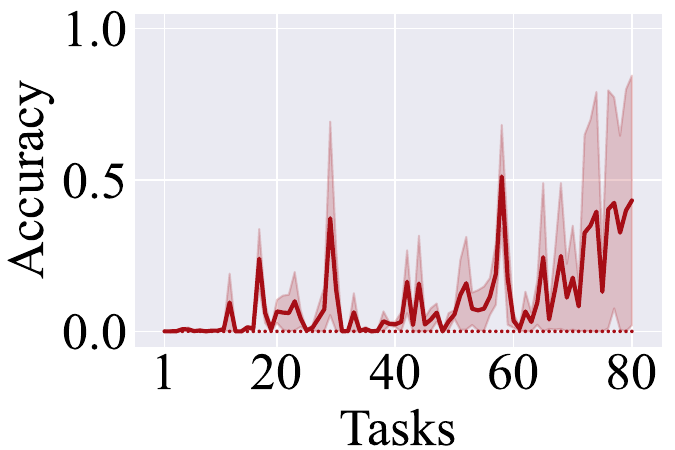}}
    \subfloat[RL]{\includegraphics[width=0.2\linewidth]{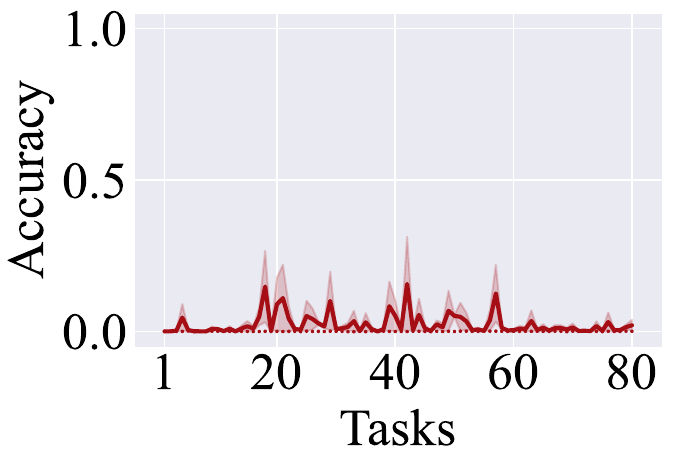}}
    \subfloat[SalUn]{\includegraphics[width=0.2\linewidth]{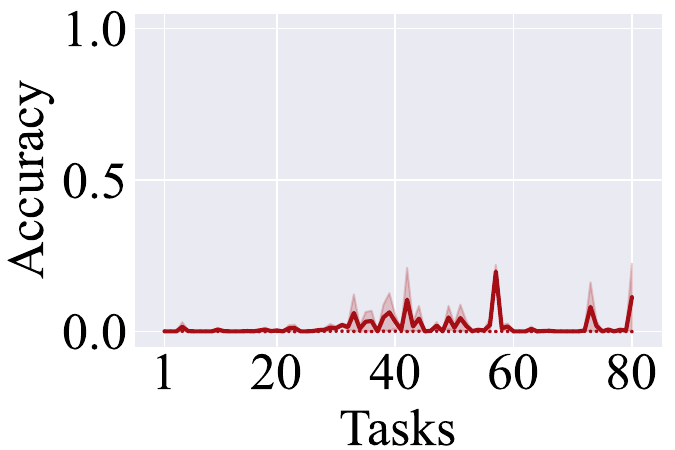}}
    \subfloat[MUNBa]{\includegraphics[width=0.2\linewidth]{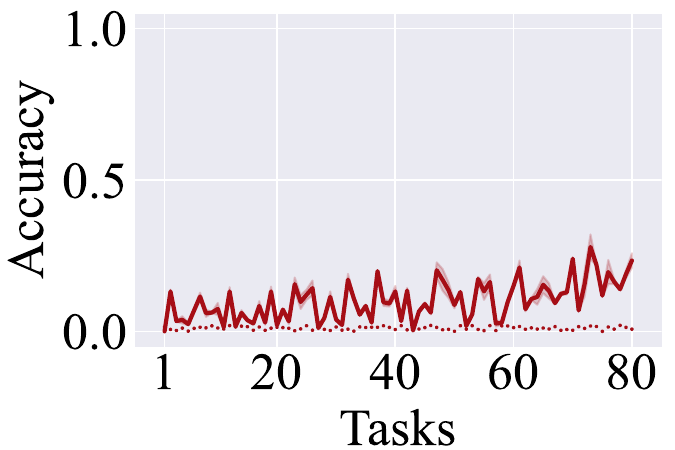}\label{fig:forget_acc_vgg_class_e}}

\vspace{-2mm}
\caption{Forgetting accuracy on Tiny-ImageNet with VGG-16-BN under the class-wise forgetting scenario. FT, NegGrad+ and MUNBa also show diminishing forgetting quality for subsequent tasks. RL and SalUn do not exhibit the clear forward failure in this setting, but we show its backward failure in Fig~\ref{fig:heatmap_class}.}
\label{fig:forget_acc_vgg_class}
\end{center}
\vspace{-8mm}
\end{figure*}

\subsection{Results}

\paragraph{Empirical Evidence for Two Failure Modes.}
The accuracy of the forget data is set as the primary metric for unlearning quality, as it is the most commonly used measure in prior work~\cite{huang2025unified,salun2024,zhao2024makes,munba}. Lower forgetting accuracy indicates higher forgetting quality. Specifically, at each stage $t$, we measure model accuracy on $F_t$ after the unlearning update. Results on retain and test accuracy are reported in Appendix~\ref{app:ex}. The results are average values from $3$ independent trials.

Figs.~\ref{fig:forget_acc_vgg_random} and~\ref{fig:forget_acc_vgg_class} plot forgetting accuracy across sequential tasks for all five methods under both scenarios. As a reference baseline, we also report the forgetting accuracy achieved by each method when applied to a single task in isolation (scatter points in Figs.~\ref{fig:forget_acc_vgg_random} and~\ref{fig:forget_acc_vgg_class}), allowing a direct comparison between single-task unlearning and continual unlearning to highlight the degradation induced by the sequential setting.

\begin{figure*}[!t]
\begin{center}

    \subfloat[FT]{\includegraphics[width=0.2\linewidth]{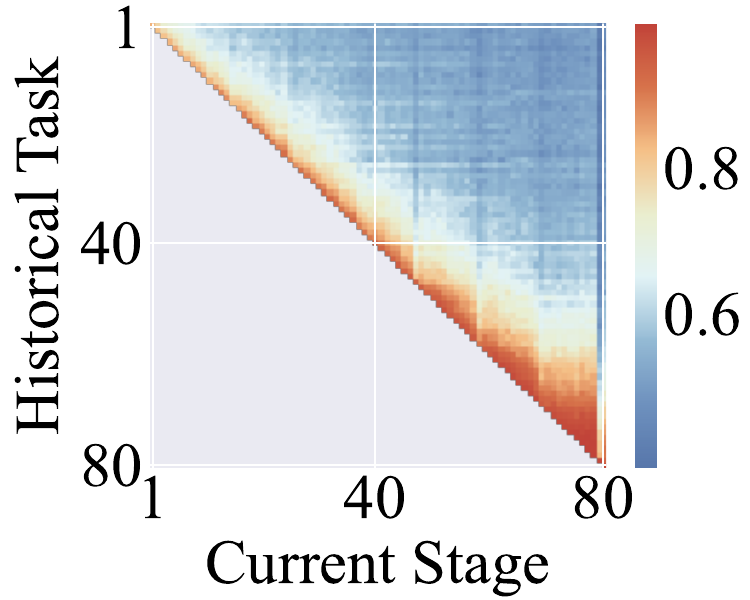}
    \label{fig:heatmap_random_a}}
    \subfloat[NegGrad+]{\includegraphics[width=0.2\linewidth]{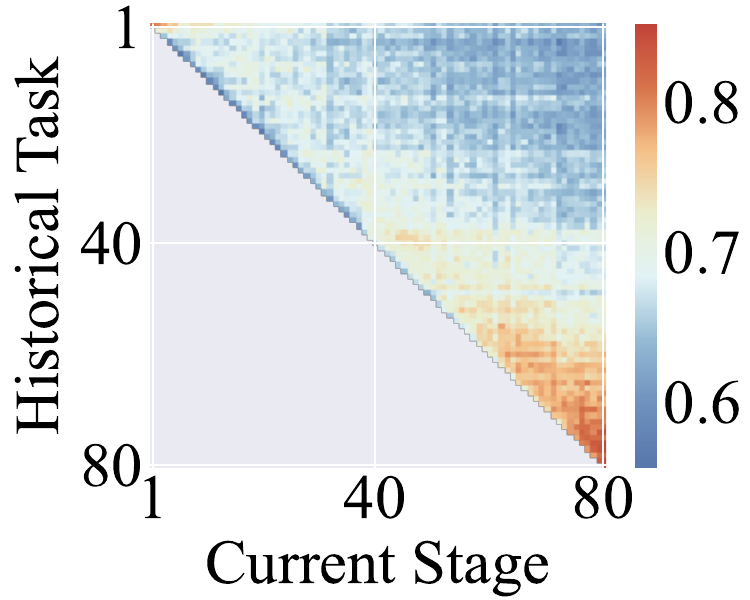}}
    \subfloat[RL]{\includegraphics[width=0.2\linewidth]{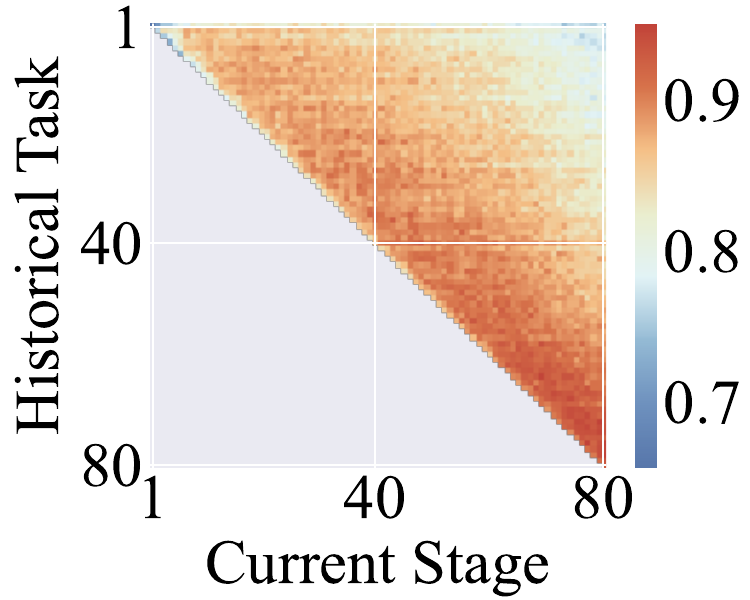}}
    \subfloat[SalUn]{\includegraphics[width=0.2\linewidth]{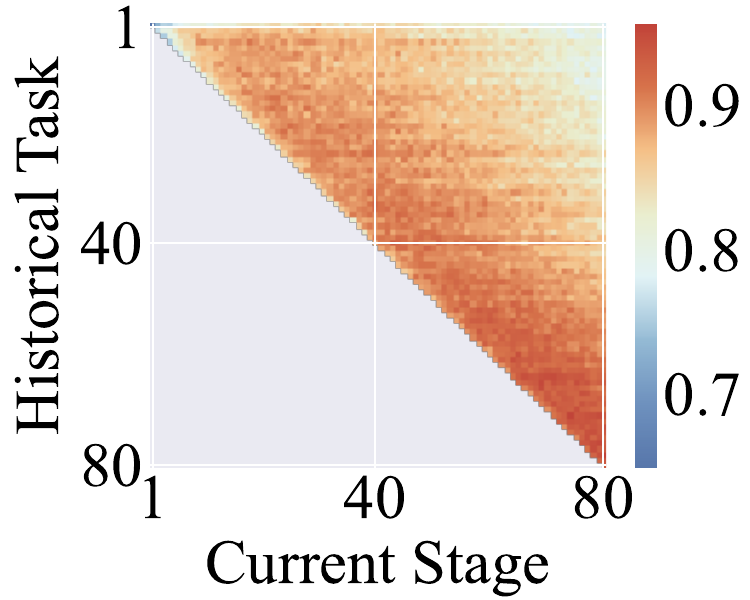}}
    \subfloat[MUNBa]{\includegraphics[width=0.2\linewidth]{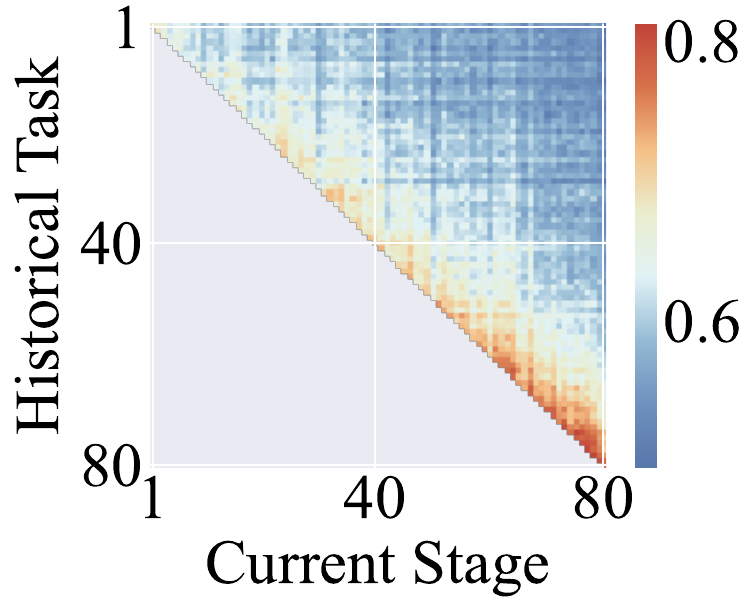}
    \label{fig:heatmap_random_e}}

\vspace{-2mm}
\caption{Forgetting accuracy heatmaps on Tiny-ImageNet with VGG-16-BN for historical tasks on the random data forgetting scenario. Entry $(i, j)$ records forgetting accuracy of task $F_i$ on the model $\bm{\theta}_j$ where $j \ge i$.}
\label{fig:heatmap_random}
\end{center}
\vspace{-6mm}
\end{figure*}
\begin{figure*}[!t]
\begin{center}

    \subfloat[FT]{\includegraphics[width=0.2\linewidth]{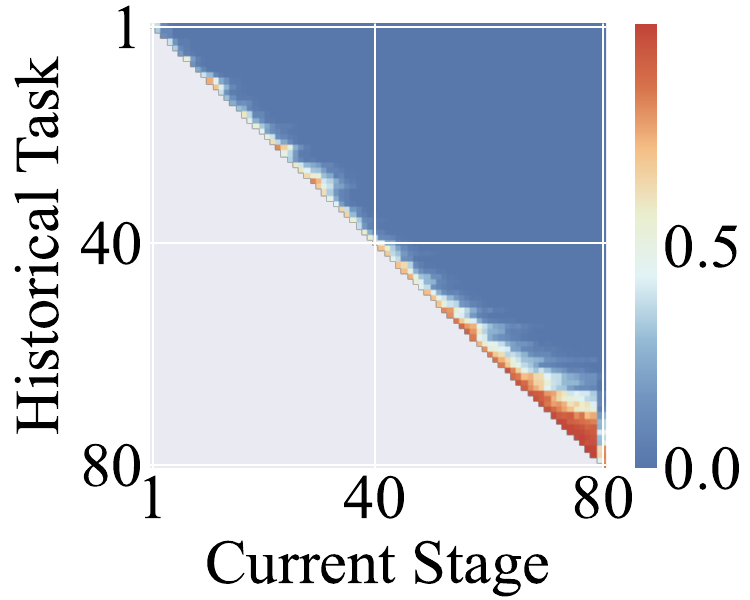}\label{fig:heatmap_class_a}}
    \subfloat[NegGrad+]{\includegraphics[width=0.2\linewidth]{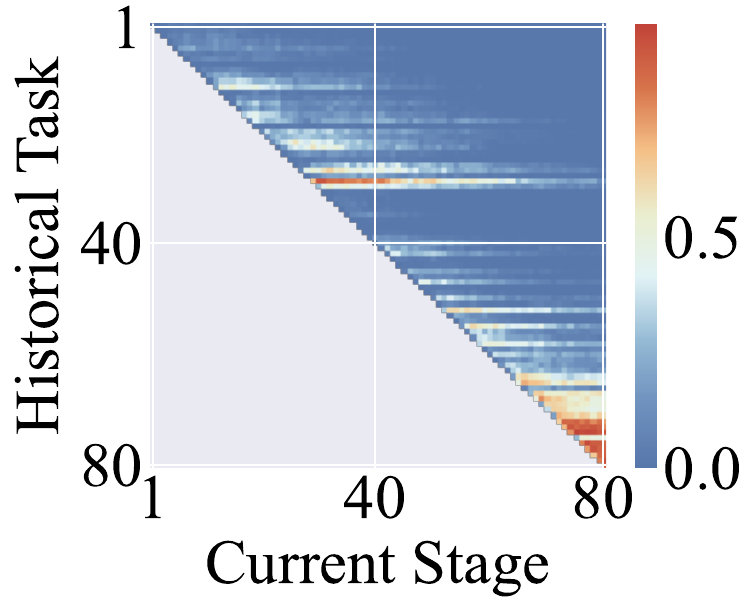}\label{fig:heatmap_class_b}}
    \subfloat[RL]{\includegraphics[width=0.2\linewidth]{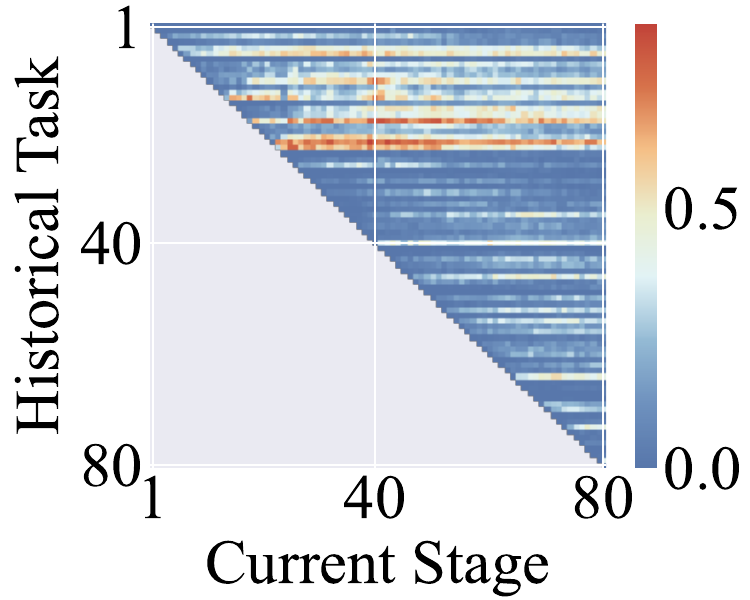}\label{fig:heatmap_class_c}}
    \subfloat[SalUn]{\includegraphics[width=0.2\linewidth]{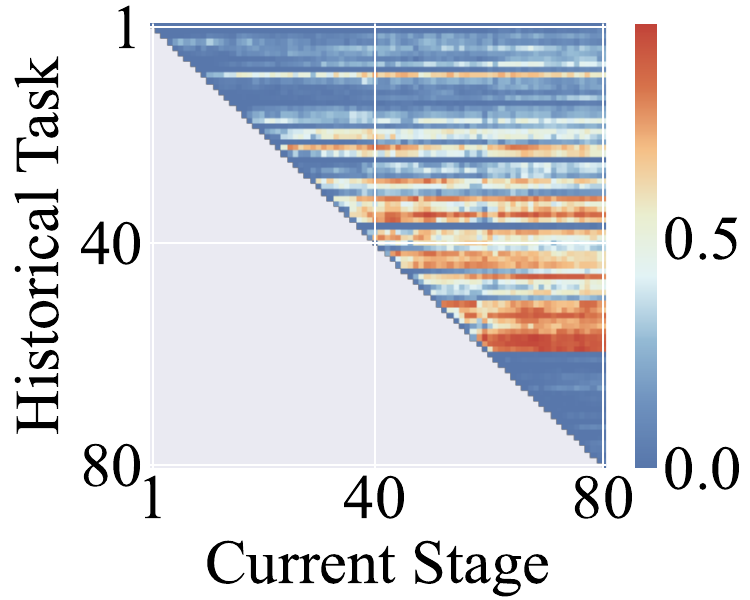}\label{fig:heatmap_class_d}}
    \subfloat[MUNBa]{\includegraphics[width=0.2\linewidth]{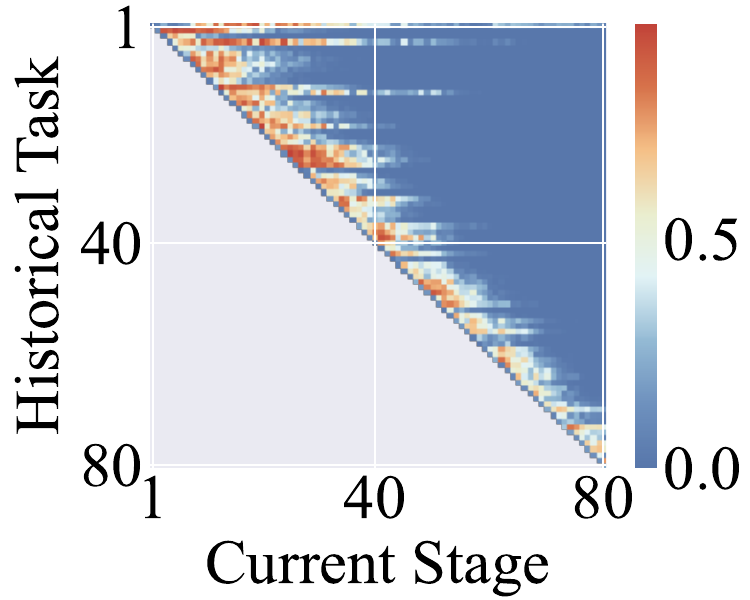}\label{fig:heatmap_class_e}}

\vspace{-2mm}
\caption{\footnotesize{Forgetting accuracy heatmaps on Tiny-ImageNet with VGG-16-BN for historical tasks on the class-wise forgetting scenario. RL and SalUn, which do not exhibit forward failure, show obvious backward failure in this setting.}}
\label{fig:heatmap_class}
\end{center}
\vspace{-8mm}
\end{figure*}
Under \textbf{random data forgetting} as shown in Fig.~\ref{fig:forget_acc_vgg_random}, all five methods exhibit increasing forgetting accuracy as tasks accumulate. This trend empirically validates our forward failure mode of plasticity collapse. On average across methods, forgetting accuracy of the last task in the continual setting is approximately 33.2\% higher than in the single-task baseline, with NegGrad+ showing the most severe degradation. Notably, even MUNBa, one of the most recent unlearning methods, is also not immune to this effect.

Under \textbf{class-wise forgetting} as shown in Fig.~\ref{fig:forget_acc_vgg_class}, the situation is more nuanced. Since class-wise forgetting is an inherently easier task, most existing methods achieve near-0\% forgetting accuracy in the single-task setting. In the continual setting, however, this strong performance no longer holds. FT, NegGrad+, and MUNBa all show clearly increasing forgetting accuracy as tasks accumulate, confirming forward failure mode. RL and SalUn exhibit more moderate increases and do not display a pronounced forward failure under this scenario. However, as we show next, both methods suffer from a backward failure mode, i.e., spontaneous re-memorization.

Figs.~\ref{fig:heatmap_random} and~\ref{fig:heatmap_class} track historical task forgetting accuracy over time, where entry $(i, j)$ records forgetting accuracy of task $F_i$ on the model $\bm{\theta}_j$ where $j \ge i$. The heatmap results are shown for a single random 
seed to facilitate the CO analysis presented later. The diagonal entries, representing current-task forgetting quality, show a consistent blue-to-red gradient as $t$ increases across Figs.~\ref{fig:heatmap_random_a}--\ref{fig:heatmap_random_e} and Figs.~\ref{fig:heatmap_class_a},~\ref{fig:heatmap_class_b},~\ref{fig:heatmap_class_e}, confirming progressive forgetting quality degradation, i.e., \textbf{forward failure}. The off-diagonal entries above the diagonal reveal the \textbf{backward failure} mode. Tasks that were initially well-forgotten (blue on diagonal entries) later transition to red for RL (Fig.~\ref{fig:heatmap_class_c}) and SalUn (Fig.~\ref{fig:heatmap_class_d}) under class-wise forgetting. Taking SalUn as a concrete example (Fig.~\ref{fig:heatmap_class_d}), most of the tasks in the index range 35-60 exhibit pronounced re-memorization, even though these tasks were well forgotten at their current stages. 

Importantly, the two failure modes are not mutually exclusive. MUNBa, for instance, displays both forward (Fig.~\ref{fig:forget_acc_vgg_class_e}) and backward (Fig.~\ref{fig:heatmap_class_e}) failure under class-wise forgetting scenario, demonstrating that a single method can suffer from both forms of plasticity collapse. \textbf{Taken together, these results show that plasticity collapse (both forward and backward failure modes) is a universal phenomenon, manifesting across different unlearning methods and forgetting scenarios.}

\begin{figure*}[t]
\begin{center}

    \subfloat[FT]{\includegraphics[width=0.2\linewidth]{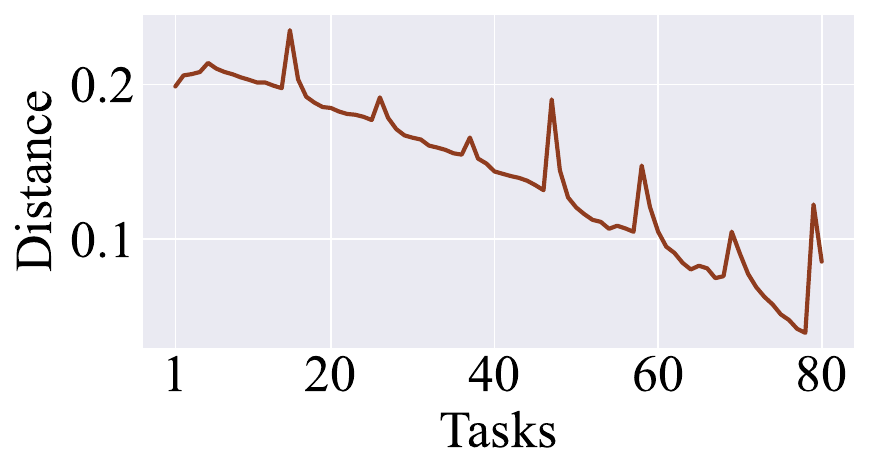}}
    \subfloat[NegGrad+]{\includegraphics[width=0.2\linewidth]{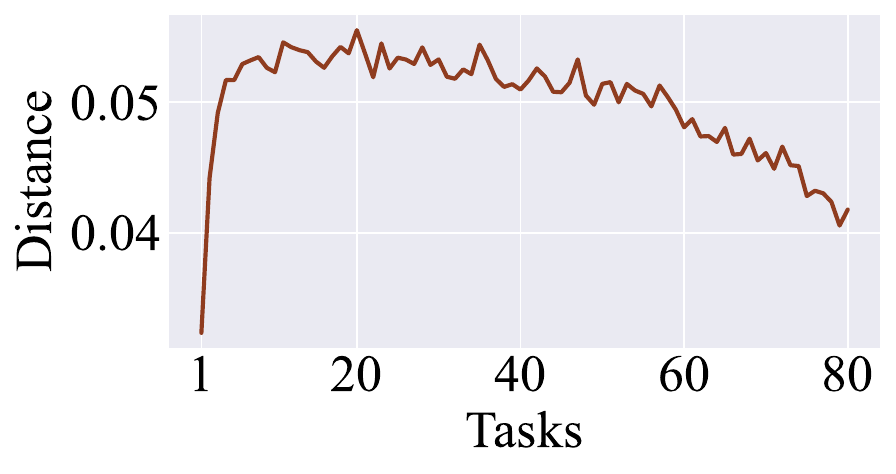}}
    \subfloat[RL]{\includegraphics[width=0.2\linewidth]{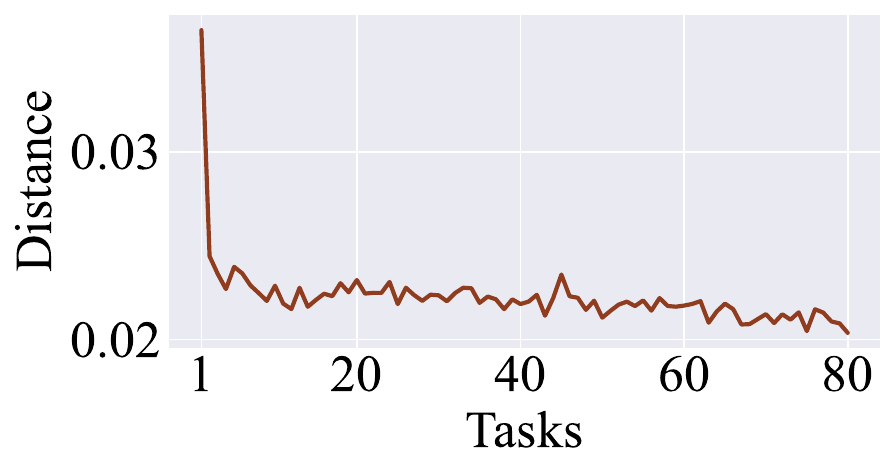}}
    \subfloat[SalUn]{\includegraphics[width=0.2\linewidth]{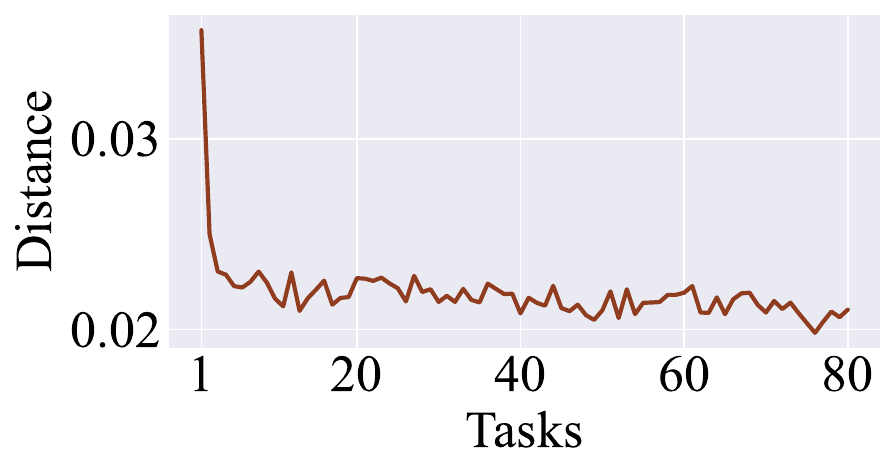}}
    \subfloat[MUNBa]{\includegraphics[width=0.2\linewidth]{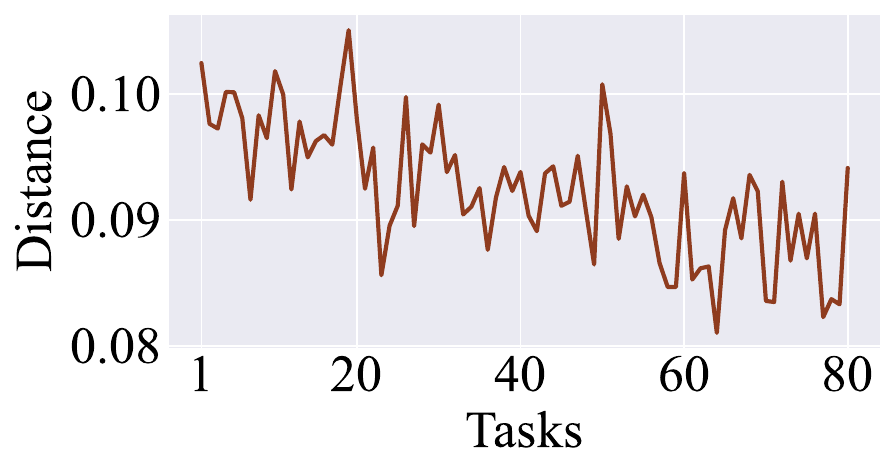}}

\vspace{-2mm}
\caption{L2 norm of parameter differences between consecutive updates on Tiny-ImageNet with VGG-16-BN under the random data forgetting scenario. It shows an obvious decrease in the magnitude of parameter updates as the number of tasks increases, indicating that the model becomes increasingly frozen in the parameter space.}
\label{fig:l2_vgg_random}
\end{center}
\vspace{-8mm}
\end{figure*}

\begin{figure*}[t]
\begin{center}

    \subfloat[FT]{\includegraphics[width=0.2\linewidth]{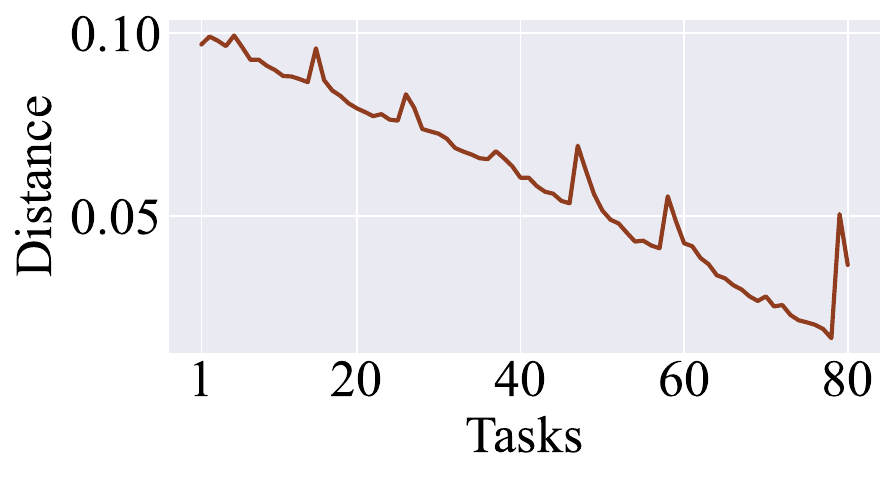}}
    \subfloat[NegGrad+]{\includegraphics[width=0.2\linewidth]{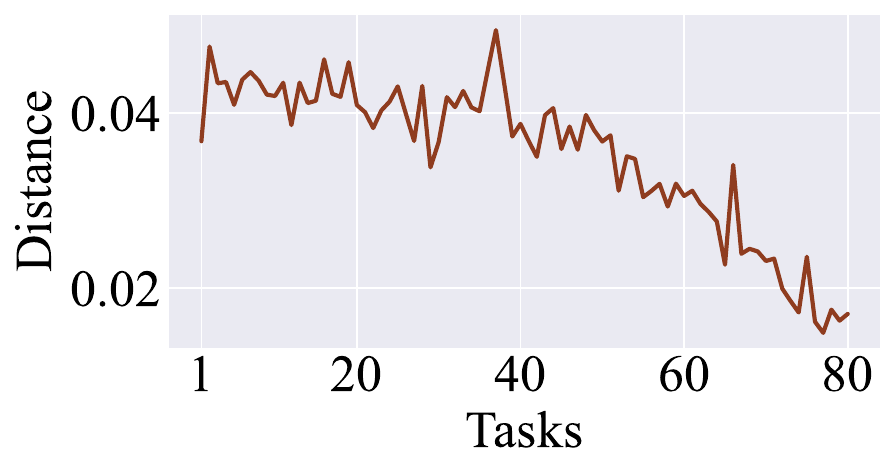}}
    \subfloat[RL]{\includegraphics[width=0.2\linewidth]{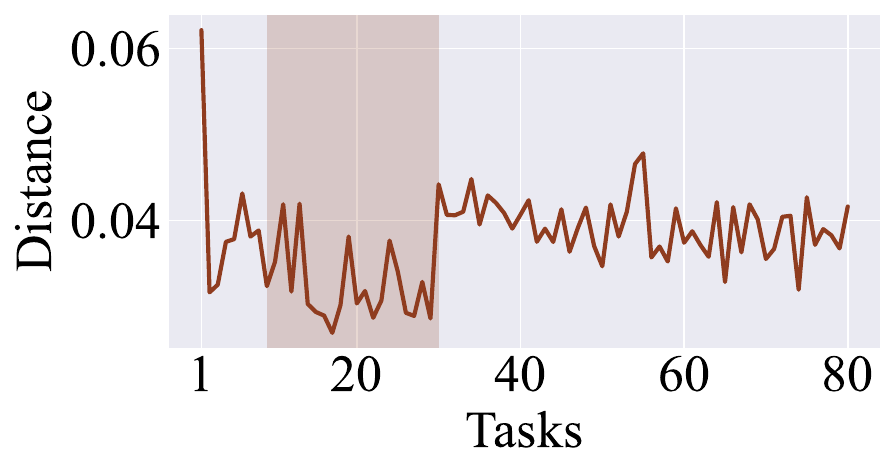}\label{fig:l2_vgg_class_c}}
    \subfloat[SalUn]{\includegraphics[width=0.2\linewidth]{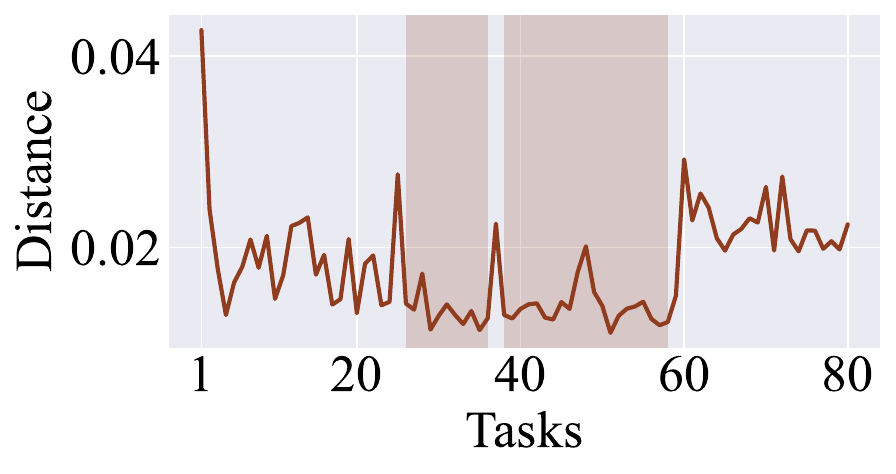}\label{fig:l2_vgg_class_d}}
    \subfloat[MUNBa]{\includegraphics[width=0.2\linewidth]{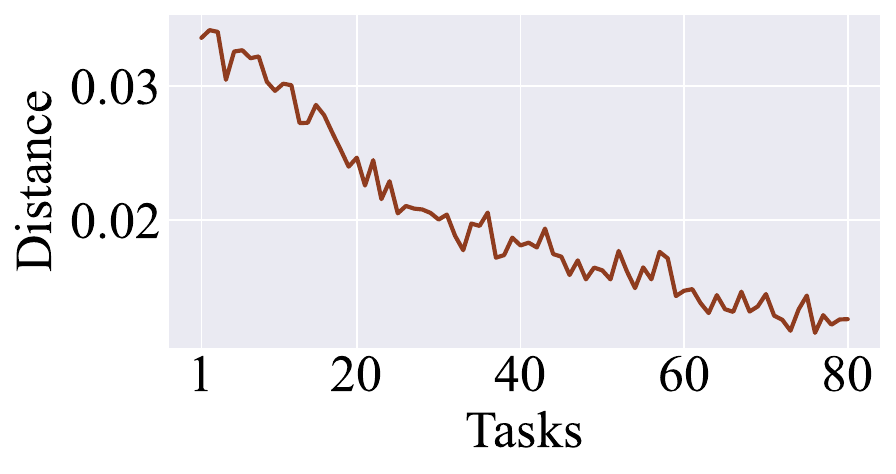}\label{fig:l2_vgg_class_e}}

\vspace{-2mm}
\caption{L2 norm of parameter differences between consecutive updates on Tiny-ImageNet with VGG-16-BN under the class-wise forgetting scenario. For RL and SalUn, which exhibit backward failure, the tasks where re-memorization occurs show lower L2 norm values compared to other tasks.}
\label{fig:l2_vgg_class}
\end{center}
\vspace{-8mm}
\end{figure*}

\paragraph{Update magnitude diminishes over time.}
Theorems~\ref{thm:operator_growth} and~\ref{thm:expanding_mode_general} predict that as $W$ saturates, the suppression mechanism increasingly drives parameter updates away from $W$. This phenomenon is expected to manifest as a systematic reduction in the update magnitude, defined as $\mathbf{u}_t := \bm{\theta}_{t} - \bm{\theta}_{t-1}$. Figs.~\ref{fig:l2_vgg_random} and~\ref{fig:l2_vgg_class} empirically corroborate this. $\|\mathbf{u}_t\|$ declines consistently in scenarios exhibiting forward failure, with the most pronounced decay aligning precisely with the stages where forgetting quality degrades most severely. Furthermore, for cases characterized by backward failure (Figs.~\ref{fig:l2_vgg_class_c}-\ref{fig:l2_vgg_class_d}), tasks exhibiting the re-memorization phenomenon (shaded region) also display significantly lower $L_2$ norms compared to others. \textbf{These observations provide direct evidence that the model becomes increasingly frozen in the parameter space, progressively losing its capacity to generate the meaningful updates required to fulfill new unlearning requests.}

\begin{figure*}[t]
\begin{center}
    \subfloat[FT]{\includegraphics[width=0.2\linewidth]{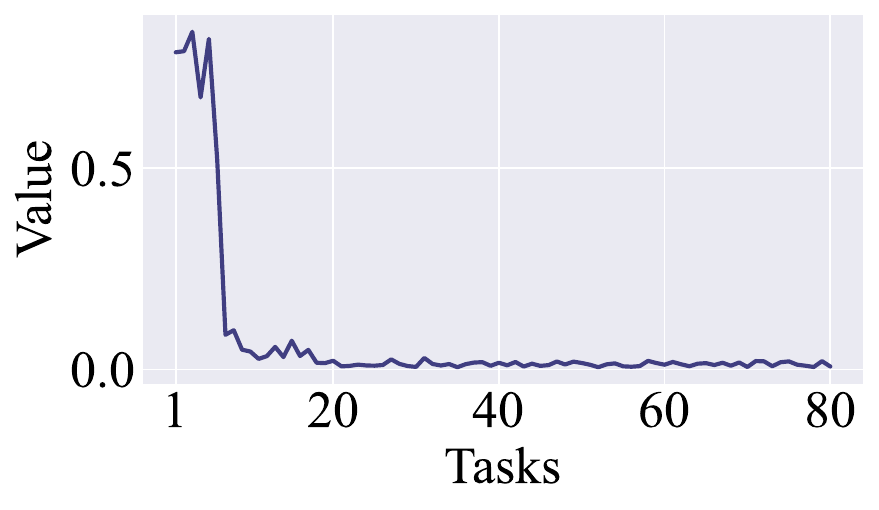}}
    \subfloat[NegGrad+]{\includegraphics[width=0.2\linewidth]{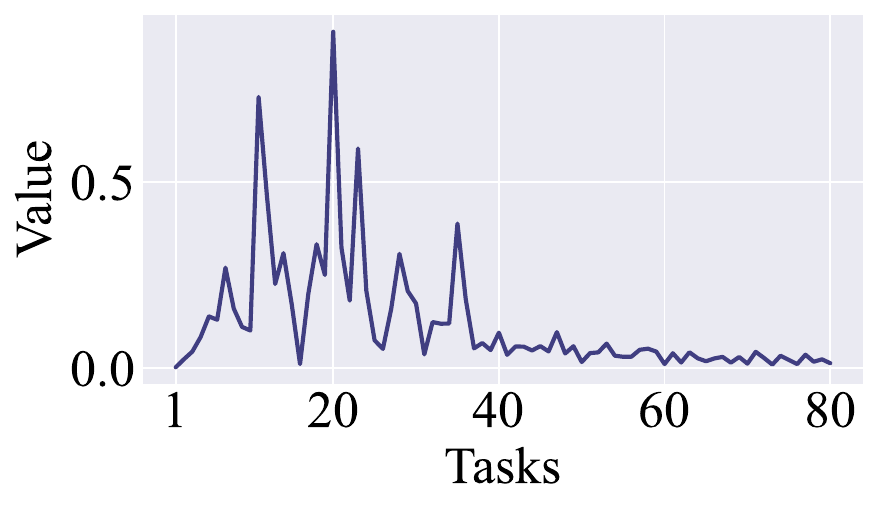}}
    \subfloat[RL]{\includegraphics[width=0.2\linewidth]{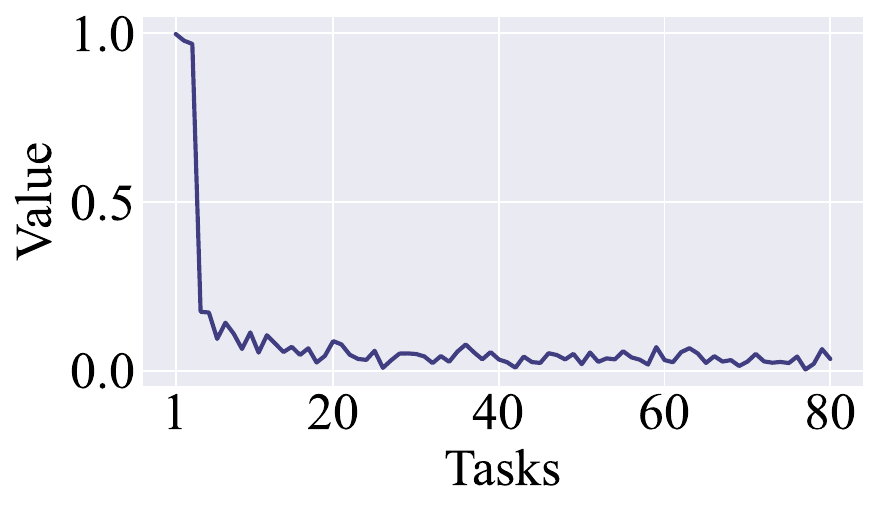}}
    \subfloat[SalUn]{\includegraphics[width=0.2\linewidth]{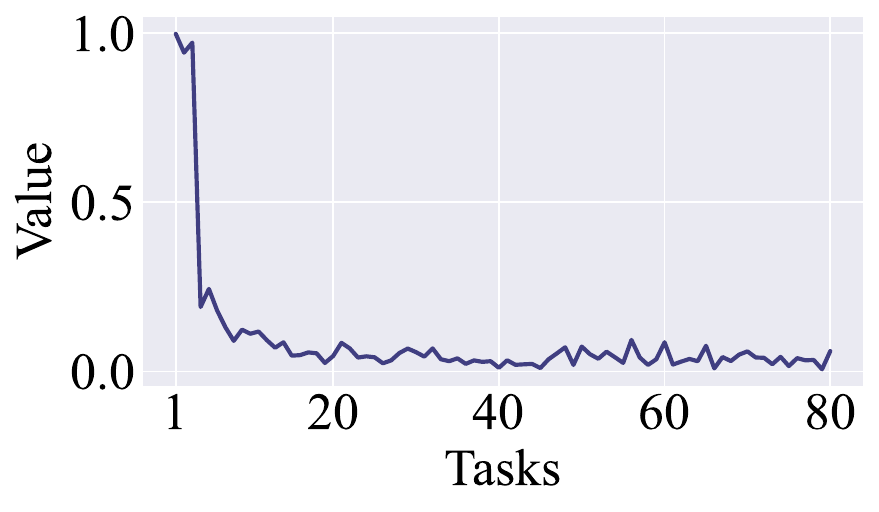}}
    \subfloat[MUNBa]{\includegraphics[width=0.2\linewidth]{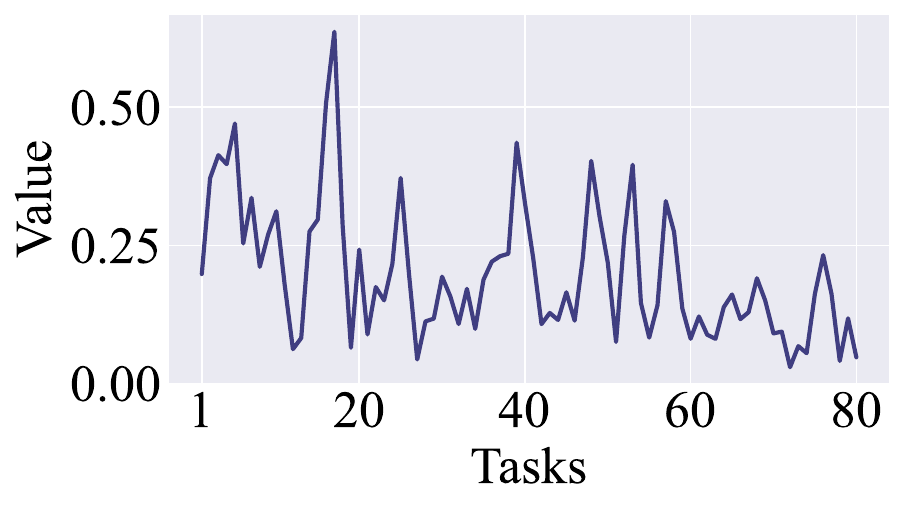}}

\vspace{-2mm}
\caption{Energy Ratio for forward failure diagnosis on Tiny-ImageNet with VGG-16-BN under the random data forgetting scenario.}
\label{fig:vgg_random_energy}
\end{center}
\vspace{-6mm}
\end{figure*}
\begin{figure*}[t]
\begin{center}
    \subfloat[FT]{\includegraphics[width=0.2\linewidth]{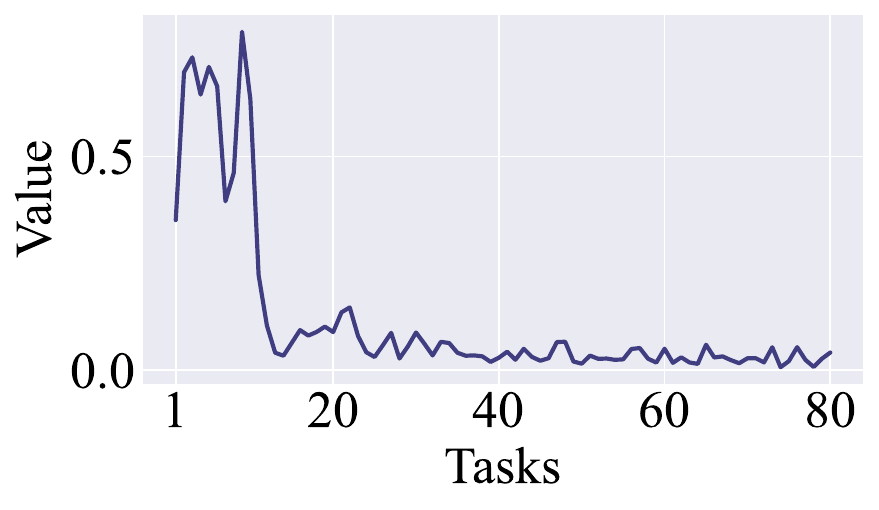}}
    \subfloat[NegGrad+]{\includegraphics[width=0.2\linewidth]{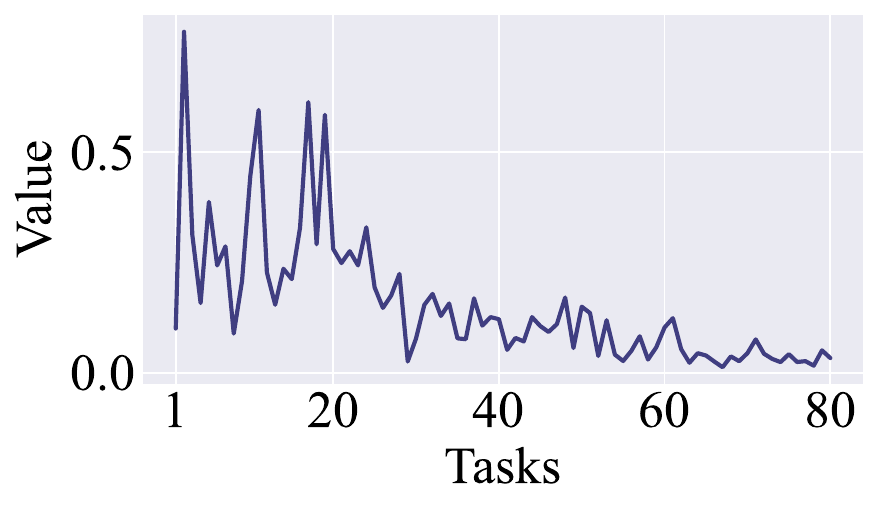}}
    \subfloat[MUNBa]{\includegraphics[width=0.2\linewidth]{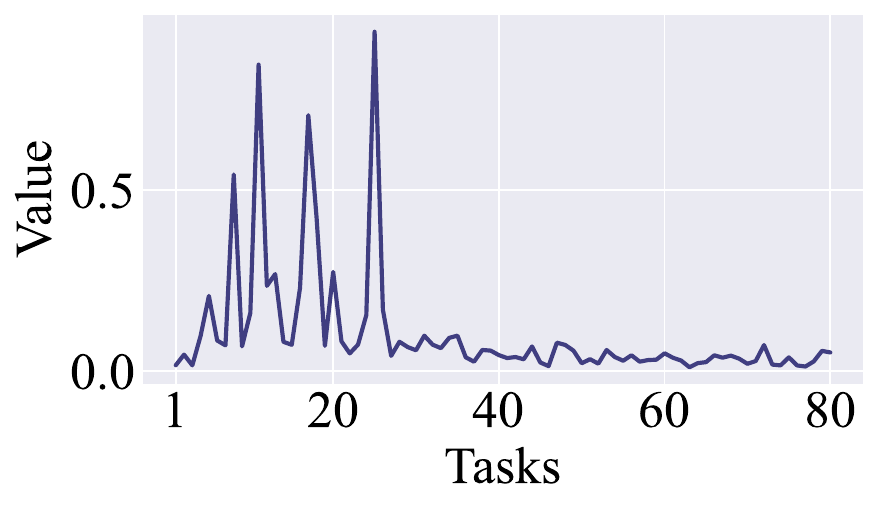}}

\vspace{-2mm}
\caption{Energy Ratio for forward failure diagnosis on Tiny-ImageNet with VGG-16-BN under the class-wise forgetting scenario.}
\label{fig:vgg_class_energy}
\end{center}
\vspace{-8mm}
\end{figure*}

\paragraph{Diagnostic Quantities for Plasticity Collapse.}
We now apply $\mathrm{ER}_t$ and $\mathrm{CO}_t$ in Section~\ref{subsec:metrics} to directly validate the theoretical predictions. The subspace $W$ is estimated from $\{\mathbf{u}_1, \dots, \mathbf{u}_T\}$ via PCA of the stacked update matrix $\mathbf{U}$, with $\mathbf{Q}$ taken as the leading $r = 1$ right singular vector for visualization clarity. Patterns are consistent for $r > 1$.

Figs.~\ref{fig:vgg_random_energy} and~\ref{fig:vgg_class_energy} plot $\mathrm{ER}_t$ across sequential tasks. In the cases with \textbf{forward failure}, $\mathrm{ER}_t$ decays progressively toward zero, confirming the suppression mechanism: as $\mathbf{A}_{t:s}$ accumulates exponential growth along $W$, it is forced to route updates away from $W$ to maintain stability. The decay of $\mathrm{ER}_t$ closely tracks the degradation of forgetting accuracy, providing quantitative evidence that geometric saturation of $W$ is the proximate cause of forward failure.

\begin{wrapfigure}{r}{0.45\linewidth}
\vspace{-5mm}

\centering
\subfloat[RL]{%
    \includegraphics[width=0.48\linewidth]{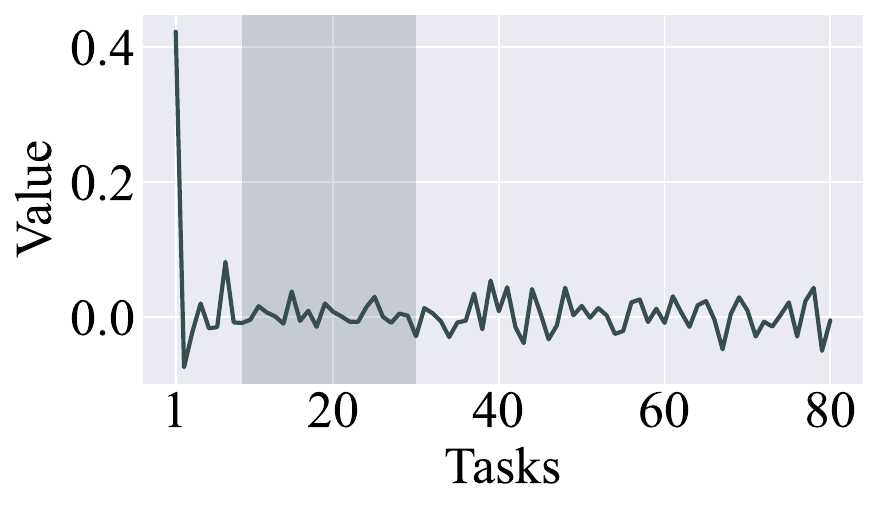}
}
\subfloat[SalUn]{%
    \includegraphics[width=0.48\linewidth]{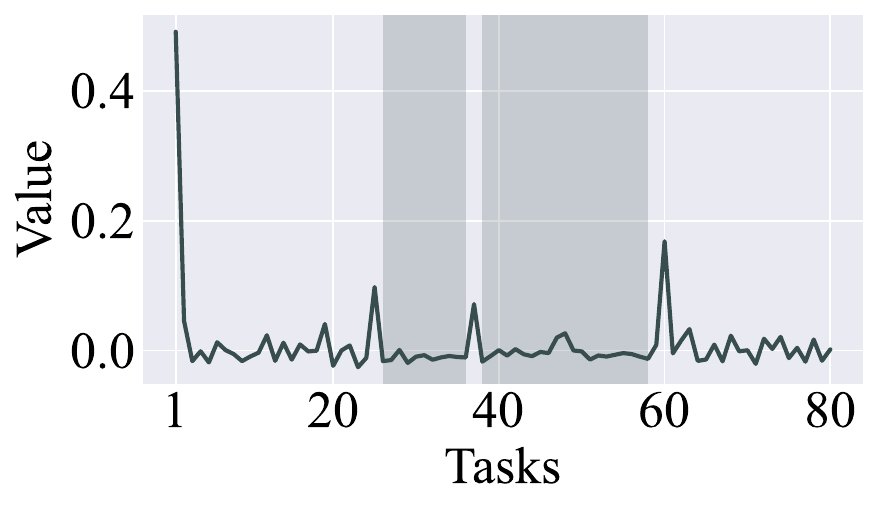}
}

\vspace{-2mm}
\caption{Coefficient for backward failure diagnosis on Tiny-ImageNet with VGG-16-BN under the class-wise forgetting scenario.}
\label{fig:vgg_class_coeffs}

\vspace{-6mm}
\end{wrapfigure}
Fig.~\ref{fig:vgg_class_coeffs} plots $\mathrm{CO}_t$ for methods exhibiting re-memorization. Tasks undergoing re-memorization show $\mathrm{CO}_t \approx 0$. Their updates fail to engage the principal forgetting direction, indicating the model cannot make progress along $W$. Tasks surrounding re-memorization events display pronounced sign oscillations in $\mathrm{CO}_t$, alternating between large positive and negative values. These oscillations directly reflect antagonistic interference: forgetting a new task requires injecting updates along $W$ that partially undo prior forgetting operations, triggering re-memorization of earlier tasks.

Together, the decay of $\mathrm{ER}_t$ and the oscillatory behavior of $\mathrm{CO}_t$ provide consistent, theory-grounded evidence for both failure modes of plasticity collapse, closing the loop between the theoretical predictions of Theorems~\ref{thm:operator_growth}--\ref{thm:expanding_mode_general} and the empirical observations.

\section{Conclusion}
\label{sec:conclusion}
We identify loss of unlearning plasticity as a fundamental challenge in continual machine unlearning. Through theoretical analysis and empirical validation, we demonstrate that models undergoing sequential unlearning operations exhibit deteriorating forgetting quality and spontaneous re-memorization. These findings reveal that current unlearning methods, designed for single-shot scenarios, face severe limitations in realistic continual settings, necessitating fundamentally new plasticity-preserving approaches for practical deployment.

Our theoretical analysis provides actionable guidance for plasticity-preserving unlearning methods in the future. Specifically, the Energy Ratio and Projection Coefficient introduced in Section~\ref{subsec:metrics} can serve as lightweight online monitors for detecting impending collapse, and can be naturally incorporated as regularization signals into existing gradient-based unlearning objectives. We hope this work serves as both a cautionary analysis and a constructive foundation for the next generation of continual unlearning algorithms.



\section*{Acknowledgments}
This work was supported in part by the National Science Foundation under grants IIS-2246157, FMitF-2319243, and the Department of Energy under grant DE-CR0000042. The project was also supported by computational resources provided by the NSF ACCESS and Argonne National Lab.

%
%
\bibliographystyle{splncs04}
\bibliography{main}

\newpage
\appendix
\onecolumn


\newpage
\appendix
\onecolumn
\section*{Appendix}

\section{Unlearning Methods}
The details of the baselines in our work are as follows,
\begin{itemize}
    \item \textbf{Finetuning (FT)}~\cite{finetune2021} fine-tunes the pre-trained model $\theta_o$ on the retain dataset.

    \item \textbf{NegGrad+}~\cite{neggrad} addresses Gradient Ascent’s issue by combining fine-tuning on $\train_r$ and gradient ascent on $\train_f$.

    \item \textbf{RandomLabeling (RL)}~\cite{Amnesiac2021} fine-tunes the model on the forgetting dataset $\train_f$ using randomly assigned labels to enforce forgetting.

    \item \textbf{SalUn}~\cite{salun2024} performs unlearning by optimizing only the salient parameters of the model identified from the randomly labeled forgetting data.

    \item \textbf{MUNBa}~\cite{munba} poses machine unlearning as a cooperative bargaining game between forgetting and preservation players, using a Nash bargaining closed-form solution to resolve gradient conflicts and achieve an optimal Pareto stationary point.

\end{itemize}

\section{Proof for Theorem~\ref{thm:operator_growth}}
\label{app:proof_1}
\begin{proof}
\textbf{Step 1 (one-step bound).}
Fix $j \in \mathcal{T}$ and $\mathbf{w} \in W$. Since $\mathbf{M}_j \succeq \mathbf{0}$, we have $\|\mathbf{M}_j \mathbf{w}\|^2 \ge 0$. By (\textbf{B2}),
\[
    \|(\mathbf{I}+\mathbf{M}_j)\mathbf{w}\|^2
    = \|\mathbf{w}\|^2 + 2\mathbf{w}^\top \mathbf{M}_j \mathbf{w} + \|\mathbf{M}_j \mathbf{w}\|^2
    \ge \|\mathbf{w}\|^2 + 2\rho\|\mathbf{w}\|^2
    \ge (1+\rho)^2\|\mathbf{w}\|^2,
\]
so $\|(\mathbf{I}+\mathbf{M}_j)\mathbf{w}\| \ge (1+\rho)\|\mathbf{w}\|$.

\textbf{Step 2 (iteration).}
Define $\mathbf{w}_s := \mathbf{v}$ and $\mathbf{w}_{j+1} := (\mathbf{I}+\mathbf{M}_j)\mathbf{w}_j$ for $j = s, \dots, t$.
By (\textbf{B1}), $\mathbf{M}_j \mathbf{w}_j \in W$ whenever $\mathbf{w}_j \in W$, so $\mathbf{w}_{j+1} \in W$ by closure. Applying the one-step bound at each $j \in \mathcal{T}$ yields $\|\mathbf{w}_{t+1}\| \ge (1+\rho)^{|\mathcal{T}|}\|\mathbf{v}\|$. Since $\mathbf{w}_{t+1} = \mathbf{A}_{t:s}\,\mathbf{v}$, this gives Eq.~\ref{eq:op_growth}.
\end{proof}

\section{Proof for Theorem~\ref{thm:NTK}}
\label{app:proof_3}

\begin{proof}
Under the NTK linearization, $\mathbf{M}^F_j$ and $\mathbf{M}^R_j$ are symmetric positive semidefinite by construction. By (\textbf{N2}),
\[
    \mathbf{A}_j|_W = \mathbf{I}_W + (\mathbf{M}^F_j - \lambda \mathbf{M}^R_j)|_W \succeq (1+\rho)\mathbf{I}_W.
\]
Hence the smallest singular value of $\mathbf{A}_j|_W$ is at least $1+\rho$, which implies
\[
    \|\mathbf{A}_j \mathbf{w}\| \ge (1+\rho)\|\mathbf{w}\| \qquad \forall\, \mathbf{w} \in W.
\]
By invariance (\textbf{N1}), all intermediate iterates remain in $W$, so repeated application of the one-step bound yields Eq.~\ref{eq:ntk_growth}. The closed-form expression Eq.~\ref{eq:ntk_closed_form} follows by unrolling the affine recursion, identical to the proof of Lemma~\ref{lem:closed_form}.
\end{proof}

\section{Additional Experiments}
\label{app:ex}

\subsection{Accuracy Results}
\begin{figure*}[!t]
\begin{center}

    \subfloat[FT]{\includegraphics[width=0.2\linewidth]{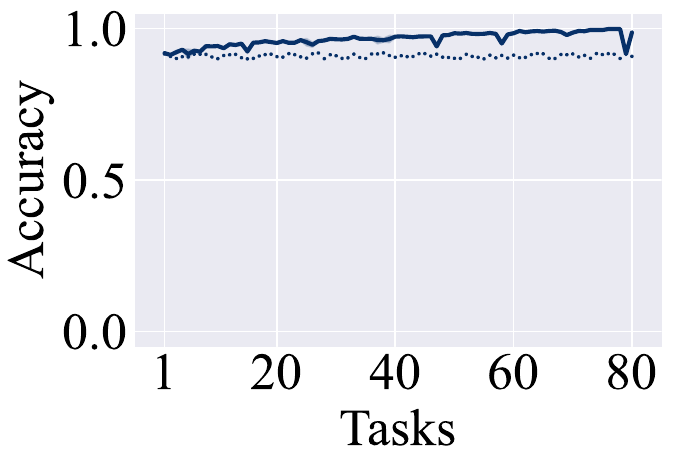}}
    \subfloat[NegGrad+]{\includegraphics[width=0.2\linewidth]{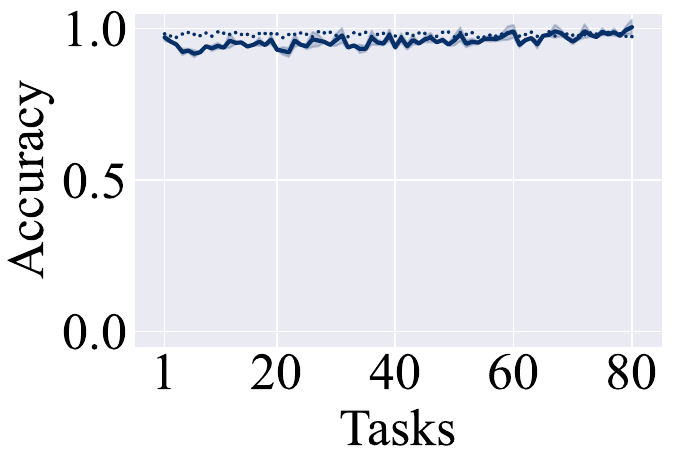}}
    \subfloat[RL]{\includegraphics[width=0.2\linewidth]{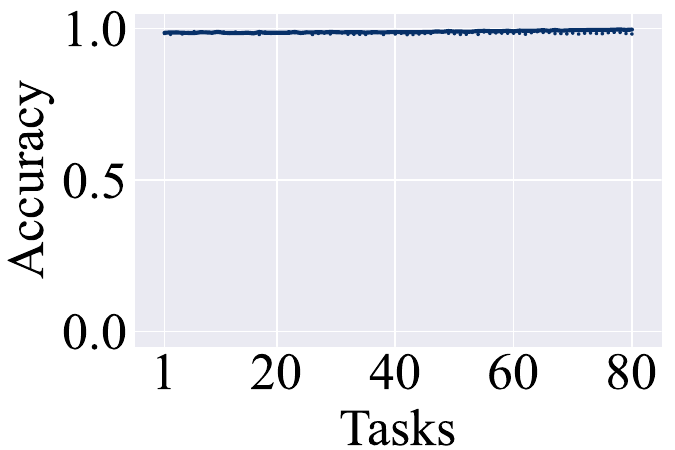}}
    \subfloat[SalUn]{\includegraphics[width=0.2\linewidth]{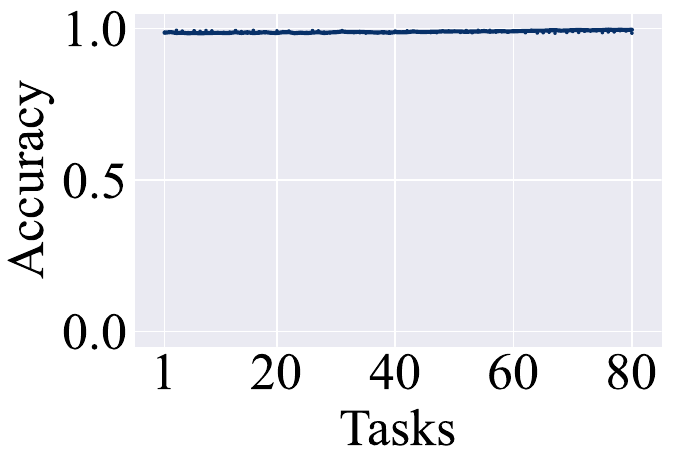}}
    \subfloat[MUNBa]{\includegraphics[width=0.2\linewidth]{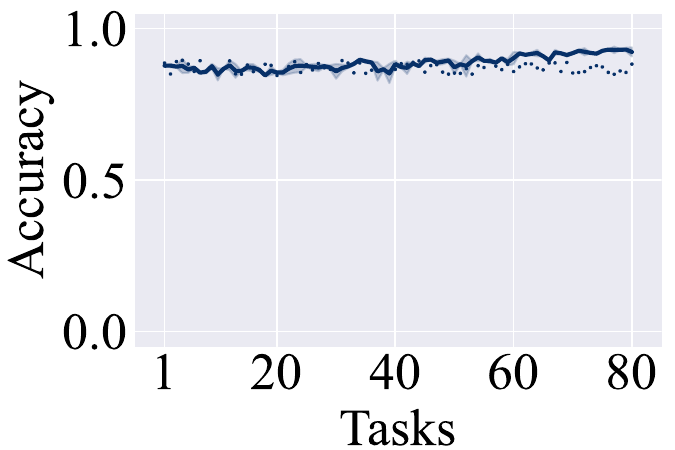}}

\vspace{-2mm}
\caption{Retain accuracy of various unlearning methods on Tiny-ImageNet with VGG-16-BN under the random data forgetting scenario.}
\label{fig:retain_acc_vgg_random}
\end{center}
\vspace{-6mm}
\end{figure*}

\begin{figure*}[!t]
\begin{center}

    \subfloat[FT]{\includegraphics[width=0.2\linewidth]{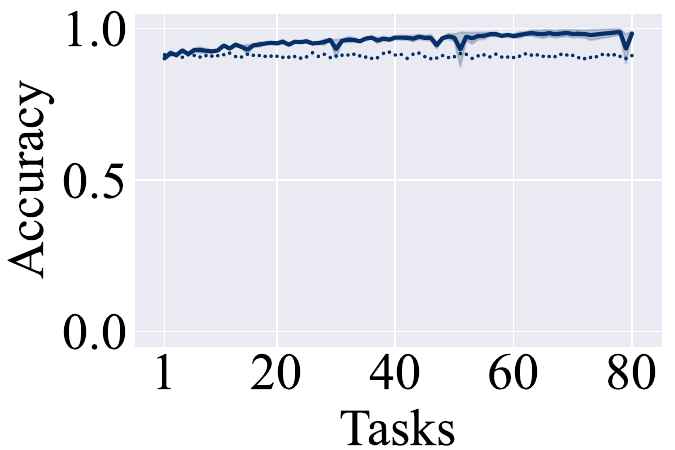}}
    \subfloat[NegGrad+]{\includegraphics[width=0.2\linewidth]{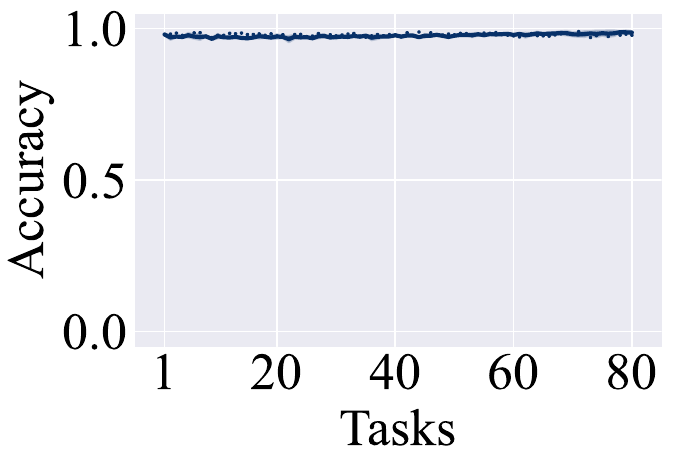}}
    \subfloat[RL]{\includegraphics[width=0.2\linewidth]{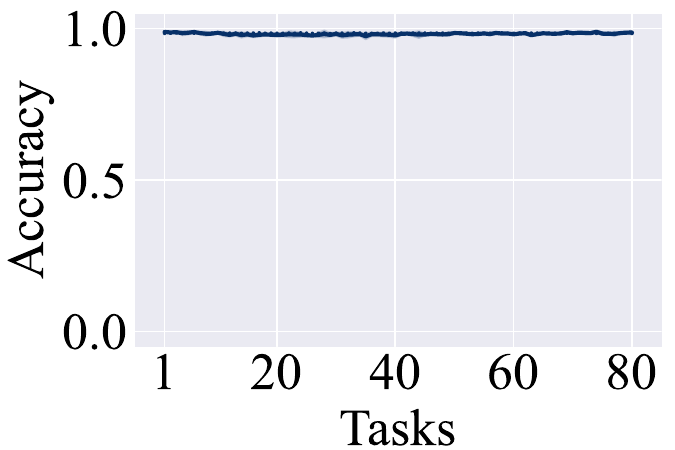}}
    \subfloat[SalUn]{\includegraphics[width=0.2\linewidth]{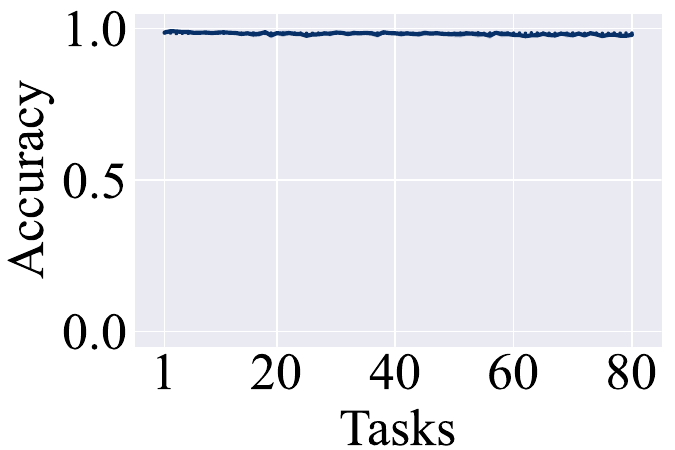}}
    \subfloat[MUNBa]{\includegraphics[width=0.2\linewidth]{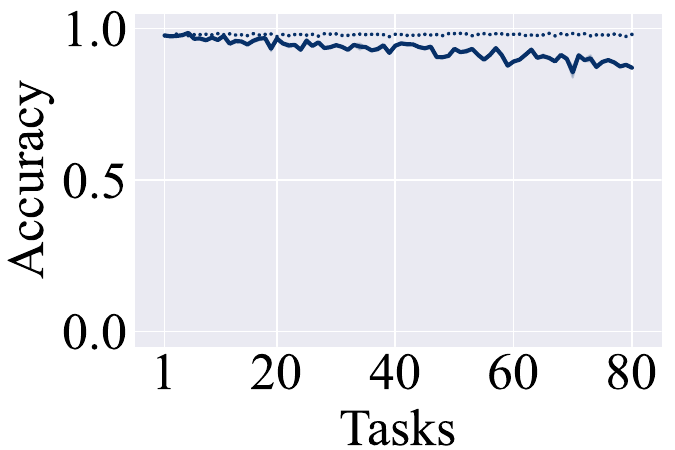}}

\vspace{-2mm}
\caption{Retain accuracy on Tiny-ImageNet with VGG-16-BN under the class-wise forgetting scenario.}
\label{fig:retain_acc_vgg_class}
\end{center}
\vspace{-6mm}
\end{figure*}
\begin{figure*}[!t]
\begin{center}

    \subfloat[FT]{\includegraphics[width=0.2\linewidth]{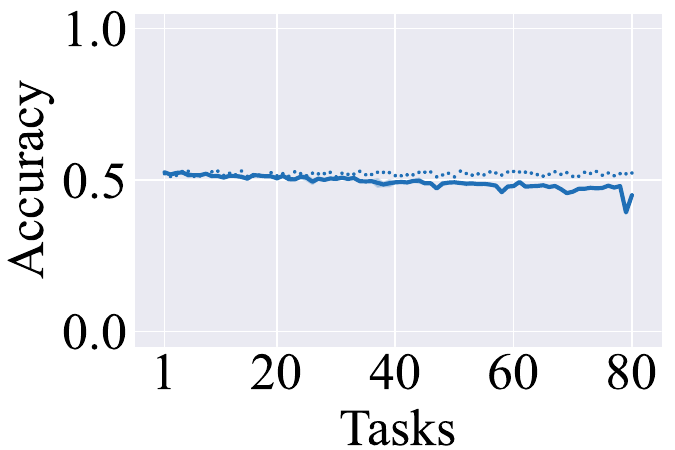}}
    \subfloat[NegGrad+]{\includegraphics[width=0.2\linewidth]{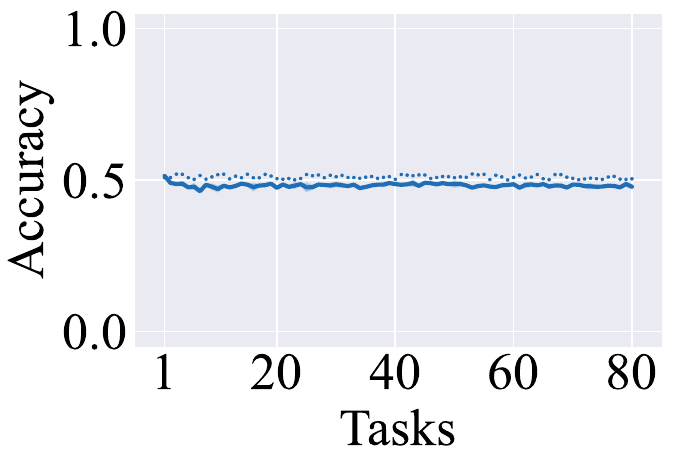}}
    \subfloat[RL]{\includegraphics[width=0.2\linewidth]{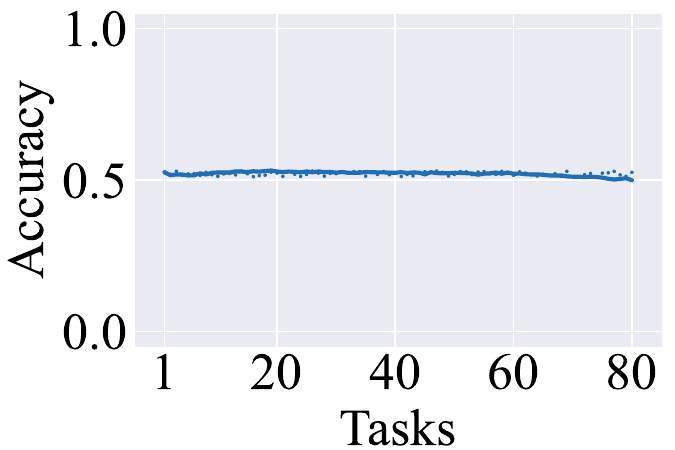}}
    \subfloat[SalUn]{\includegraphics[width=0.2\linewidth]{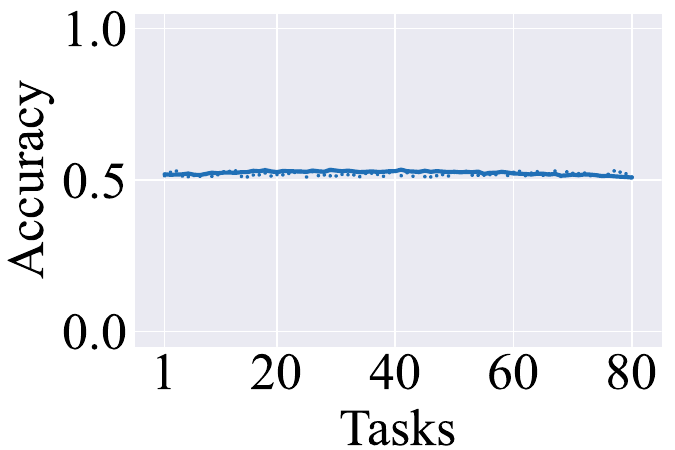}}
    \subfloat[MUNBa]{\includegraphics[width=0.2\linewidth]{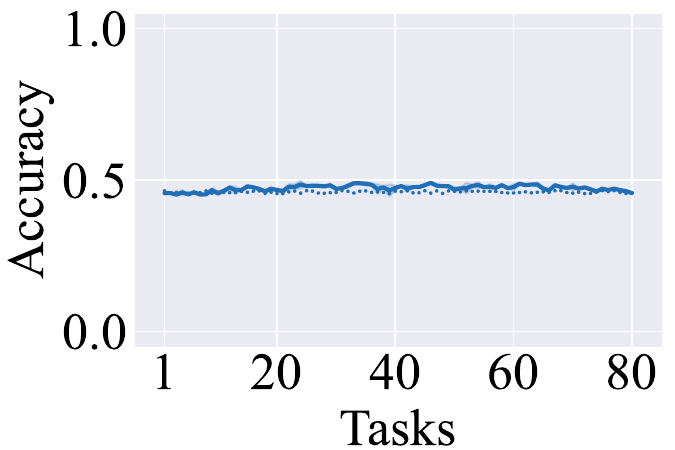}}

\vspace{-2mm}
\caption{Test accuracy of various unlearning methods on Tiny-ImageNet with VGG-16-BN under the random data forgetting scenario.}
\label{fig:test_acc_vgg_random}
\end{center}
\vspace{-6mm}
\end{figure*}

\begin{figure*}[!t]
\begin{center}

    \subfloat[FT]{\includegraphics[width=0.2\linewidth]{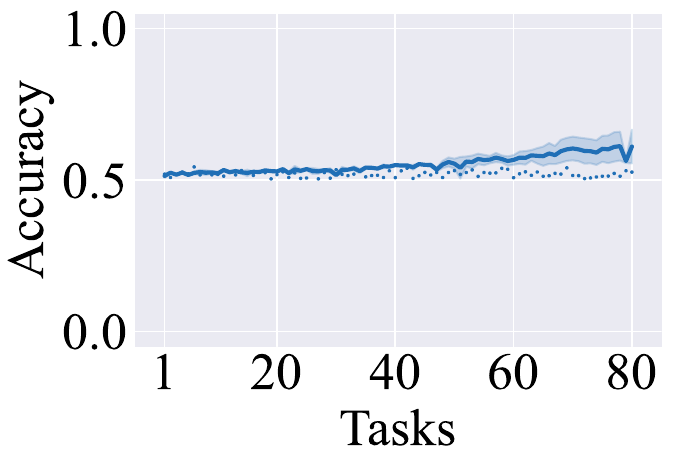}}
    \subfloat[NegGrad+]{\includegraphics[width=0.2\linewidth]{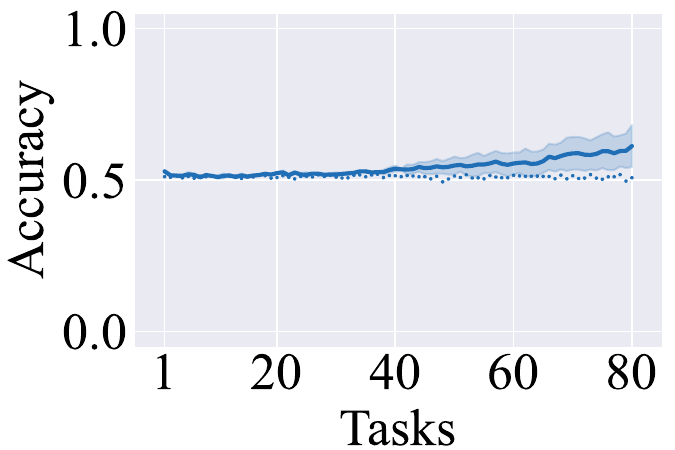}}
    \subfloat[RL]{\includegraphics[width=0.2\linewidth]{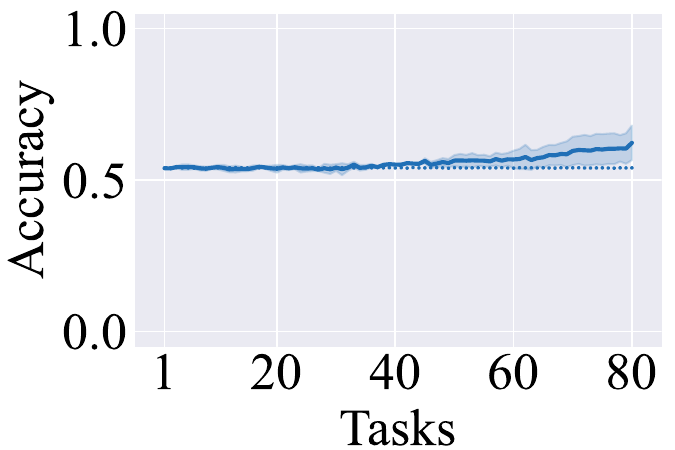}}
    \subfloat[SalUn]{\includegraphics[width=0.2\linewidth]{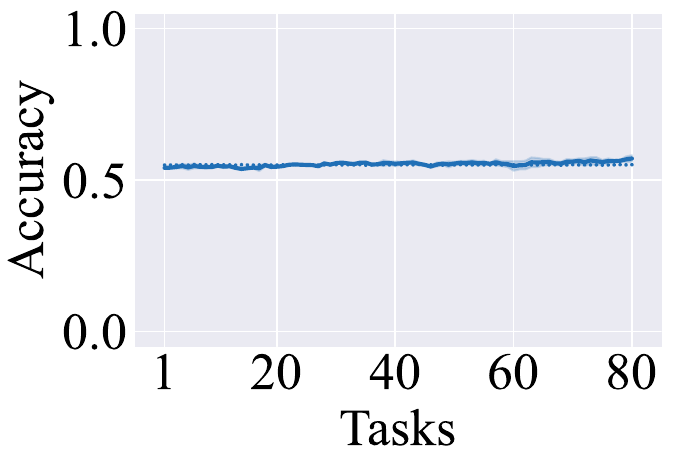}}
    \subfloat[MUNBa]{\includegraphics[width=0.2\linewidth]{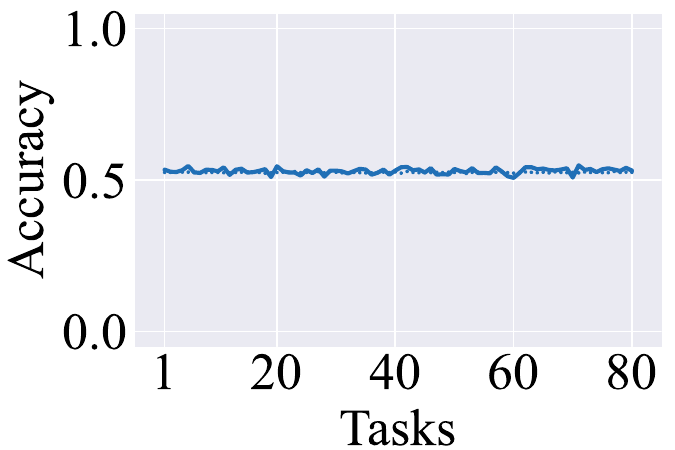}}

\vspace{-2mm}
\caption{Test accuracy on Tiny-ImageNet with VGG-16-BN under class-wise forgetting scenario.}
\label{fig:test_acc_vgg_class}
\end{center}
\vspace{-6mm}
\end{figure*}

We report the retain accuracy and test accuracy in Figs~\ref{fig:retain_acc_vgg_random}-\ref{fig:test_acc_vgg_class}. Across most evaluated cases, the models successfully maintain stable retain and test performance, thereby preserving overall model utility. This consistency aligns with the fundamental assumption of our theoretical framework.

Results for the CIFAR-100 dataset are presented in Figs~\ref{fig:forget_acc_resnet_random}-\ref{fig:test_acc_resnet_class}. In the random data forgetting scenario, FT and NegGrad+ exhibit significant degradation in model utility. Since our theorem assumes that utility remains relatively stable, these two methods fall outside the scope of our plasticity collapse analysis and are subsequently excluded from further theoretical validation. Among the remaining methods, SalUn and RL clearly demonstrate forward failure. Conversely, MUNBa fails to exhibit plasticity collapse in this setting because its single-shot baseline already shows negligible forgetting effectiveness.

In the class-wise forgetting scenario, FT and MUNBa similarly lack the prerequisite single-shot unlearning efficacy, thus failing to trigger observable plasticity collapse. NegGrad+ exhibits forward failure under this setting, while RL and SalUn demonstrate pronounced backward failure, as illustrated by the heatmaps in Figs~\ref{fig:heatmap_class_c}-\ref{fig:heatmap_class_d}.

\begin{figure*}[!t]
\begin{center}

    \subfloat[FT]{\includegraphics[width=0.2\linewidth]{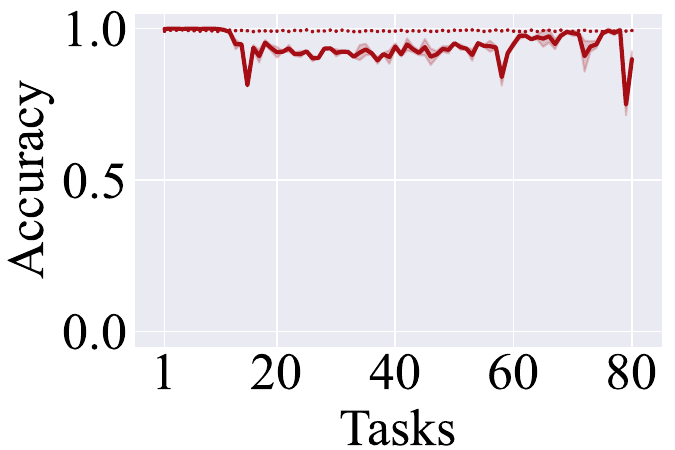}}
    \subfloat[NegGrad+]{\includegraphics[width=0.2\linewidth]{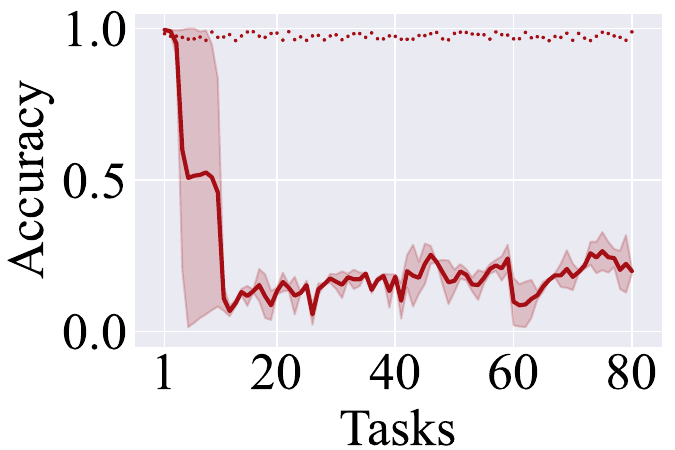}}
    \subfloat[RL]{\includegraphics[width=0.2\linewidth]{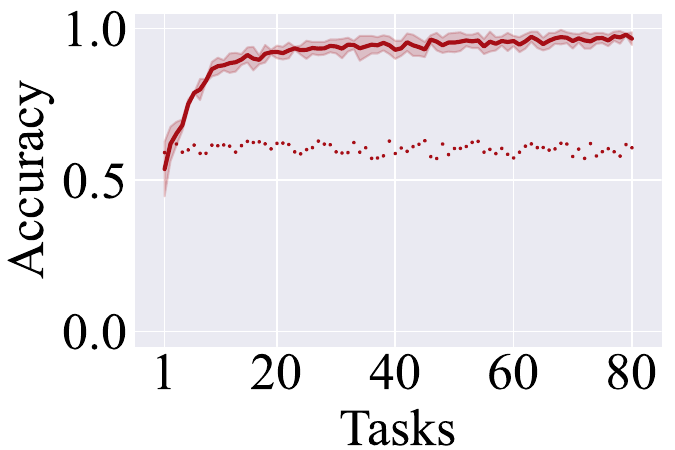}}
    \subfloat[SalUn]{\includegraphics[width=0.2\linewidth]{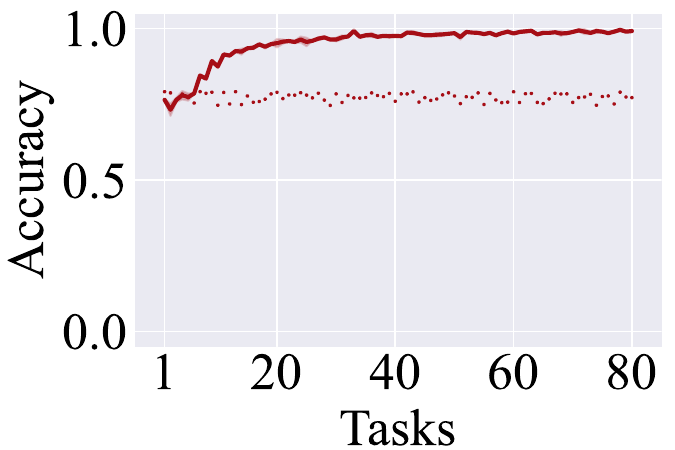}}
    \subfloat[MUNBa]{\includegraphics[width=0.2\linewidth]{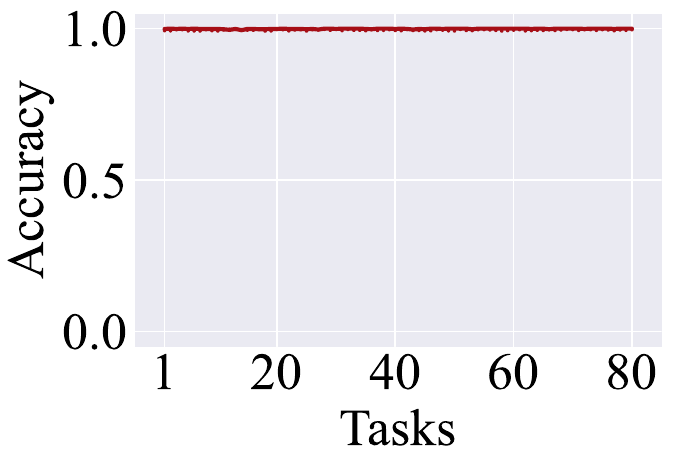}}

\vspace{-2mm}
\caption{Forgetting accuracy on CIFAR-100 with PreResNet-110 under the random data forgetting scenario.}
\label{fig:forget_acc_resnet_random}
\end{center}
\vspace{-4mm}
\end{figure*}

\begin{figure*}[!t]
\begin{center}

    \subfloat[FT]{\includegraphics[width=0.2\linewidth]{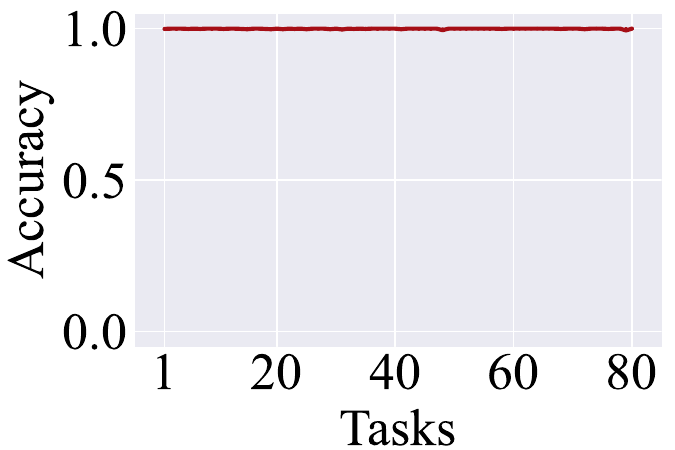}}
    \subfloat[NegGrad+]{\includegraphics[width=0.2\linewidth]{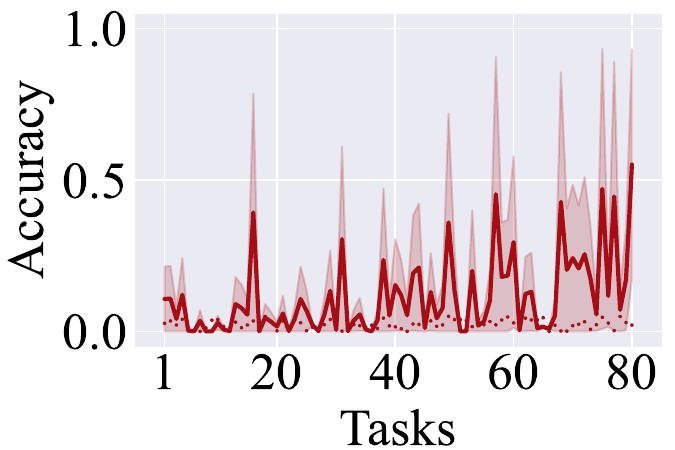}}
    \subfloat[RL]{\includegraphics[width=0.2\linewidth]{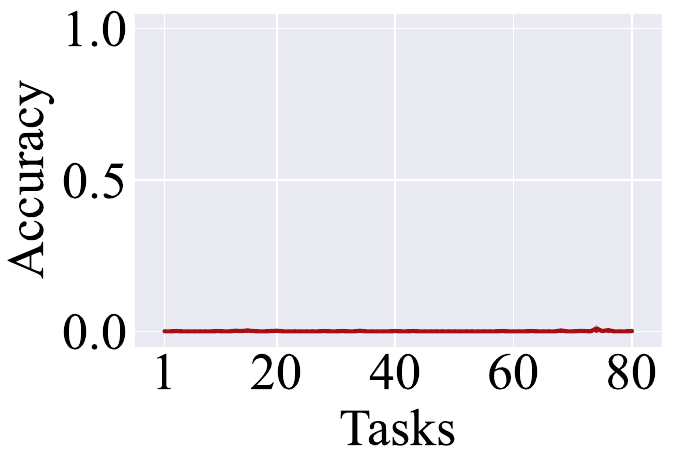}}
    \subfloat[SalUn]{\includegraphics[width=0.2\linewidth]{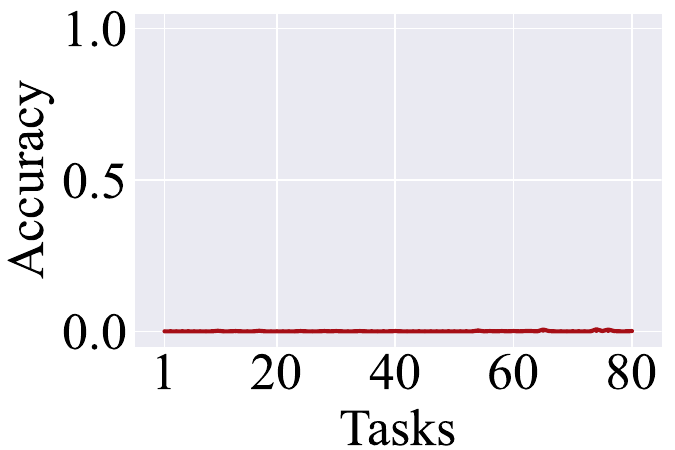}}
    \subfloat[MUNBa]{\includegraphics[width=0.2\linewidth]{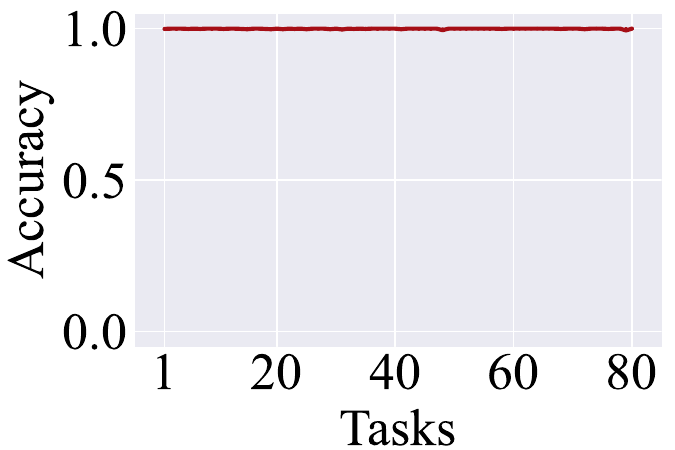}\label{fig:forget_acc_resnet_class_e}}

\vspace{-2mm}
\caption{Forgetting accuracy on CIFAR-100 with PreResNet-110 under the class-wise forgetting scenario.}
\label{fig:forget_acc_resnet_class}
\end{center}
\vspace{-6mm}
\end{figure*}
\begin{figure*}[!t]
\begin{center}

    \subfloat[FT]{\includegraphics[width=0.2\linewidth]{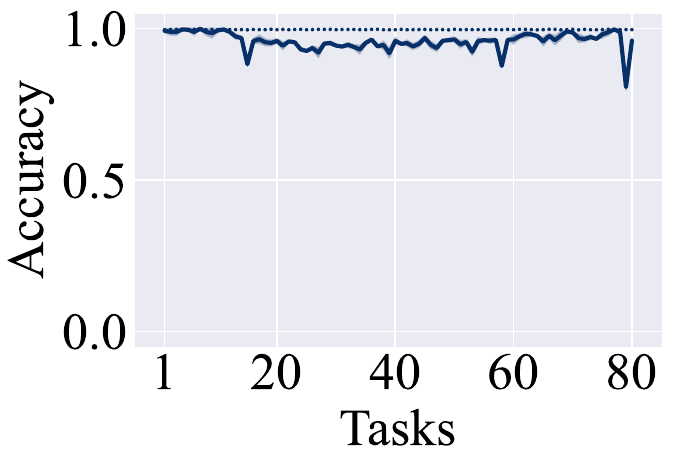}}
    \subfloat[NegGrad+]{\includegraphics[width=0.2\linewidth]{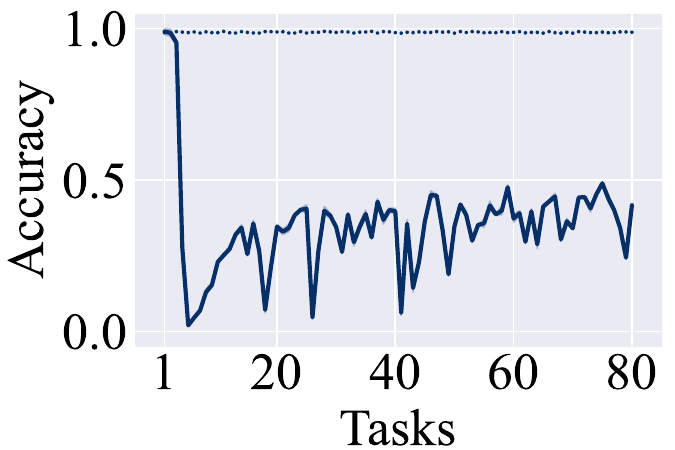}}
    \subfloat[RL]{\includegraphics[width=0.2\linewidth]{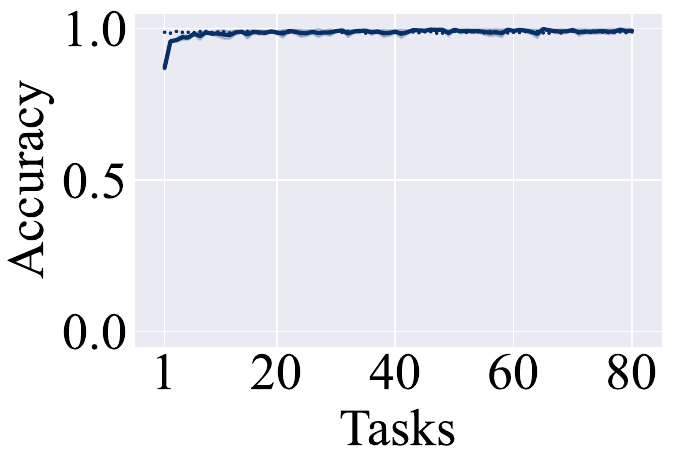}}
    \subfloat[SalUn]{\includegraphics[width=0.2\linewidth]{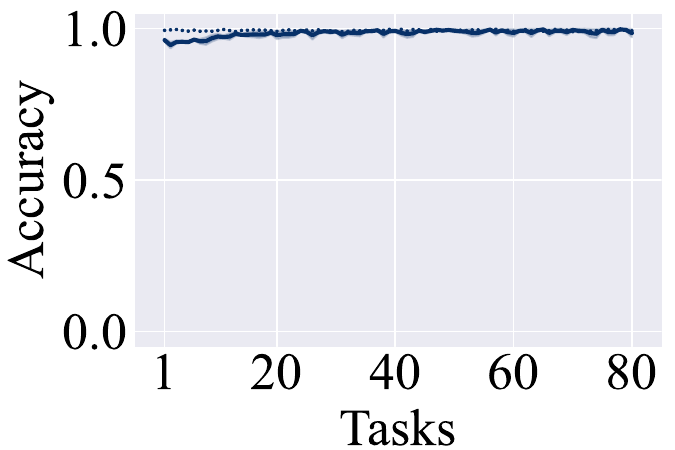}}
    \subfloat[MUNBa]{\includegraphics[width=0.2\linewidth]{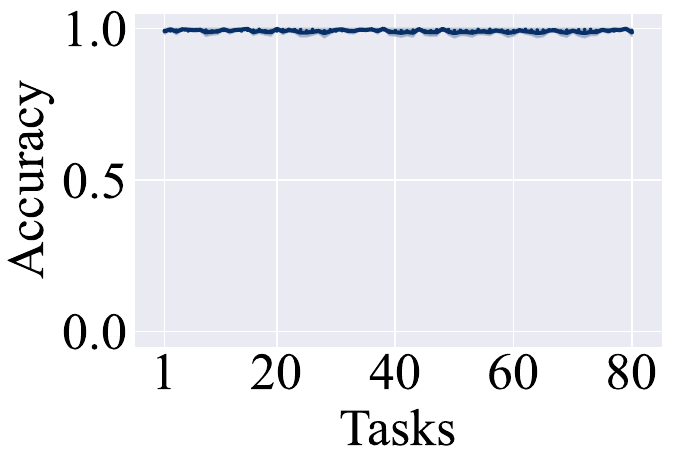}}

\vspace{-2mm}
\caption{Retain accuracy on CIFAR-100 with PreResNet-110 under the random data forgetting scenario.}
\label{fig:retain_acc_resnet_random}
\end{center}
\vspace{-6mm}
\end{figure*}

\begin{figure*}[!t]
\begin{center}

    \subfloat[FT]{\includegraphics[width=0.2\linewidth]{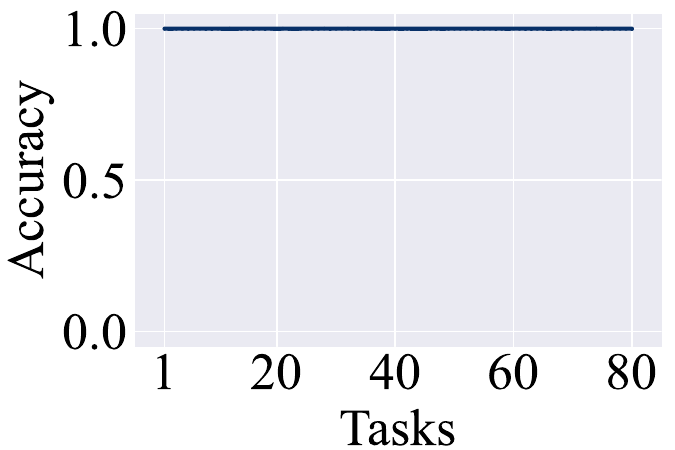}}
    \subfloat[NegGrad+]{\includegraphics[width=0.2\linewidth]{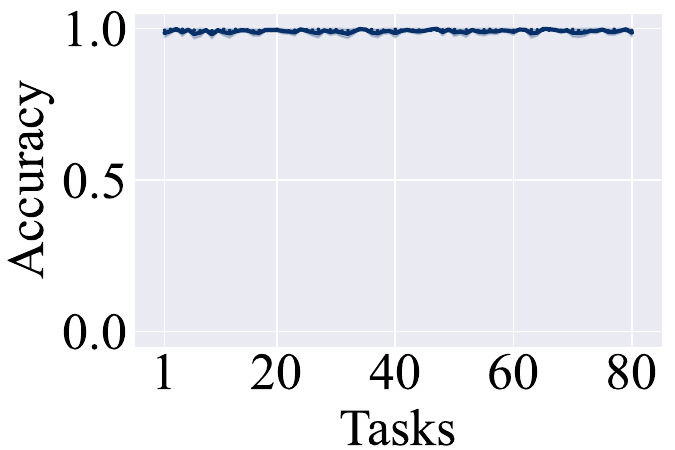}}
    \subfloat[RL]{\includegraphics[width=0.2\linewidth]{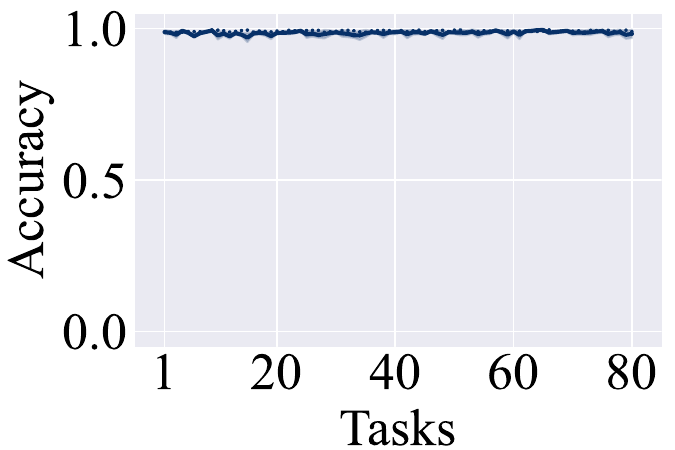}}
    \subfloat[SalUn]{\includegraphics[width=0.2\linewidth]{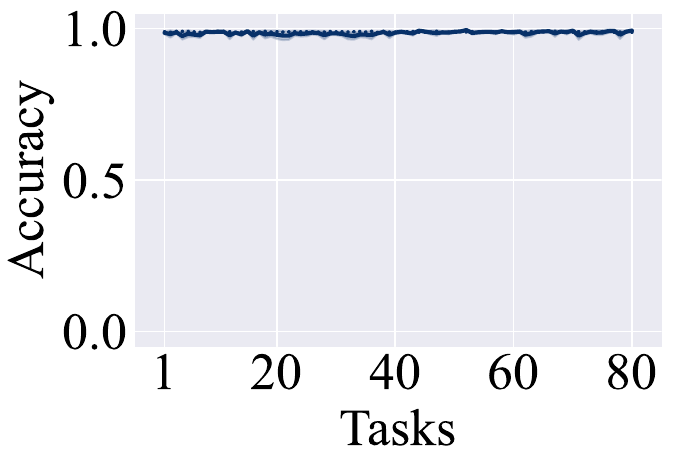}}
    \subfloat[MUNBa]{\includegraphics[width=0.2\linewidth]{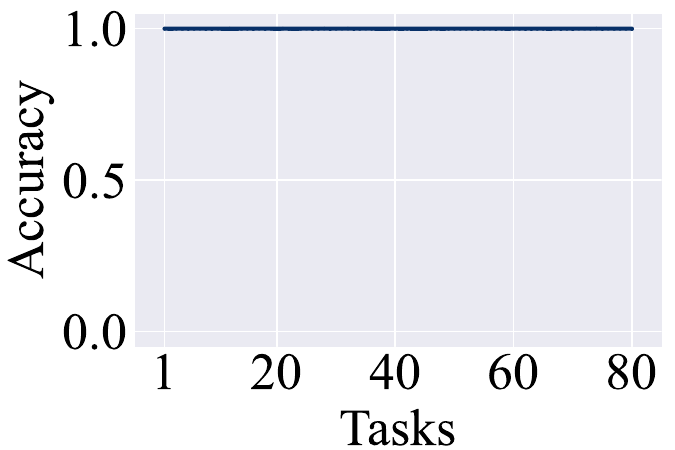}}

\vspace{-2mm}
\caption{Retain accuracy on CIFAR-100 with PreResNet-110 under the class-wise forgetting scenario.}
\label{fig:retain_acc_resnet_class}
\end{center}
\vspace{-6mm}
\end{figure*}
\begin{figure*}[!t]
\begin{center}

    \subfloat[FT]{\includegraphics[width=0.2\linewidth]{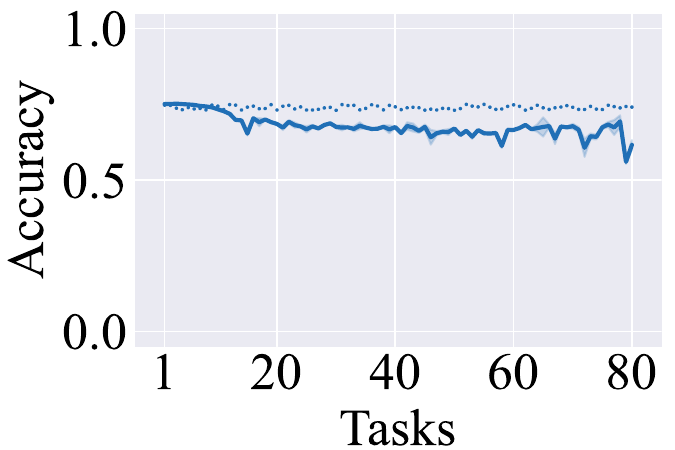}}
    \subfloat[NegGrad+]{\includegraphics[width=0.2\linewidth]{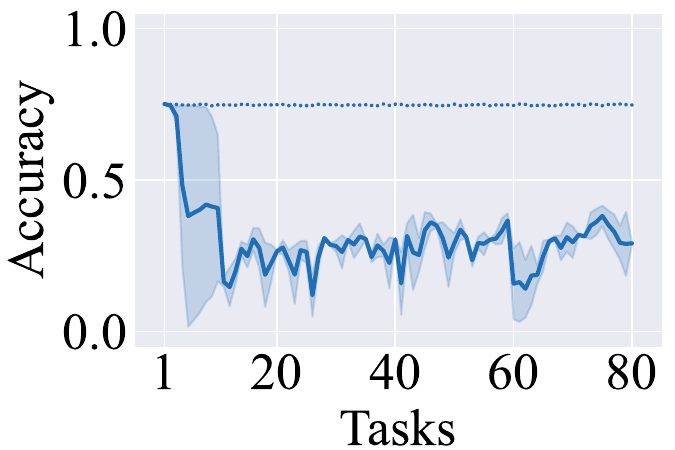}}
    \subfloat[RL]{\includegraphics[width=0.2\linewidth]{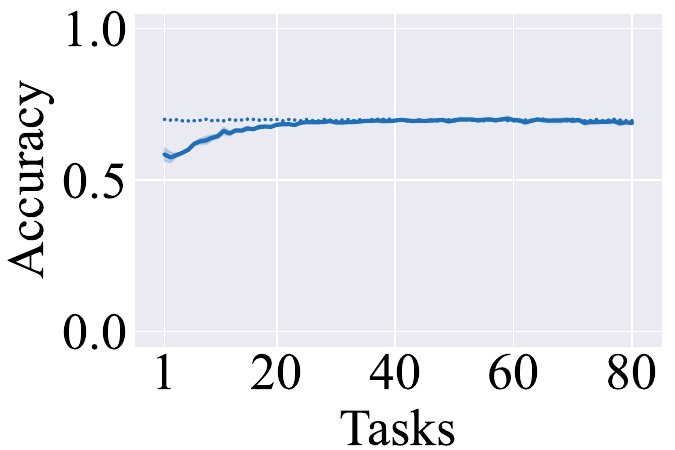}}
    \subfloat[SalUn]{\includegraphics[width=0.2\linewidth]{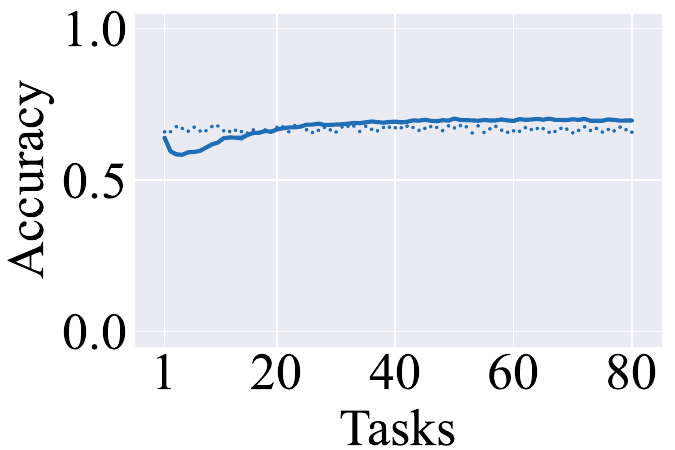}}
    \subfloat[MUNBa]{\includegraphics[width=0.2\linewidth]{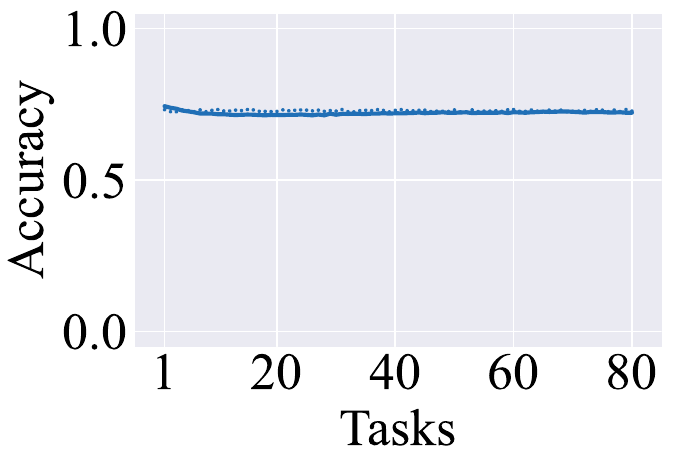}}

\vspace{-2mm}
\caption{Test accuracy on CIFAR-100 with PreResNet-110 under the random data forgetting scenario.}
\label{fig:test_acc_resnet_random}
\end{center}
\vspace{-6mm}
\end{figure*}

\begin{figure*}[!t]
\begin{center}

    \subfloat[FT]{\includegraphics[width=0.2\linewidth]{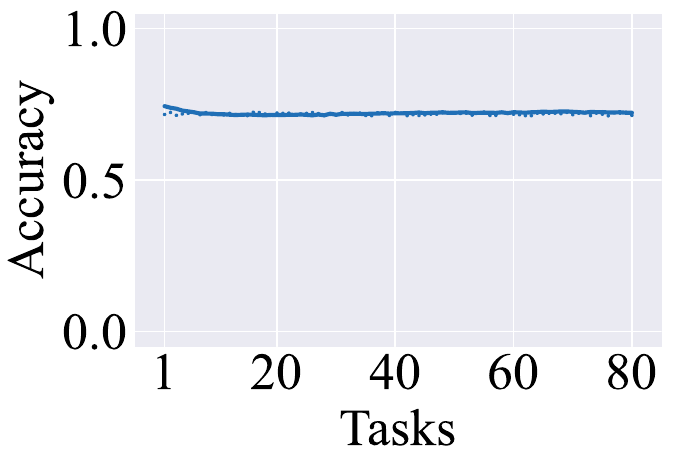}}
    \subfloat[NegGrad+]{\includegraphics[width=0.2\linewidth]{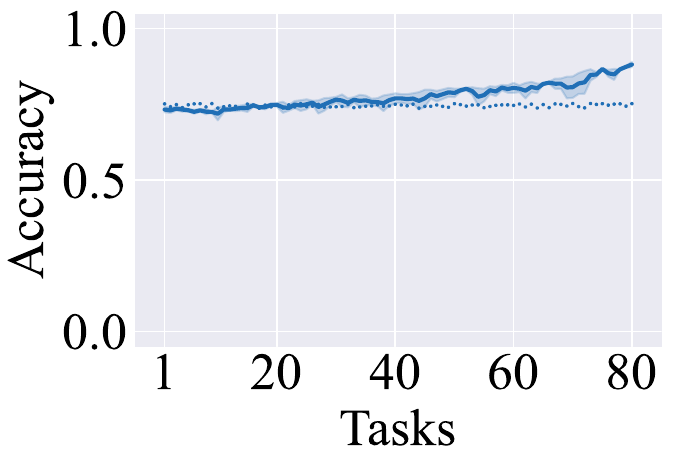}}
    \subfloat[RL]{\includegraphics[width=0.2\linewidth]{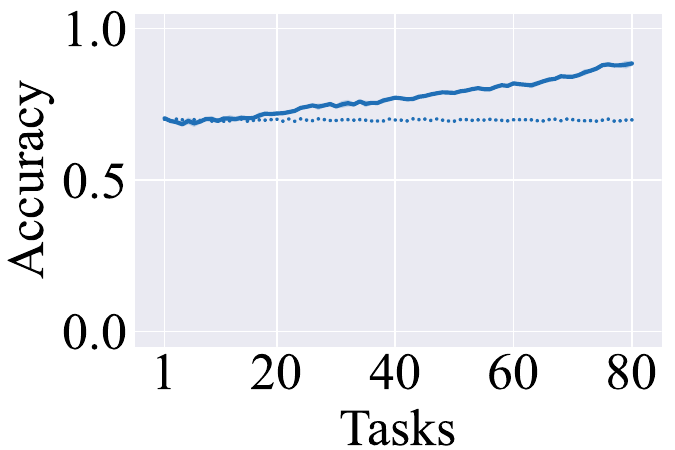}}
    \subfloat[SalUn]{\includegraphics[width=0.2\linewidth]{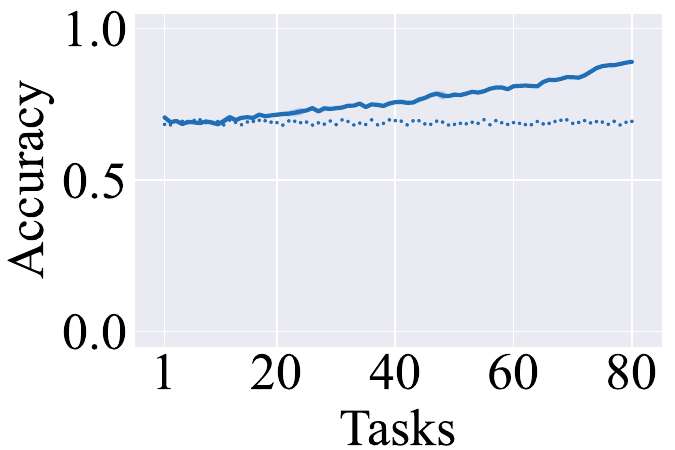}}
    \subfloat[MUNBa]{\includegraphics[width=0.2\linewidth]{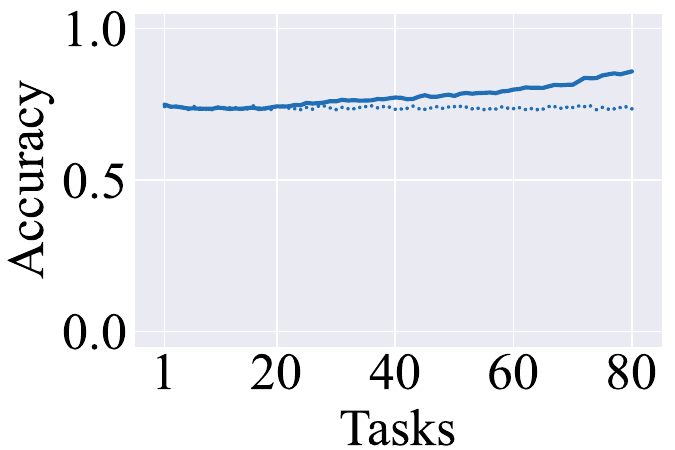}}

\vspace{-2mm}
\caption{Test accuracy on CIFAR-100 with PreResNet-110 under class-wise forgetting scenario.}
\label{fig:test_acc_resnet_class}
\end{center}
\vspace{-6mm}
\end{figure*}

\begin{figure*}
\begin{center}

    \subfloat[FT]{\includegraphics[width=0.2\linewidth]{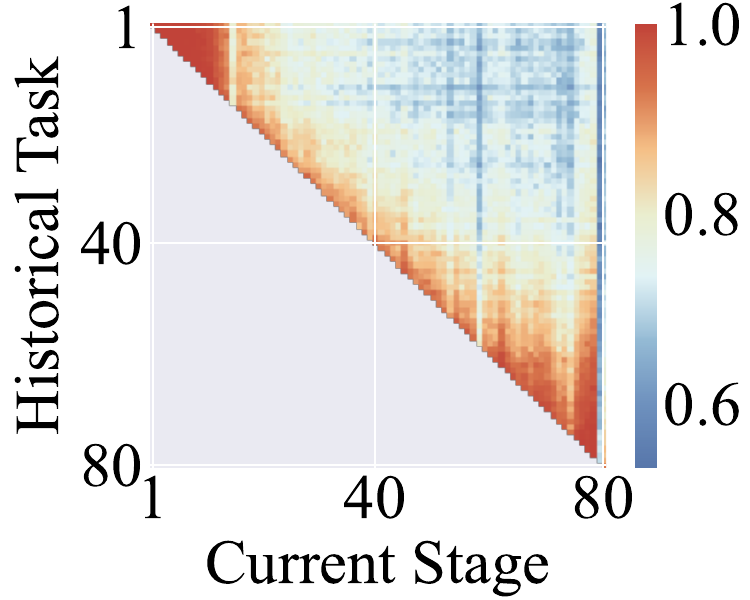}
    \label{fig:heatmap_random_resnet_a}}
    \subfloat[NegGrad+]{\includegraphics[width=0.2\linewidth]{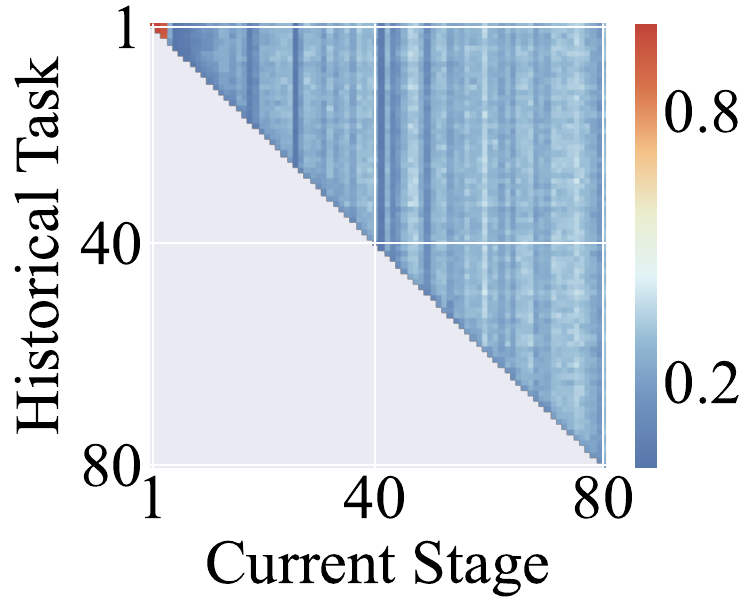}}
    \subfloat[RL]{\includegraphics[width=0.2\linewidth]{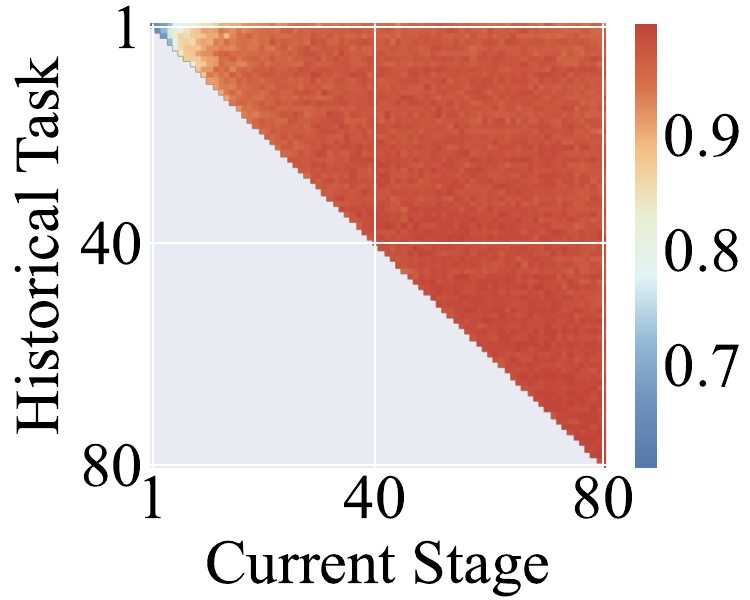}}
    \subfloat[SalUn]{\includegraphics[width=0.2\linewidth]{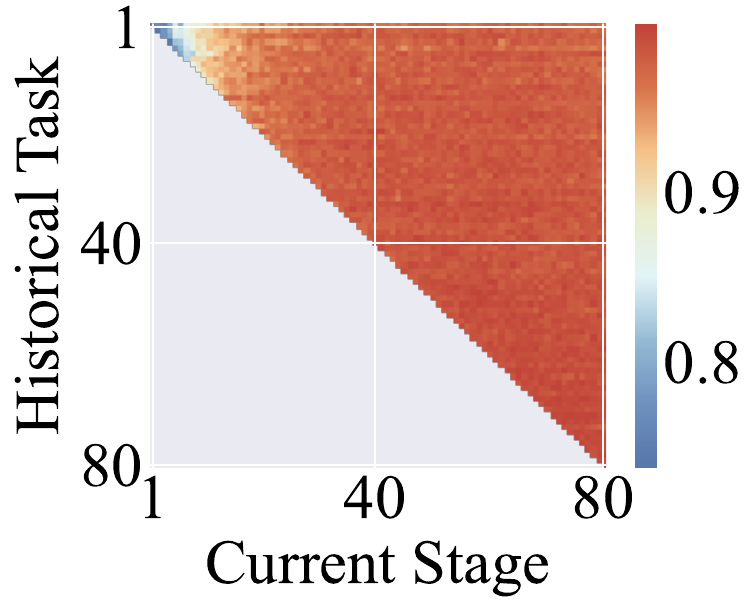}}
    \subfloat[MUNBa]{\includegraphics[width=0.2\linewidth]{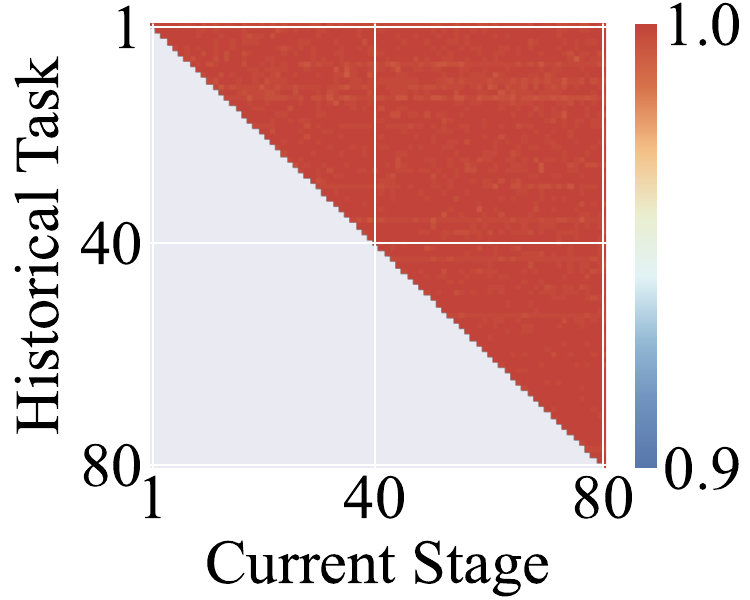}
    \label{fig:heatmap_random_resnet_e}}

\vspace{-2mm}
\caption{\footnotesize{Forgetting accuracy heatmaps on CIFAR-100 with PreResNet-110 for historical tasks on the random data forgetting scenario.}}
\label{fig:heatmap_random_resnet}
\end{center}
\vspace{-4mm}
\end{figure*}
\begin{figure*}[!t]
\begin{center}

    \subfloat[FT]{\includegraphics[width=0.2\linewidth]{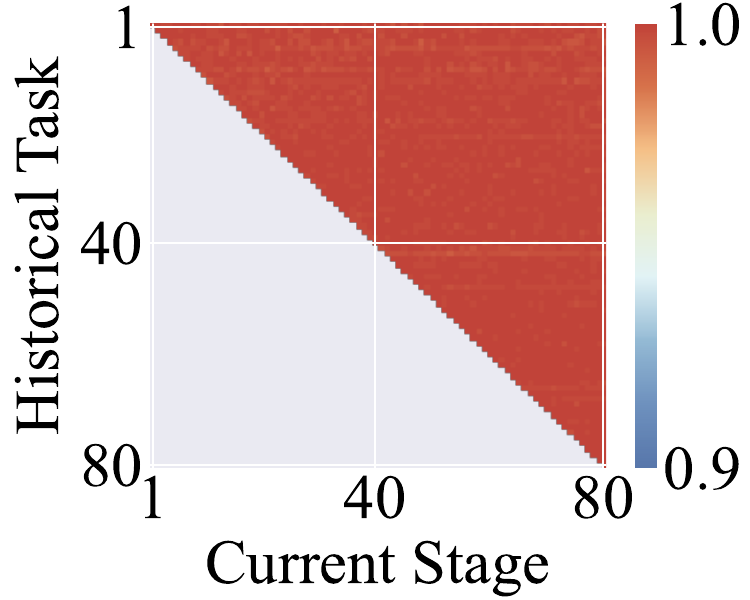}\label{fig:heatmap_class_resnet_a}}
    \subfloat[NegGrad+]{\includegraphics[width=0.2\linewidth]{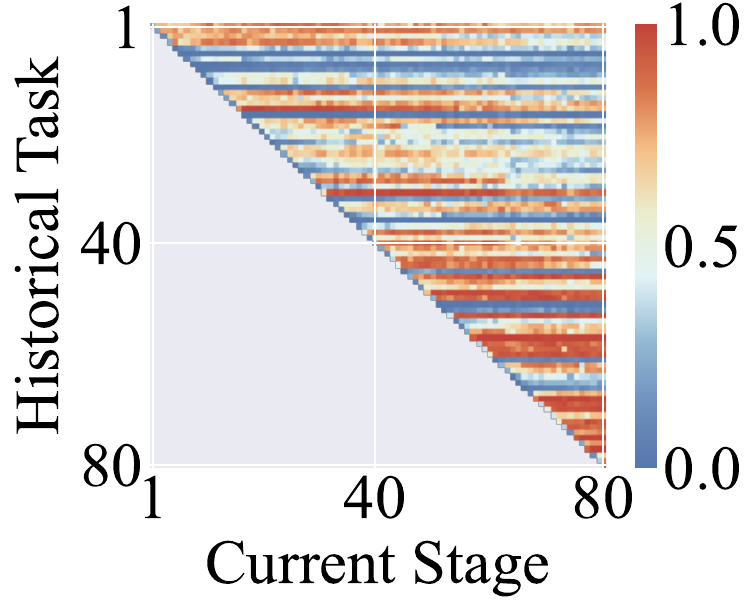}\label{fig:heatmap_class_resnet_b}}
    \subfloat[RL]{\includegraphics[width=0.2\linewidth]{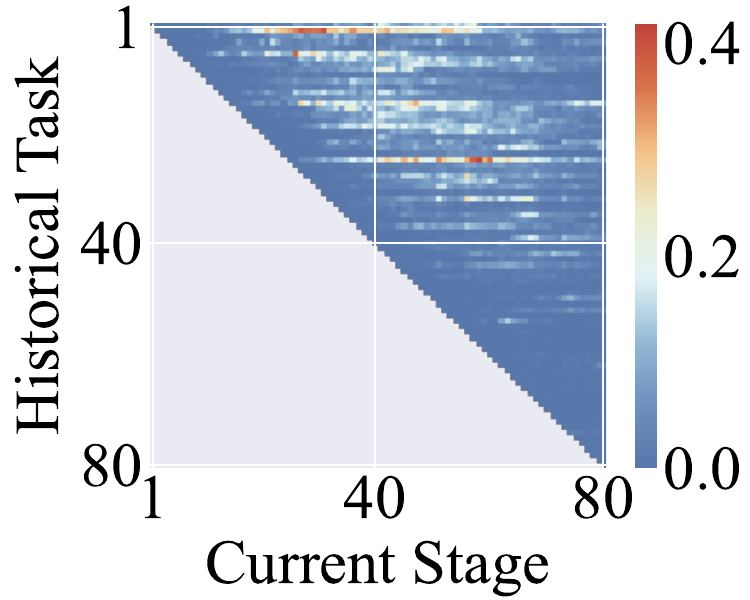}\label{fig:heatmap_class_resnet_c}}
    \subfloat[SalUn]{\includegraphics[width=0.2\linewidth]{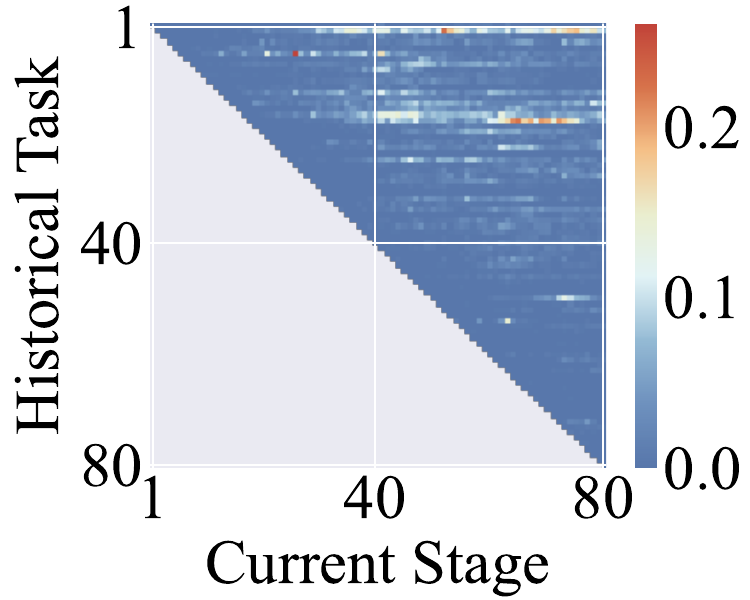}\label{fig:heatmap_class_resnet_d}}
    \subfloat[MUNBa]{\includegraphics[width=0.2\linewidth]{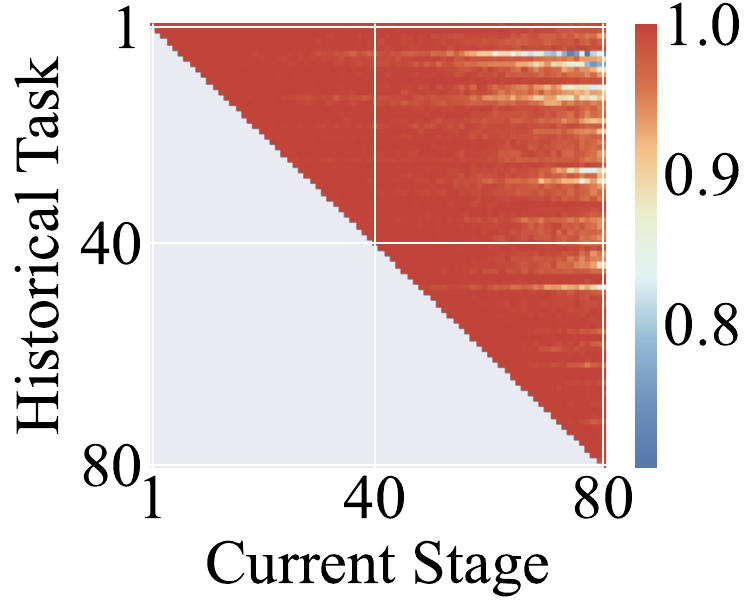}\label{fig:heatmap_class_resnet_e}}

\vspace{-2mm}
\caption{\footnotesize{Forgetting accuracy heatmaps on CIFAR-100 with PreResNet-110 for historical tasks on the class-wise forgetting scenario.}}
\label{fig:heatmap_class_resnet}
\end{center}
\end{figure*}

\begin{figure*}[t]
\begin{center}

    \subfloat[RL]{\includegraphics[width=0.2\linewidth]{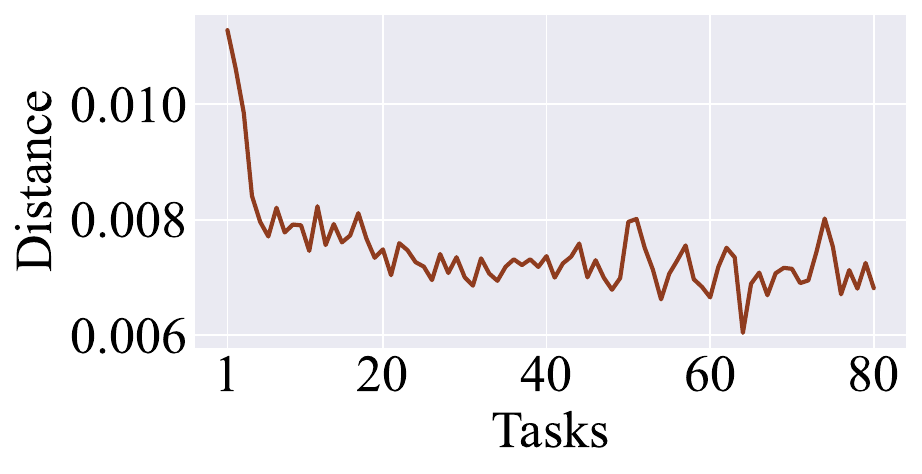}}
    \subfloat[SalUn]{\includegraphics[width=0.2\linewidth]{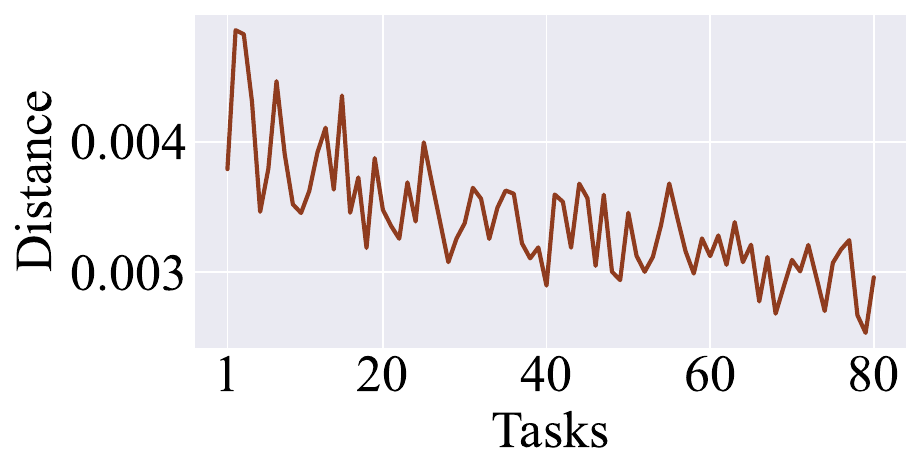}}
    \subfloat[MUNBa]{\includegraphics[width=0.2\linewidth]{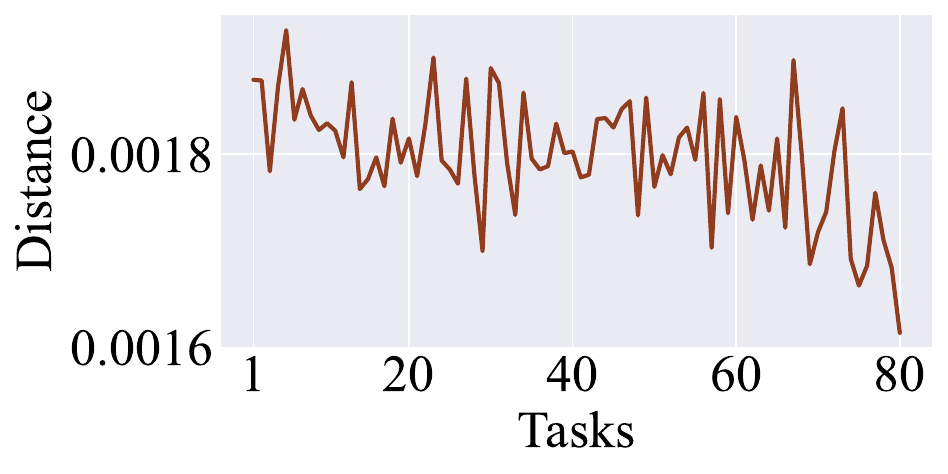}}

\vspace{-2mm}
\caption{\footnotesize{L2 norm of parameter differences between consecutive updates on CIFAR-100 with PreResNet-110 under the random data forgetting scenario.}}
\label{fig:l2_resnet_random}
\end{center}
\vspace{-6mm}
\end{figure*}

\begin{figure*}[t]
\begin{center}

    \subfloat[NegGrad+]{\includegraphics[width=0.2\linewidth]{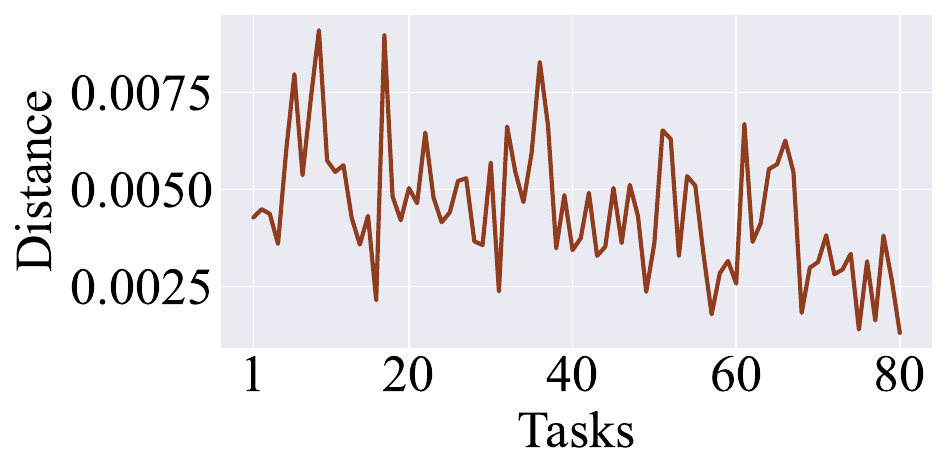}}
    \subfloat[RL]{\includegraphics[width=0.2\linewidth]{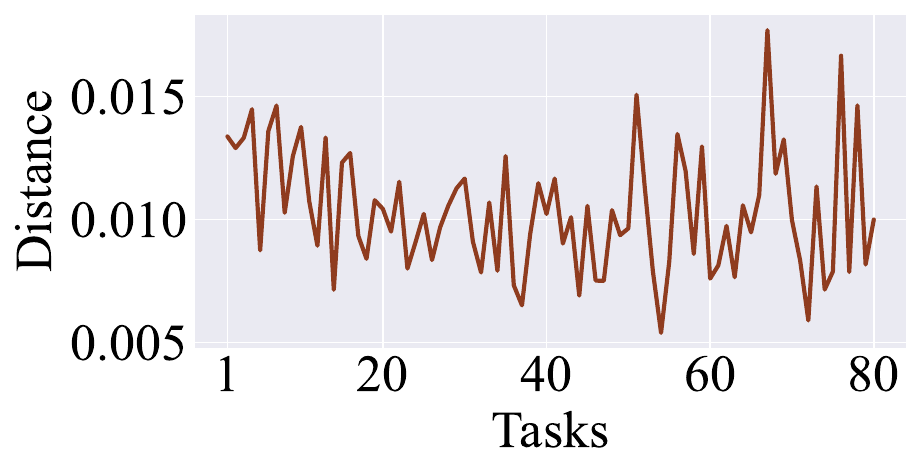}\label{fig:l2_resnet_class_c}}
    \subfloat[SalUn]{\includegraphics[width=0.2\linewidth]{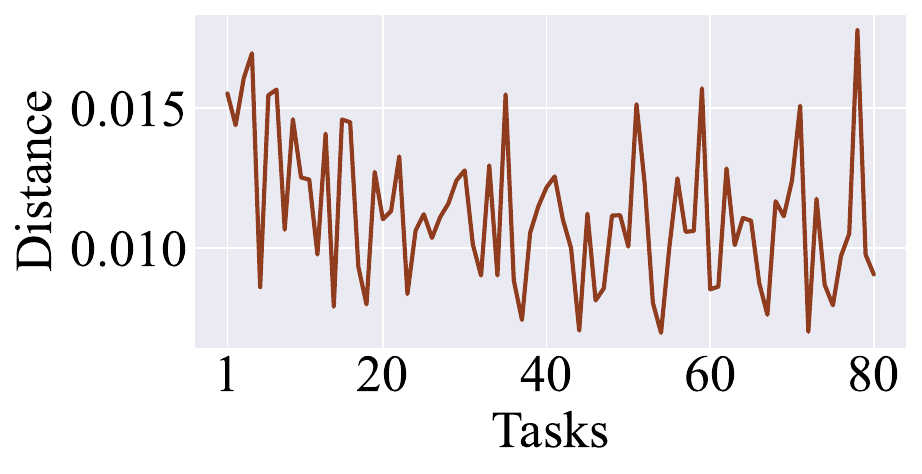}\label{fig:l2_resnet_class_d}}

\vspace{-2mm}
\caption{\footnotesize{L2 norm of parameter differences between consecutive updates on CIFAR-100 with PreResNet-110 under the class-wise forgetting scenario.}}
\label{fig:l2_resnet_class}
\end{center}
\vspace{-6mm}
\end{figure*}
\subsection{Diminishing Update Magnitude}
In this section, we analyze the update magnitude for the experimental cases that satisfy the preconditions of our theoretical framework. Theorems~\ref{thm:operator_growth} and~\ref{thm:expanding_mode_general} predict that as $W$ approaches saturation, the suppression mechanism increasingly drives parameter updates away from $W$. This phenomenon is expected to manifest as a systematic reduction in the update magnitude, defined as $\mathbf{u}_t := \bm{\theta}_{t} - \bm{\theta}_{t-1}$. Our empirical results on the cases with forward failure consistently exhibit this diminishing update magnitude as shown in Figs.~\ref{fig:l2_resnet_random}-\ref{fig:l2_resnet_class}. These observations provide direct evidence that the model becomes increasingly "frozen" in the parameter space, progressively losing its capacity to generate the meaningful updates required to fulfill successive forget requests.

\begin{figure*}[!t]
\begin{center}
    \subfloat[RL]{\includegraphics[width=0.2\linewidth]{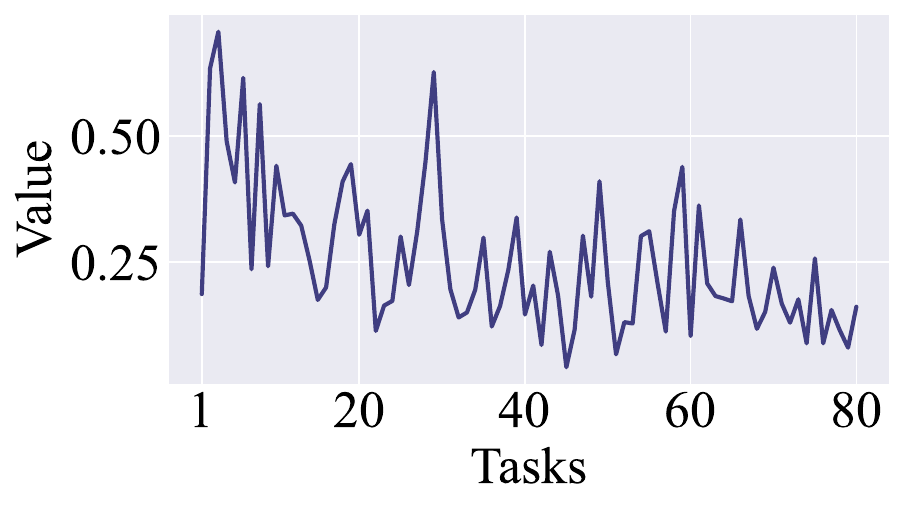}\label{fig:resnet_random_energy_a}}
    \subfloat[SalUn]{\includegraphics[width=0.2\linewidth]{figures/subspace/resnet/random_1_salun_1_ER.pdf}}
    \subfloat[MUNBa]{\includegraphics[width=0.2\linewidth]{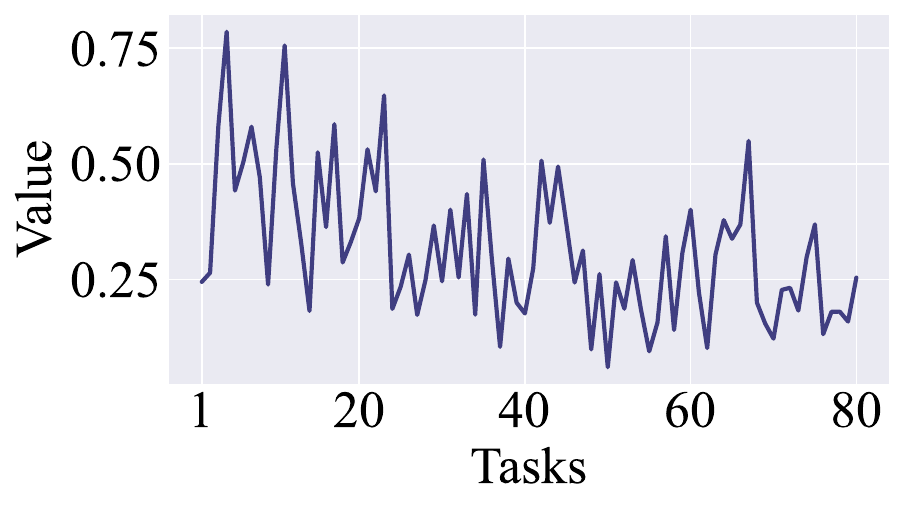}}

\vspace{-2mm}
\caption{Energy Ratio for forward failure diagnosis on CIFAR-100 with PreResNet-110 under the random data forgetting scenario.}
\label{fig:resnet_random_energy}
\end{center}
\vspace{-6mm}
\end{figure*}
\begin{figure*}[!t]
\begin{center}
    \subfloat[NegGrad+]{\includegraphics[width=0.2\linewidth]{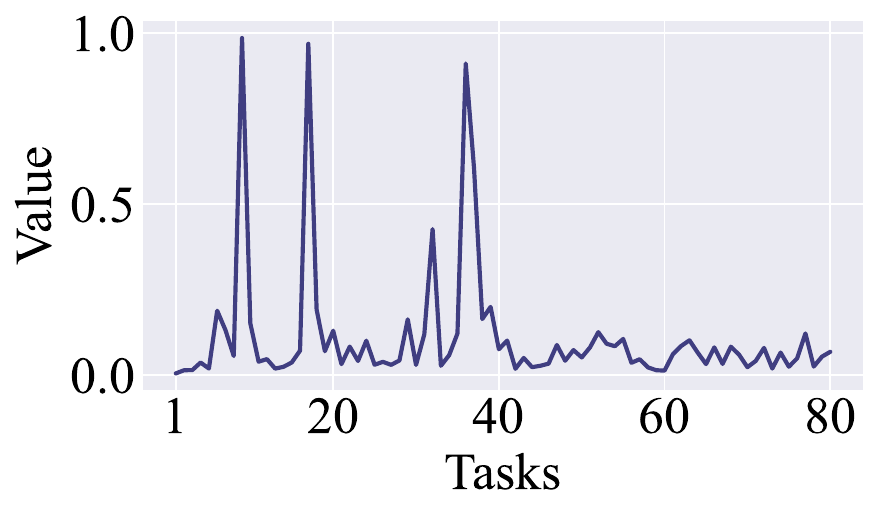}\label{fig:resnet_class_energy_a}}
    \subfloat[RL]{\includegraphics[width=0.2\linewidth]{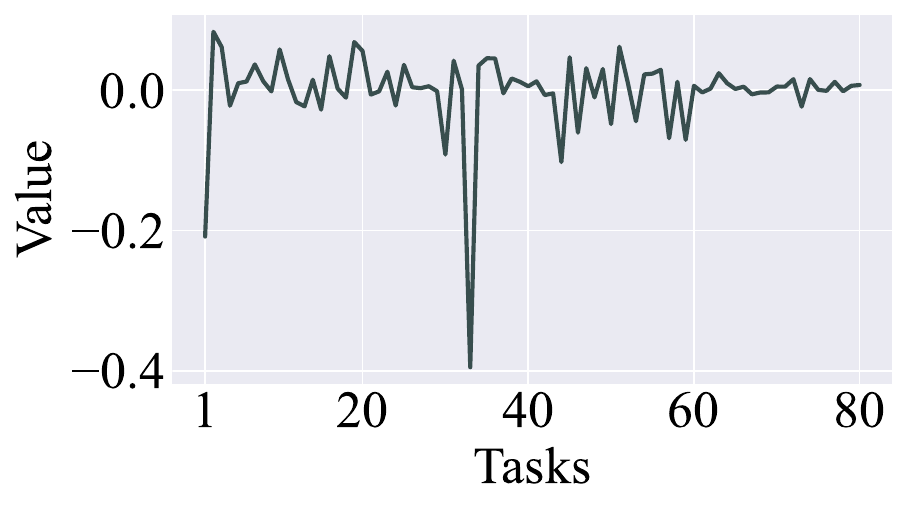}\label{fig:resnet_class_energy_b}}
    \subfloat[SalUn]{\includegraphics[width=0.2\linewidth]{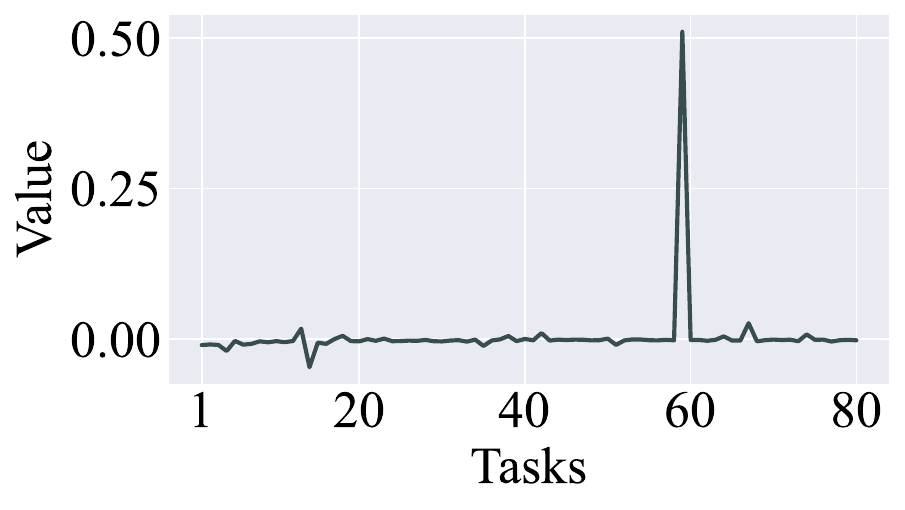}\label{fig:resnet_class_energy_c}}

\vspace{-2mm}
\caption{Energy Ratio or Coefficient for forward failure diagnosis on CIFAR-100 with PreResNet-110 under the class-wise forgetting scenario.}
\label{fig:resnet_class_energy}
\end{center}
\vspace{-8mm}
\end{figure*}
\subsection{Diagnostic Quantities for Plasticity Collapse (ER and CO)}

We now apply the diagnostic metrics $\mathrm{ER}_t$ and $\mathrm{CO}_t$ introduced in Section~\ref{subsec:metrics} to empirically validate our theoretical predictions. The shared subspace $W$ is estimated from the sequence of updates $\{\mathbf{u}_1, \dots, \mathbf{u}_T\}$ by performing PCA on the stacked update matrix $\mathbf{U}$. For visualization clarity, we set the subspace rank to $r = 1$, using the leading right singular vector; we note that the observed patterns remain consistent for $r > 1$. Figs~\ref{fig:resnet_random_energy_a}-\ref{fig:resnet_class_energy_a} illustrate the evolution of $\mathrm{ER}_t$ across sequential stages. In scenarios exhibiting forward failure, $\mathrm{ER}_t$ decays progressively toward zero across all evaluated methods. This trend confirms the predicted suppression mechanism: as the accumulation of $\mathbf{A}_{t:s}$ induces exponential growth along $W$, the algorithm is forced to route updates away from $W$ to maintain model stability. Notably, the decay in $\mathrm{ER}_t$ closely tracks the degradation of forgetting accuracy, providing quantitative evidence that the geometric saturation of $W$ is the proximate cause of forward failure. Finally, Figs~\ref{fig:resnet_class_energy_b}-\ref{fig:resnet_class_energy_c} present the $\mathrm{CO}_t$ results for cases characterized by backward failure.

\end{document}